\theoremstyle{plain}
\newtheorem{theorem}{Theorem}[section]
\newtheorem{proposition}[theorem]{Proposition}
\newtheorem{lemma}[theorem]{Lemma}
\theoremstyle{definition}
\newtheorem{definition}[theorem]{Definition}
\theoremstyle{remark}
\newtheorem{example}[theorem]{Example}
\newcommand{\ttpois}{TT-POIS\@\xspace}
\newcommand{\pois}{POIS\@\xspace}
\newcommand{\E}{\mathbb{E}}
\newcommand{\Var}{\mathbb{V}\mathrm{ar}}
\newcommand{\argmax}{\operatornamewithlimits{argmax}}
\icmltitlerunning{Truncating Trajectories in Monte Carlo Reinforcement Learning}
\begin{document}

\twocolumn[
\icmltitle{Truncating Trajectories in Monte Carlo Reinforcement Learning}





\begin{icmlauthorlist}
\icmlauthor{Riccardo Poiani}{polimi}
\icmlauthor{Alberto Maria Metelli}{polimi}
\icmlauthor{Marcello Restelli}{polimi}
\end{icmlauthorlist}

\icmlaffiliation{polimi}{Diparimento di Elettronica, Informazione e Bioingegneria, Politecnico di Milano, Milan, Italy}

\icmlcorrespondingauthor{Riccardo Poiani}{riccardo.poiani@polimi.it}

\icmlkeywords{Reinforcement Learning, Monte Carlo Simulation, Budget Optimization, Policy Optimization}

\vskip 0.3in
]



\printAffiliationsAndNotice{}  

\begin{abstract}
In Reinforcement Learning (RL), an agent acts in an unknown environment to maximize the expected cumulative discounted sum of an external reward signal, i.e., the expected return. In practice, in many tasks of interest, such as policy optimization, the agent usually spends its interaction budget by collecting episodes of \emph{fixed length} within a simulator (i.e., Monte Carlo simulation). However, given the discounted nature of the RL objective, this data collection strategy might not be the best option. Indeed, the rewards taken in early simulation steps weigh exponentially more than future rewards. Taking a cue from this intuition, in this paper, we design an a-priori budget allocation strategy that leads to the collection of trajectories of different lengths, i.e., \emph{truncated}. The proposed approach provably minimizes the width of the confidence intervals around the empirical estimates of the expected return of a policy. After discussing the theoretical properties of our method, we make use of our trajectory truncation mechanism to extend Policy Optimization via Importance Sampling~\citep[POIS,][]{metelli2018policy} algorithm. Finally, we conduct a numerical comparison between our algorithm and POIS: the results are consistent with our theory and show that an appropriate truncation of the trajectories can succeed in improving performance.
\end{abstract}

\section{Introduction}
\label{sec:intro}

In Reinforcement Learning \citep[RL,][]{sutton2018reinforcement}, an agent acts in an unknown, or partially known, environment to maximize the expected cumulative discounted sum of an external reward signal, referred to as expected return. This abstract scenario models a large variety of sequential decision-making problems \citep[e.g.,][]{mnih2016asynchronous,casas2017deep,schulman2017proximal,shi2020cooperative}, and, consequently, is constantly gaining attention from the community. A particular appealing feature is that RL is fully data-driven. Indeed, the designer of the learning system only needs to let the agent interacts with the environment to gather experience on the task of interest, and no additional expert knowledge on the problem is required. 

However, given the well-known data inefficiency of RL algorithms, in real-world scenarios, a simulator is commonly adopted, and the agent interacts with it, usually in parallel over a cluster of machines to gather knowledge (e.g., performance and gradient estimates) on the task being solved \citep[e.g.,][]{espeholt2018impala,liang2018rllib}. Furthermore, since the goal consists on estimating/maximizing an infinite sum of rewards, in practice, the designer usually chooses a sufficiently large horizon $T$, so that the agent will gather information up to time $T$ via Monte Carlo simulation \citep{owen2013monte}, after which the state of the system is reset to a (possibly stochastic) initial state. Although there exists alternatives, such as Temporal Difference~\citep[TD,][]{sutton2018reinforcement} methods, that do not require a finite horizon nor a reset possibility, a large variety of successful RL approaches still rely on Monte Carlo evaluation. Indeed, differently from TD methods, Monte Carlo approaches can be transparently applied to non-Markovian environments, as often happens in real-world domains.
This is, indeed, the usual case of policy search methods \citep[e.g.,][]{williams1992simple,baxter2001infinite,lillicrap2015continuous,schulman2015trust,schulman2017proximal,metelli2018policy,cobbe2021phasic}. 
While these algorithms can differ across a large number of dimensions (see, for instance, \citet{metelli2018policy} for an in-depth taxonomy), most of them share a common aspect: the evaluation and optimization of the objective function are performed by collecting, via Monte Carlo simulation, a batch of $K$ episodes of length $T$ each. In this sense, they allocate the budget of $\Lambda = KT$ transitions \emph{uniformly} w.r.t. the horizon.

However, given the discounted nature of the RL objective, coupled with the fact that, in practice, we have to estimate the expected return with sample means, \emph{is this uniform-in-the-horizon budget allocation strategy the best option?} Indeed, the discounted objective weighs each reward collected at step $t$ with the factor $\gamma^t$, and, consequently, the early interaction steps weigh exponentially more than the late ones. Building on this observation, in this work, we aim at answering the previous question by investigating alternative and non-uniform budget allocation strategies. More specifically, we tackle the problem from a worst-case scenario, which is agnostic to the underlying MDP and policy to evaluate/optimize, and we investigate whether it is possible to design an alternative schedule of trajectories' length that comes with desirable robustness properties w.r.t. to the usual uniform-in-the-horizon scheme. In other words, we aim at understanding whether the possibility of resetting trajectories, which is usually available in a large variety of RL simulators, and indispensable for Monte Carlo simulation, can successfully be exploited to increase some quality index related to the estimation accuracy.

\paragraph{Contributions and Outline} After introducing the background (Section \ref{sec:prelim}), we consider the problem of estimating the expected return of a policy via trajectory-based Monte Carlo simulation with a finite budget $\Lambda$ of transitions (Section \ref{sec:eval}). For presentation purposes, we first focus on the on-policy setting, and propose a novel estimator for the expected return, which uses trajectories of \emph{different lengths}, i.e., \emph{truncated}. Then, to investigate alternative budget allocation strategies from our worst-case perspective, we provide a generalization of the H\"{o}effding confidence intervals  \citep{boucheron2003concentration} to our estimator, and we frame our goal as finding the trajectories' length schedule that minimizes such intervals. In this sense, we design an approximately optimal fixed strategy that provably minimizes the width of these confidence intervals around the empirical mean of the RL objective. As our theory verifies, our schedule leads to collecting trajectories of different lengths. We then analyze our solution from a Probabilistic Approximately Correct \citep[PAC,][]{even2002pac} perspective and discuss its benefits, in terms of the resulting PAC bound, w.r.t. the usual uniform-in-the-horizon approach. We conclude Section \ref{sec:eval} by extending our approach to the more challenging off-policy evaluation setting. To this end, we minimize a generalization of the off-policy confidence intervals presented in \citet{metelli2018policy}. 
Then, while in principle one could employ the derived length schedule in any estimation/optimization algorithm that uses the trajectory-based Monte Carlo simulation as a building block, we leverage it to extend the Policy Optimization via Importance Sampling (\pois) algorithm \cite{metelli2018policy}. This choice is justified by the fact that the confidence intervals optimized in Section~\ref{sec:eval} are explicitly employed in POIS to quantify the uncertainty injected in the estimation and to build a surrogate objective function of the expected return, which is then optimized via gradient methods. For this reason, in Section \ref{sec:optim}, we make use of the truncated off-policy estimator, together with our optimized confidence intervals, to extend \pois by incorporating our trajectory truncation mechanism, presenting Truncating Trajectories in
Policy Optimization via Importance Sampling (TT-POIS). Finally, in Section \ref{sec:exp}, we empirically compare our algorithm and POIS across multiple control domains, varying the discount factor and the available budget. Our results are consistent with our theory and show that an appropriate truncation of the trajectories succeeds in improving performance.




\section{Preliminaries}\label{sec:prelim}
In this section, we provide the necessary backgrounds and notations that will be used throughout the rest of the article.

\paragraph{Markov Decision Process} A discrete-time Markov Decision Process \citep[MDP,][]{puterman1990markov} is defined as a tuple $\left(\mathcal{S}, \mathcal{A}, R, P, \gamma, \nu \right)$, where $\mathcal{S}$ is the set of states, $\mathcal{A}$ is the set of actions, $R: \mathcal{S} \times \mathcal{A} \rightarrow [0,1]$ is the reward function that assigns the reward $R(s,a)$ for taking action $a$ in state $s$, $P: \mathcal{S} \times \mathcal{A} \rightarrow \Delta(\mathcal{S})$\footnote{We denote with $\Delta(\mathcal{X})$ the set of probability distributions over a generic set $\mathcal{X}$.} is the transition kernel that specifies the probability distribution $P(\cdot|s,a)$ over the next state when taking action $a$ in state $s$, $\gamma \in (0,1)$ is the discount factor, and $\nu \in \Delta(\mathcal{S})$ is the initial-state distribution. The behavior of the agent is defined by a policy $\pi: \mathcal{S} \rightarrow \Delta(\mathcal{A})$ that provides a mapping between states and distributions over action. We define a trajectory of length $h$ as $\bm{\tau}_h \coloneqq (s_0, a_0, \dots, s_{T-1}, a_{h-1}, s_h)$, i.e., a sequence of state-action pairs of length $h$, and we define the trajectory return as $G(\bm{\tau}_h) \coloneqq \sum_{t=0}^{h-1} \gamma^t R(s_t, a_t)$. Each trajectory of length $h$ belongs to a trajectory space denoted with $\mathcal{T}_h$. The performance of the agent is evaluated in terms of \emph{expected return}, i.e., the expected cumulative discounted sum of rewards over the estimation horizon $T$:\footnote{As usual in the policy gradient literature \citep[see e.g.,][]{papini2019smoothing}, we consider the infinite-horizon discounted MDP model in our setting, but a finite horizon $T$ when introducing estimators. This is justified by the fact that, if $T = \mathcal{O}\left( \frac{1}{1-\gamma} \log \frac{1}{\epsilon} \right)$, the expected return with horizon $T$ is $\epsilon$-close to the infinite-horizon case~\citep{kakade2003sample}.} $J(\pi) \coloneqq \E_{\pi} \left[ G(\bm{\tau}_{T}) \right]$, where the expectation is taken w.r.t. the stochasticity of the policy, the environment, and the initial-state distribution.

\paragraph{Policy Optimization} For what concerns optimization tasks, we focus on the case in which the agent's policy belongs to a parametric differentiable policy space $\Pi_{\Theta} \coloneqq \left\{ \pi_{\bm{\theta}} : \bm{\theta} \in \Theta \subseteq \mathbb{R}^u \right\}$. In this context, the expected return of any policy $\pi_{\bm{\theta}}$ is usually expressed as an integral over the trajectory space $\mathcal{T}_T$. In particular, the agent's maximization objective can be re-written as:
{\thinmuskip=1mu
\medmuskip=1mu
\thickmuskip=1mu
\begin{align}\label{eq:agent-goal}
\resizebox{.43\textwidth}{!}{$\displaystyle
\argmax_{\bm{\theta} \in \Theta} J(\bm{\theta}) \coloneqq  \E_{\pi_{\bm{\theta}}} \left[ G(\bm{\tau}_{T}) \right] =  \int_{\mathcal{T}_T} p(\bm{\tau}_T| \bm{\theta},T) G(\bm{\tau}_T) \mathrm{d}\bm{\tau}_T,$}
\end{align}
}where $p(\bm{\tau}_h | \bm{\theta}, h) \coloneqq \nu(s_0) \prod_{t=0}^{h-1} \pi_{\bm{\theta}}(a_t|s_t) P(s_{t+1}|s_t, a_t)$ is the trajectory density function for trajectories of length $h$. A typical approach for solving \eqref{eq:agent-goal} is to use stochastic gradient ascent methods. For instance, the well-known REINFORCE algorithm \citep{williams1992simple}, at each iteration, spends its interaction budget $\Lambda=KT$ in collecting $K$ i.i.d. trajectories of length $T$, i.e., $\{\bm{\tau}_T^{(i)}\}_{i=1}^K$, and applies the update rule $\bm{\theta}' = \bm{\theta} + \alpha \hat{\nabla}_{\bm{\theta}} J(\bm{\theta})$, where:
\begin{align*}
\hat{\nabla}_{\bm{\theta}} J(\bm{\theta}) = \frac{1}{K} \sum_{i=1}^K \left( \sum_{t=0}^{T-1} \nabla_{\bm{\theta}} \log \pi_{\bm{\theta}}\left(a_t^{(i)} | s_t^{(i)}\right) \right) G(\bm{\tau}^{(i)}_T)
\end{align*}
represents the estimator of the \emph{policy gradient}~\citep{sutton1999policy} and $\alpha \ge 0$ is the step size.

\paragraph{Importance Sampling} Let $P$ and $Q$ be two probability measures defined over a measurable space $(\mathcal{X}, \mathcal{F})$, and assume that $P \ll Q$, i.e., $P$ is absolutely continuous w.r.t. $Q$. Let $p$ and $q$ be the density functions corresponding to $P$ and $Q$ respectively. In this setting, Importance Sampling \citep[IS,][]{owen2013monte} is a statistical tool that allows estimating expectation $\mu = \E_{x \sim P} \left[ f(x) \right]$ of a bounded function $f$ (i.e., $\|f\|_{\infty} < +\infty$) under the target distribution $P$ with samples collected with the behavioral distribution $Q$. More specifically, the IS estimator corrects the distribution mismatch via the \emph{importance weights} $\omega_{P/Q}(x) = p(x)/q(x)$:
\begin{align}\label{eq:is}
\hat{\mu}_{P/Q} = \frac{1}{K} \sum_{i=1}^K \omega_{P/Q}(x_i) f(x_i),
\end{align}
where $ \{x_i\}_{i=1}^K \sim Q$. The moments of the importance weights can be expressed in terms of the exponentiated Rényi divergence. More specifically, let $\alpha \in [0,+\infty]$, the $\alpha$-Rényi divergence between $P$ and $Q$ is defined as:
\begin{align*}
D_{\alpha}(P\|Q) = \frac{1}{\alpha - 1} \log \int_{\mathcal{X}} q(x) \left(\frac{p(x)}{q(x)}\right)^\alpha \mathrm{d}x.
\end{align*}
We define $d_{\alpha}(P\|Q) \coloneqq \exp\left(D_{\alpha}(P\|Q)\right)$ as the exponentied $\alpha$-Rényi divergece, then $\E_{x \sim Q} \left[ \omega_{P/Q}(x)^{\alpha} \right] = d_{\alpha}(P\|Q)^{\alpha-1}$. The second order moments can be used to construct the following confidence intervals on the target estimation \citep{metelli2018policy} that holds with probability at least $1-\delta$:
\begin{align}\label{eq:off-policy-ci-basics}
\E_{x \sim Q}\left[f(x)\right] \ge \hat{\mu}_{P/Q} - \|f\|_{\infty}\sqrt{\frac{(1-\delta)d_2(P\|Q)}{\delta K}}
\end{align}


\section{Truncating Trajectoris in Monte Carlo Evaluation}\label{sec:eval}
In this section, we provide the theoretical groundings behind truncating trajectories in Monte Carlo RL, with a specific focus on the problem of estimating the discounted return. Before diving into the details of our approach, we first formally specify how an agent makes use of its interaction budget. For this purpose, we introduce the novel concept of \emph{Data Collection Strategy} (DCS).



\begin{definition}[Data Collection Strategy]\label{def:dcs}
A \emph{Data Collection Strategy} (DCS) for a transition budget $\Lambda \in \mathbb{N}$ is defined as a $T$-dimensional vector $\bm{m} \coloneqq (m_1, \dots, m_T)$ such that $m_h \in \mathbb{N}$ for all $h \in \{1, \dots, T \}$, and $\sum_{h=1}^{T} m_h  h = \Lambda$.
\end{definition}

More specifically, $m_h$ represents the number of trajectories of length $h$ that the agent collects in the environment. We notice that there is a tight relationship between $\bm{m}$ and the total number of samples that the agent collects at step $t$. In particular, let $\bm{n} \coloneqq (n_0,\dots, n_{T-1})$ be the $T$-dimensional vector, where each component $n_t$ represents the number of samples collected at time $t$; then, we have that $n_t = m_t$, if $t=T-1$, and $n_{t} = n_{t+1} + m_{t+1}$ otherwise.\footnote{Notice that this implies that $n_{t} \ge n_{t+1}$.}  It follows that, given $\bm{m}$, $\bm{n}$ is uniquely identified, and vice versa; for this reason, in the rest of this paper, we will use the most convenient symbol depending on the context.
Finally, we remark that each DCS corresponds to $p_{\bm{m}}(\cdot|\bm{\theta})$, which represents the density function of the data generation process of the trajectories collected under policy $\pi_{\bm{\theta}}$ following $\bm{m}$, namely $\mathcal{D} \coloneqq \{ \{ \bm{\tau}^{(i)}_h \}_{i=1}^{m_h} \}_{h=1}^T$.



\subsection{On-Policy Data Collection Strategy}
Let us now consider the on-policy problem of estimating $J(\bm{\theta})$ with trajectories collected via Monte Carlo simulation using $\pi_{\bm{\theta}}$. We begin by investigating whether, having fixed an arbitrary DCS, it is possible to build unbiased estimators for $J(\bm{\theta})$.\footnote{Notice that a na\"ive Monte Carlo estimator such as $\frac{1}{\Lambda}\sum_{h=1}^{T} \sum_{i=1}^{m_h} \sum_{t=0}^{h-1} \gamma^t R(s_t^{(i)}, a_t^{(i)})$ is, in general, biased.} The answer turns out to be positive for a restricted class of DCSs, namely the ones for which $m_T \ge 1$ holds. Intuitively, this condition ensures that the agent gathers at least one sample for each interaction step $t \in \{0, \dots, T-1 \}$. Thus, for any DCS $\bm{m}$, we design the following estimator:

\begin{align}\label{eq:onpolicy-dcs-est}
\hat{J}_{\bm{m}}(\bm{\theta}) = \sum_{h=1}^{T} \sum_{i=1}^{m_h} \sum_{t=0}^{h-1} \gamma^t \frac{R(s_t^{(i)}, a_t^{(i)})}{n_t},
\end{align}
To provide an interpretation, we notice that, given $\mathcal{D} \sim p_{\bm{m}}(\cdot|\bm{\theta})$, Equation \eqref{eq:onpolicy-dcs-est} sums over the collected trajectories of different lengths (i.e., the two external summations) a rescaled empirical truncated return in which each reward at step $t$ is \emph{properly} divided by $n_t$, i.e., the number of samples gathered at step $t$. Intuitively, this rescaling is required to prevent the estimate to be biased toward the steps for which $n_t$ is larger. 
Furthermore, this estimator has several interesting properties. First of all, as already anticipated, as long as $m_T \ge 1$, it provides an unbiased estimate of $J(\bm{\theta})$, namely $\mathop{\E}_{p_{\bm{m}}(\cdot|\bm{\theta})} \left[ \hat{J}_{\bm{m}}(\bm{\theta}) \right] = J(\bm{{\theta}})$ (proof in Appendix \ref{app:proofs}). 
Moreover, given the uniform-in-the-horizon DCS, i.e., $\bm{m} = \left( 0, \dots, 0, \frac{\Lambda}{T}\right)$, Equation \eqref{eq:onpolicy-dcs-est} recovers the usual Monte Carlo on-policy estimator of $J(\bm{\theta})$. As we shall later see, these properties will also naturally extends to the off-policy estimation problem. 

At this point, our main objective can be framed as finding the best possible DCS among the ones that preserve the unbiasedness property. In this sense, we need to define a proper index to evaluate the candidates. In this work, we take a worst-case scenario w.r.t. the underlying MDP and policy, and we choose to minimize confidence intervals around the estimated expected return. More specifically, we derive the following generalization of the H\"{o}effding confidence intervals \citep{boucheron2003concentration} that holds for a generic DCS (proof in Appendix \ref{app:proofs}). \footnote{We remark that Proposition \ref{prop:hoeffding-general} does not directly follow from a na\"ive application of the Hoeffding inequality, and some tecnical manipulations are required to obtain Equation \eqref{eq:hoeffding-general}.}


\begin{restatable}{proposition}{hoeffdinggeneral}\label{prop:hoeffding-general}
Consider an optimization budget $\Lambda \ge T$, a generic DCS $\bm{m}$ such that $m_T \ge 1$ and $\delta \in (0, 1)$. Then, with probability at least $1-\delta$ it holds that:
\begin{align}\label{eq:hoeffding-general}
\left|\hat{J}_{\bm{m}}(\bm{{\theta}}) - J({\bm{{\theta}}})\right| \le \sqrt{\frac{1}{2} \log\left(\frac{2}{\delta}\right) \sum_{t=0}^{T-1} \frac{c_t}{n_t}},
\end{align}
where $c_t = \frac{\gamma^t(\gamma^t + \gamma^{t+1} - 2\gamma^T)}{1-\gamma}$. 
\end{restatable}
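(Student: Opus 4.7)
The plan is to express $\hat{J}_{\bm{m}}(\bm{\theta})$ as a sum of independent, bounded random variables indexed by \emph{trajectories} (not by per-step rewards), apply the standard Hoeffding inequality once, and finally rearrange the resulting variance proxy into the stated form by an index-swap identity followed by a geometric-sum simplification.

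\textbf{Step 1 (trajectory-level decomposition).} For each collected trajectory $\bm{\tau}^{(i,h)}$, I set
$$Z^{(i,h)} \coloneqq \sum_{t=0}^{h-1} \gamma^t \frac{R(s_t^{(i,h)}, a_t^{(i,h)})}{n_t},$$
so that $\hat{J}_{\bm{m}}(\bm{\theta}) = \sum_{h=1}^{T}\sum_{i=1}^{m_h} Z^{(i,h)}$. Since the trajectories are drawn independently across both $i$ and $h$, the family $\{Z^{(i,h)}\}$ is mutually independent, and $R\in[0,1]$ yields $Z^{(i,h)} \in [0, a_h]$ with $a_h \coloneqq \sum_{t=0}^{h-1}\gamma^t/n_t$. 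Unbiasedness $\E[\hat{J}_{\bm{m}}(\bm{\theta})]=J(\bm{\theta})$ is already granted in the paper under $m_T\ge 1$, which ensures $n_t\ge 1$ for every $t$.

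\textbf{Step 2 (Hoeffding).} Applying Hoeffding's inequality to this independent sum gives, with probability at least $1-\delta$,
$$|\hat{J}_{\bm{m}}(\bm{\theta}) - J(\bm{\theta})| \le \sqrt{\tfrac{1}{2}\log(2/\delta)\,V}, \qquad V \coloneqq \sum_{h=1}^{T} m_h\, a_h^2.$$
This is where the subtlety flagged by the authors' footnote sits: the na\"{i}ve alternative — applying Hoeffding separately to each per-step sample mean and taking a union bound over $t\in\{0,\dots,T-1\}$ — would leak an extra $\log T$ factor. Grouping by trajectory avoids this, because the within-trajectory rewards, although dependent, only contribute the scalar range $a_h$.

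\textbf{Step 3 (algebraic identity).} It remains to show that $V = \sum_{t=0}^{T-1} c_t/n_t$. I plan to expand $a_h^2 = \sum_{0\le t,s\le h-1} \gamma^{t+s}/(n_t n_s)$, swap the order of summation, and use the telescoping relation $\sum_{h=k+1}^{T} m_h = n_k$ with $k=\max(t,s)$ to obtain
$$V = \sum_{t,s=0}^{T-1} \frac{\gamma^{t+s}\, n_{\max(t,s)}}{n_t\, n_s}.$$
Separating the diagonal $t=s$ from the off-diagonal terms and symmetrizing in $(t,s)$ reduces this to
$$V = \sum_{t=0}^{T-1}\frac{\gamma^{2t}}{n_t} + 2\sum_{t=0}^{T-2}\frac{\gamma^t}{n_t}\sum_{s=t+1}^{T-1}\gamma^s.$$
Evaluating the inner geometric sum as $(\gamma^{t+1}-\gamma^T)/(1-\gamma)$ and combining the two contributions produces the numerator $\gamma^{2t}(1-\gamma) + 2\gamma^t(\gamma^{t+1}-\gamma^T) = \gamma^t(\gamma^t+\gamma^{t+1}-2\gamma^T)$, which is exactly $(1-\gamma)c_t$, yielding the claimed form.

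\textbf{Main obstacle.} The probabilistic content is mild once one picks trajectories, rather than per-time-step averages, as the atomic independent units; the real effort is the bookkeeping in Step~3, specifically the summation swap combined with $\sum_{h=k+1}^T m_h = n_k$ and the symmetrization of the cross terms. Once those are in place, the final numerator simplification is routine and the claimed $c_t$ falls out directly.
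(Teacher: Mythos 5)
Your proposal is correct and follows essentially the same route as the paper: treat each trajectory as an independent bounded random variable with range $a_h=\sum_{t<h}\gamma^t/n_t$, apply Hoeffding once (the paper phrases this via sub-Gaussianity with parameter $a_h^2/4$, which gives the identical bound), and then reduce $\sum_h m_h a_h^2$ to $\sum_t c_t/n_t$ by the index swap $\sum_{h=k+1}^{T}m_h=n_k$ and a geometric sum — exactly the content of the paper's Lemma B.2. Your symmetric $n_{\max(t,s)}$ bookkeeping is a slightly more compact packaging of the same computation; no gaps.
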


As we can notice, Equation \eqref{eq:hoeffding-general} is always well-defined. Indeed, $m_T \ge 1$ implies $n_t \ge 1$ for all $t \in \{0, \dots, T-1 \}$. Furthermore, a single term within the summation $\frac{c_t}{n_t}$ relates the width of the confidence intervals w.r.t. the number of samples gathered at timestep $t$. More specifically, we notice that $c_t$ is a decreasing function of time. Intuitively, if we are given a fixed budget $\Lambda$, we expect that, to minimize Equation \eqref{eq:hoeffding-general}, more samples should be allocated at the beginning of the horizon, corroborating our initial intuition, i.e., the convenience of truncating trajectories. Moreover, we notice that the discount factor $\gamma$ plays a crucial role in the expression of $c_t$. The lower $\gamma$, the faster the aforementioned decreasing rate, meaning that, for small $\gamma$s, a larger portion of the budget $\Lambda$ will be allocated to earlier interaction steps when minimizing Equation \eqref{eq:hoeffding-general}. Finally, it is possible to verify that, when Proposition \ref{prop:hoeffding-general} is applied with the uniform DCS, we recover the usual H\"{o}effding confidence intervals for the Monte Carlo estimation of $J(\bm{\theta})$. Therefore, given a fixed budget $\Lambda$, if we are able to find the DCS that minimizes Equation \eqref{eq:hoeffding-general}, we implicitly obtain a \emph{robustness} property w.r.t. the uniform-in-the-horizon strategy. In order to find the DCS that minimizes Equation \eqref{eq:hoeffding-general}, we formulate the following optimization problem:
\begin{equation}\label{sys:onpolicy-hard}
\begin{aligned} 
\min_{\bm{n}} \quad & f(\bm{n}) \coloneqq \sqrt{\frac{1}{2} \log\left( \frac{2}{\delta} \right) \sum_{t=0}^{T-1} \frac{c_t}{n_t} } \\
\textrm{s.t.} \quad &  \sum_{t=0}^{T-1} n_t = \Lambda \\
  & n_t \ge n_{t+1}, \quad \forall t \in \{0, \dots, T-2\}  \\
  & n_t \in \mathbb{N}_+, \quad \forall t \in \{0, \dots, T-1\}
\end{aligned}
\end{equation}
where the constraint $n_t \ge n_{t+1}$ arises from the aforementioned relationships between $\bm{m}$ and $\bm{n}$. Problem~\eqref{sys:onpolicy-hard} is a non-linear integer program that, in principle, could be addressed by means of complex solvers. However, such an approach would fail to provide an interpretable result (e.g., closed-form expression) and, thus, would be of little interest for statistical analysis.
For this reason, we follow a different path and derive an analytical expression for an approximately optimal DCS, whose form arises from solving a convex relaxation of \eqref{sys:onpolicy-hard} in which the integer constraint on $n_t$ is dropped and, then, the obtained optimal relaxed solution is rounded down and the remaining budget is allocated uniformly.
For the sake of presentation, the following Theorem (proof in Appendix \ref{app:proofs}) summarizes our result for a sufficiently large budget $\Lambda \ge \Lambda_0$. We refer the reader to Appendix \ref{app:proofs} for the exact expression of $\Lambda_0$ and for the symmetric version of Theorem \ref{theo:onpolicy-sol} that holds when $T \le \Lambda < \Lambda_0$.

\begin{restatable}{theorem}{sol}\label{theo:onpolicy-sol}
Consider an optimization budget $\Lambda \ge \Lambda_0$, let $\bm{n}^*$ be the optimal solution of \eqref{sys:onpolicy-hard}.
Let $g_t = \frac{\sqrt{c_t}}{\sum_{i=0}^{T-1} \sqrt{c_i}}\Lambda$, and let $k = \Lambda - \sum_{t=0}^{T-1} \lfloor g_t \rfloor$. Define the $t$-th component of the approximately optimal DCS $\tilde{\bm{n}}^*$ as 
$\tilde{n}^*_t \coloneqq \lfloor g_t \rfloor + \mathbf{1} \{ t < k \}$.
Then, it holds that:
\begin{align}\label{eq:onpolicy-opt-obj}
f(\bm{n}^*) \le f(\tilde{\bm{n}}^*) \le \sqrt{2} f(\bm{n}^*).
\end{align}
\end{restatable}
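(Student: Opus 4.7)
The plan is to attack Problem~\eqref{sys:onpolicy-hard} by convex relaxation. Since $f$ is a monotone increasing transformation of $\sum_t c_t/n_t$, it is equivalent to minimize the latter. I would first drop both the integrality constraint $n_t \in \mathbb{N}_+$ and the monotonicity constraint $n_t \ge n_{t+1}$, obtaining a strictly convex program $\min \sum_{t} c_t / n_t$ subject to $\sum_t n_t = \Lambda$, $n_t > 0$. The KKT stationarity conditions give $c_t / n_t^2 = \lambda$ for a common multiplier, and plugging back into the budget constraint immediately yields the closed-form relaxed optimum $n_t^{\mathrm{rel}} = g_t = \sqrt{c_t}\,\Lambda / \sum_{i=0}^{T-1} \sqrt{c_i}$. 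A short calculation from $c_t = \gamma^t(\gamma^t + \gamma^{t+1} - 2\gamma^T)/(1-\gamma)$ shows $c_t - c_{t+1} = \gamma^t\bigl(\gamma^t(1+\gamma)^2 - 2\gamma^T\bigr)$, which is positive for every $t \le T-1$ because $(1+\gamma)^2 > 2\gamma$; hence $c_t$ and therefore $g_t$ are strictly decreasing in $t$, so the dropped monotonicity constraint is automatically satisfied by $\bm{g}$.

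Next, I would construct the integer feasible point $\tilde{\bm{n}}^*$ by rounding $g_t$ down and distributing the leftover $k := \Lambda - \sum_t \lfloor g_t \rfloor$ units one by one to the first $k$ coordinates, exactly as in the statement. Because each fractional part of $g_t$ lies in $[0,1)$, we have $0 \le k < T$. Integrality and the equality $\sum_t \tilde{n}^*_t = \Lambda$ are immediate. For the monotonicity constraint $\tilde{n}^*_t \ge \tilde{n}^*_{t+1}$, I would split into cases on whether $t$ and $t+1$ lie on the same side of $k$: when both do, the inequality reduces to $\lfloor g_t \rfloor \ge \lfloor g_{t+1} \rfloor$, which holds because $g_t \ge g_{t+1}$; in the boundary case $t < k \le t+1$ the extra $+1$ only helps. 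Positivity $\tilde{n}^*_t \ge 1$ reduces to $\lfloor g_{T-1} \rfloor \ge 1$ (since $T-1 \ge k$ always), which is where the hypothesis $\Lambda \ge \Lambda_0$ enters: $\Lambda_0$ will be chosen large enough to guarantee $g_{T-1} \ge 2$, i.e., $\Lambda_0$ of order $\sum_i \sqrt{c_i}/\sqrt{c_{T-1}}$.

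Finally, for the two-sided bound in \eqref{eq:onpolicy-opt-obj}, the left inequality is free: $\tilde{\bm{n}}^*$ is feasible for the integer program and $\bm{n}^*$ is its optimum. For the right inequality, I would use $\bm{g}$ as a tractable lower bound on $f(\bm{n}^*)$, since relaxing constraints can only reduce the optimum, obtaining $f(\bm{n}^*) \ge f(\bm{g})$ with $f(\bm{g})^2 = \frac{1}{2}\log(2/\delta)\,(\sum_i \sqrt{c_i})^2/\Lambda$. On the other side, my choice of $\Lambda_0$ ensures $g_t \ge 2$ for every $t$, hence $\tilde{n}^*_t \ge \lfloor g_t \rfloor \ge g_t - 1 \ge g_t/2$, which gives $\sum_t c_t/\tilde{n}^*_t \le 2 \sum_t c_t/g_t$ and therefore $f(\tilde{\bm{n}}^*) \le \sqrt{2}\,f(\bm{g}) \le \sqrt{2}\,f(\bm{n}^*)$. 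The main obstacle I anticipate is precisely calibrating $\Lambda_0$ so that the worst coordinate $t = T-1$ satisfies $g_t \ge 2$ while keeping $\Lambda_0$ as tight as possible; the complementary regime $T \le \Lambda < \Lambda_0$, where some $g_t$ are smaller than $2$, cannot be handled by this uniform $g_t/2$ argument and requires the separate symmetric construction mentioned after the theorem statement.
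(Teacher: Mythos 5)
Your proposal is correct and follows essentially the same route as the paper: relax Problem~\eqref{sys:onpolicy-hard} to a convex program, solve the KKT conditions to obtain the square-root allocation $g_t$, round down and redistribute the remainder, and close the $\sqrt{2}$ gap by comparing $\tilde{n}^*_t$ to $g_t$ within a factor of $2$. The only substantive difference is that your chain $\lfloor g_t\rfloor \ge g_t-1\ge g_t/2$ requires $g_{T-1}\ge 2$, whereas the paper's step $\bar{n}^*_t \le \tilde{n}^*_t+1\le 2\tilde{n}^*_t$ uses only $\tilde{n}^*_t\ge 1$ and hence works with the smaller threshold $\Lambda_0=\sum_{i}\sqrt{c_i}/\sqrt{c_{T-1}}$ (about half of yours); since the statement leaves $\Lambda_0$ unspecified, this costs nothing.
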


\begin{figure}[ht]
\vskip 0.2in
\begin{center}
\centerline{\includegraphics[width=5.5cm]{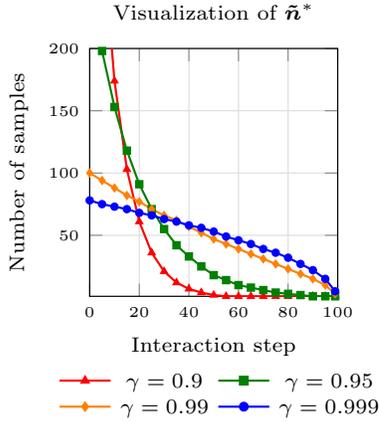}}
\caption{Visualization of $\bm{\tilde{n}}^*$ for $\Lambda=5000$, $T=100$, and different values of $\gamma$. More specifically, the $x$-axis denotes the interaction timestep $t$ and the $y$-axis reports, for each value of $t$, the number of steps prescribed by $\bm{\tilde{n}}^*$, namely $\tilde{n}^*_t$. As we can see, the behavior of $\tilde{n}^*_t$ is monotonically decreasing in $t$; furthermore, the smaller $\gamma$ is, the faster the decrease rate of $\tilde{n}^*_t$ is.}
\label{fig:strat-vis-main}
\end{center}
\vskip -0.2in
\end{figure}

Theorem \ref{theo:onpolicy-sol} deserves some comments. First of all, it provides a closed form expression for an approximately optimal DCS $\tilde{\bm{n}}^*$. Indeed, from Equation \eqref{eq:onpolicy-opt-obj} we can infer that, up to constant factors, $\tilde{\bm{n}}^*$ achieves the same confidence intervals as the true optimal DCS $\bm{n}^*$ that minimizes \eqref{sys:onpolicy-hard} (i.e., $f(\tilde{\bm{n}}^*) = \Theta(f(\bm{n}^*))$). Now, let us focus on the expression of a single term $\tilde{n}^*_t$, whose shape in visualized in Figure~\ref{fig:strat-vis-main}. \footnote{Further visualizations are provided in Appendix \ref{app:vis}.} Neglecting the indicator function and the floor, which both arise from technicalities in the analysis, the most relevant term is given by $g_t$. As one can notice, $g_t$ partitions the available budget $\Lambda$ among the different timesteps with a proportion of $\sqrt{c_t}$, which is an exponentially decreasing function of time and whose decrease rate is given by $\gamma$. In this sense, the approximately optimal DCS provably truncates trajectories by allocating more samples to the initial steps of interactions. Moreover, the smaller $\gamma$ is, the larger the amount of samples that will be allocated to earlier steps $t$, as aggressive discounting makes the future less relevant. 

To conclude this section, we remark that Theorem \ref{theo:onpolicy-sol} provably shows that truncating trajectories can successfully minimize the confidence intervals around the estimated return, for any possible pair of MDP and target policy. However, the complexity of the expression of the approximately optimal DCS $\tilde{\bm{n}}^*$ does not allow to easily quantify the improvement w.r.t. the uniform strategy. For this reason, we resort to PAC analysis \cite{even2002pac}. More specifically, given some desired confidence level $\delta \in (0, 1)$ and accuracy $\epsilon > 0$, we aim at answering the following question: which is the minimum amount of budget $\Lambda$ such that $|\hat{J}_{\bm{m}}(\bm{\theta}) - J(\bm{\theta})| \le \epsilon$ holds with probability at least $1-\delta$? It is easy to see that, for the uniform DCS, $\Lambda = \mathcal{O} \left( \frac{T\log(2/\delta)}{(1-\gamma)^2 \epsilon^2} \right)$ is sufficient for enforcing $|\hat{J}_{\bm{m}}(\bm{\theta}) - J(\bm{\theta})| \le \epsilon$. For our approximately optimal DCS, instead, we derive the following result:

\begin{restatable}{theorem}{pac}\label{theo:pac}
Let $\delta \in (0,1)$ and $\epsilon > 0$ such that $8T \epsilon^2 \le \log(2/\delta) c_0$ holds. Then, with probability at least $1-\delta$, $|\hat{J}_{\tilde{\bm{m}}^*}(\bm{\theta}) - J(\bm{\theta})| \le \epsilon$ holds provided that:
\begin{align}\label{eq:pac-improvement}
\Lambda = \mathcal{O} \left( \min \left\{ \frac{T\log(2/\delta)}{(1-\gamma)^2 \epsilon^2}, \frac{\log(2/\delta)}{(1-\gamma)^3 \epsilon^2}  \right\}  \right)
\end{align}
\end{restatable}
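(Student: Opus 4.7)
The plan is to apply Proposition~\ref{prop:hoeffding-general} with the approximately optimal DCS $\tilde{\bm{n}}^*$ supplied by Theorem~\ref{theo:onpolicy-sol}, which gives, with probability at least $1-\delta$,
\begin{align*}
\left|\hat{J}_{\tilde{\bm{m}}^*}(\bm{\theta}) - J(\bm{\theta})\right| \le \sqrt{\tfrac{1}{2}\log(2/\delta)\sum_{t=0}^{T-1} \tfrac{c_t}{\tilde{n}^*_t}}.
\end{align*}
It then suffices to derive two independent upper bounds on $\sum_{t} c_t/\tilde{n}^*_t$ and to solve for the smallest $\Lambda$ for which each right-hand side is at most $\epsilon$; these two bounds will correspond to the two arguments of the $\min$ in Equation~\eqref{eq:pac-improvement}.

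The first bound exploits the near-optimality granted by Theorem~\ref{theo:onpolicy-sol}. Since $f(\tilde{\bm{n}}^*)^2 \le 2\, f(\bm{n}^*)^2$ and $\bm{n}^*$ minimizes $f$ over the feasible set, we have $f(\tilde{\bm{n}}^*)^2 \le 2\, f(\bm{u})^2$ for the uniform DCS $\bm{u}$ with $n^{\mathrm{u}}_t = \Lambda/T$. A short telescoping computation gives $\sum_{t=0}^{T-1} c_t = (1-\gamma^T)^2/(1-\gamma)^2 \le 1/(1-\gamma)^2$, so that $f(\tilde{\bm{n}}^*)^2 \le T\log(2/\delta)/((1-\gamma)^2\Lambda)$. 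Requiring this to be at most $\epsilon^2$ immediately yields the first term of Equation~\eqref{eq:pac-improvement}.

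The second, more delicate bound analyzes $\tilde{\bm{n}}^*$ directly. Setting $S \coloneqq \sum_{i=0}^{T-1} \sqrt{c_i}$, one has $g_t = \sqrt{c_t}\,\Lambda/S$ and $\sum_{t} c_t/g_t = S^2/\Lambda$. I would partition the indices into $A \coloneqq \{t : g_t \ge 2\}$ and its complement $B$. For $t \in A$, $\tilde{n}^*_t \ge \lfloor g_t \rfloor \ge g_t/2$, giving $\sum_{t \in A} c_t/\tilde{n}^*_t \le 2 S^2/\Lambda$. For $t \in B$, the very definition of $g_t$ yields $c_t < 4 S^2/\Lambda^2$, and using $\tilde{n}^*_t \ge 1$ gives $\sum_{t \in B} c_t/\tilde{n}^*_t \le 4 T S^2/\Lambda^2$. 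A geometric-sum estimate starting from $\sqrt{c_t} \le \gamma^t \sqrt{(1+\gamma)/(1-\gamma)}$ produces $S^2 \le 2/(1-\gamma)^3$. The regime hypothesis $8T\epsilon^2 \le c_0 \log(2/\delta)$, combined with the elementary inequality $c_0(1-\gamma)^3 \le 2$, forces $\Lambda \ge 2T$ whenever $\Lambda$ is at least a constant multiple of $\log(2/\delta)/((1-\gamma)^3\epsilon^2)$, which absorbs the $4TS^2/\Lambda^2$ contribution into a constant times $S^2/\Lambda$; setting the resulting bound to $\epsilon^2$ produces the second term.

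The main obstacle is exactly the handling of the indices in $B$: flooring can shrink $\tilde{n}^*_t$ down to $1$, and without further control the additive $T/\Lambda^2$ contribution could dominate the $1/\Lambda$ term. The role of the hypothesis $8T\epsilon^2 \le c_0 \log(2/\delta)$ is precisely to rule this out by guaranteeing that $\Lambda$ is comfortably larger than $T$ in the regime where the second term of the $\min$ is the binding one, so that the two bounds combine cleanly into Equation~\eqref{eq:pac-improvement}.
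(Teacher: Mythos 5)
Your overall strategy is sound and reaches the same two-term bound as the paper, but the route differs in both halves, and one step needs repair. For the first term, you invoke the near-optimality guarantee $f(\tilde{\bm{n}}^*)\le\sqrt{2}\,f(\bm{n}^*)\le\sqrt{2}\,f(\bm{u})$ against the uniform DCS and use the exact identity $\sum_t c_t=(1-\gamma^T)^2/(1-\gamma)^2$; this is a clean shortcut that the paper does not take --- the paper instead expands $\bigl(\sum_t\sqrt{c_t}\bigr)^2$ directly and bounds the diagonal and cross terms by $2T$ and $4T(1-\gamma^T)/(1-\gamma)$ respectively. (Minor point: the uniform allocation $n_t=\Lambda/T$ is only feasible for \eqref{sys:onpolicy-hard} when $T\mid\Lambda$; a near-uniform integer allocation fixes this up to constants.) For the second term, the paper routes everything through Lemma~\ref{lemma:pac-lemma-tech-1}, which compares $\tilde{\bm{n}}^*$ to the convex-relaxation optimum $\bar{\bm{n}}^*$ of Lemma~\ref{lemma:convex-closed-form} and uses the defining inequalities of $h^*$ to absorb both the $t<h^*$ and $t\ge h^*$ contributions into $\tfrac{2}{\Lambda}\bigl(\sum_t\sqrt{c_t}\bigr)^2$, together with Lemma~\ref{lemma:pac-lemma-tech-2} to certify $\Lambda\ge 2T$ from the hypothesis $8T\epsilon^2\le\log(2/\delta)c_0$. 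Your $A/B$ partition plays the same role and your derivation of $\Lambda\ge 2T$ from $c_0(1-\gamma)^3\le 2$ matches the paper's in substance.

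The gap is in your second bound: you write $\tilde{n}^*_t=\lfloor g_t\rfloor+\mathbf{1}\{t<k\}$ with $g_t=\sqrt{c_t}\,\Lambda/S$, which is the form of Theorem~\ref{theo:onpolicy-sol} and is only valid for $\Lambda\ge\Lambda_0=\sum_t\sqrt{c_t}/\sqrt{c_{T-1}}$. Since $\sqrt{c_{T-1}}=\gamma^{T-1}$, the threshold $\Lambda_0$ grows like $\gamma^{-(T-1)}$, so precisely in the regime where the $(1-\gamma)^{-3}$ term of the $\min$ is the binding one (i.e.\ $T\gg(1-\gamma)^{-1}$ with $\Lambda=\Theta\bigl(\log(2/\delta)/((1-\gamma)^3\epsilon^2)\bigr)$) one typically has $\Lambda\ll\Lambda_0$, and the actual DCS is the any-budget one of Definition~\ref{def:approx-opt-dcs}: it sets $\tilde{n}^*_t=1$ for $t\ge h^*$ and distributes only $\Lambda-T+h^*$ proportionally to $\sqrt{c_t}$ over $t<h^*$. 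Your argument does extend --- for $t\ge h^*$, Equation~\eqref{eq:c1} gives $\sqrt{c_t}\le\bigl(\sum_{i<h^*}\sqrt{c_i}\bigr)/(\Lambda-T+h^*)\le 2S/\Lambda$ once $\Lambda\ge 2T$, recovering your set-$B$ bound, and for $t<h^*$ the allocated mass is at least $\sqrt{c_t}\,\Lambda/(2S)$, recovering your set-$A$ bound --- but as written the proposal does not cover this case, and it is the case that actually occurs. With that repair (or by substituting the paper's Lemma~\ref{lemma:pac-lemma-tech-1}), the proof goes through.
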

Theorem \ref{theo:pac} reveals a PAC bound under an assumption on the relationship between $\epsilon$, $\delta$, and $T$; technically, this is only needed to guarantee that $\Lambda \ge 2T$ holds, which is clearly a mild condition. 
That being said, Theorem \ref{theo:pac}, first of all, shows a robustness property of $\tilde{\bm{m}}^*$ w.r.t. the uniform DCS. Moreover, it improves the standard result whenever $T > \mathcal{O}\left( (1-\gamma)^{-1}\right)$, as shown in the following example.

\begin{example}
Suppose that the agent is interested in estimating the infinite-horizon expected discounted return using a finite horizon $T$ that guarantees that the final estimate has bias bounded by $1/\exp\left((1-\gamma)^{-1}\right)$. In this case, we need to select $T = \mathcal{O}\left( (1-\gamma)^{-2} \right)$ \citep{kakade2003sample}, and the improvement of our approximately optimal DCS is given by a factor $\mathcal{O}\left((1-\gamma)^{-1}\right)$ factor. This result should not be surprising. Indeed, intuitively, if the horizon increases, the difference between the optimal DCS and the uniform one increases as well (see the expression of $\tilde{n}_t^*$ vs $\Lambda/T$). 
\end{example}

At this point, we are ready to extend our result to the off-policy estimation problem.

\subsection{Off-Policy Data Collection Strategy}\label{sec:off-policy-dcs}
Consider the off-policy problem of estimating $J(\bm{\bar{\theta}})$ with trajectories collected via Monte Carlo simulation using a possibly different policy $\pi_{\bm{\theta}}$. For an arbitrary DCS $\bm{m}$, we extend Equation \eqref{eq:onpolicy-dcs-est} by proposing the following estimator:
\begin{align}\label{eq:offpolicy-dcs-est}
\hat{J}_{\bm{m}}(\bm{\bar{\theta}}/\bm{\theta}) = \sum_{h=1}^T \sum_{i}^{m_h} \omega_{\bm{\bar{\theta}}/\bm{\theta}}(\bm{\tau}_h^{(i)}) \sum_{t=0}^{h-1} \gamma^t \frac{R(s_t^{(i)}, a_t^{(i)})}{n_t},
\end{align}
where $\omega_{\bm{\bar{\theta}}/\bm{\theta}}(\bm{\tau}_h) = \prod_{t=0}^{h-1} \frac{\pi_{\bar{\bm{\theta}}}(a_t|s_t)}{\pi_{\bm{\theta}}(a_t|s_t)}$ is the importance weight for a trajectory of length $h$. Equation \eqref{eq:offpolicy-dcs-est} enjoys the same properties discussed for Equation \eqref{eq:onpolicy-dcs-est}; the only difference, indeed, stands in $\omega_{\bm{\bar{\theta}}/\bm{\theta}}(\bm{\tau}_h)$, whose purpose is taking into account the distribution shift. Following the same rationale of the on-policy setting, we derive a generalization for the off-policy confidence intervals of \citet{metelli2018policy} for the discounted off-policy return $J(\bm{\bar{\theta}})$ that holds for a generic DCS $\bm{m}$.\footnote{We remark that for Equation \eqref{eq:offpolicy-dcs-est}, that uses IS, we cannot easily apply the H\"oeffding's inequality since the importance weight distribution might be heavy tailed~\citep{lugosi2019mean}.} More specifically, we prove the following result (proof in Appendix \ref{app:proofs}).\footnote{Theorem \ref{theo:off-policy-ci-main} makes use of Cantelli’s inequality and provides one-sided tail bounds. Two-sided tail bounds can be straightforwardly derived by using Chebyshev’s inequality.}
\begin{theorem}\label{theo:off-policy-ci-main}
Consider $\pi_{\bm{\bar{\theta}}}$, $\pi_{\bm{{\theta}}} \in \Pi_{\Theta}$ such that $\pi_{\bm{\bar{\theta}}}(\cdot|s) \ll \pi_{\bm{\theta}}(\cdot |s)$ a.s. for all $s \in \mathcal{S}$. Consider an optimization budget $\Lambda \ge T$ and a generic DCS $\bm{m}$. Then, with probability at least $1-\delta$ it holds that:
\begin{align}\label{eq:offpolicy-bound-tight}\resizebox{.42\textwidth}{!}{$\displaystyle
J(\bm{\bar{\theta}}) \ge \hat{J}_{\bm{m}}(\bm{\bar{\theta}}/\bm{\theta}) - \sqrt{ \beta_{\delta} \sum_{h=1}^T m_h \phi_h^2 d_2(p(\cdot|\bm{\bar{\theta}},h) \| p(\cdot|\bm{\theta},h)) },$}
\end{align}
where $\beta_{\delta} = \frac{1-\delta}{\delta}$ and $\phi_h \coloneqq \sum_{t=0}^{h-1} \frac{\gamma^t}{n_t}$ .
\end{theorem}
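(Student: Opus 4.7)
Since the preliminaries have already characterized the second moment of the importance weights via the exponentiated R\'enyi divergence ($\E_{x \sim Q}[\omega_{P/Q}(x)^2] = d_2(P\|Q)$), and since the estimator $\hat{J}_{\bm{m}}(\bm{\bar{\theta}}/\bm{\theta})$ is constructed as a sum over independent trajectories, the natural strategy is to combine Cantelli's one-sided inequality with a direct variance bound. The plan is: (i) confirm unbiasedness of $\hat{J}_{\bm{m}}(\bm{\bar{\theta}}/\bm{\theta})$, transferring the on-policy argument used for Equation~\eqref{eq:onpolicy-dcs-est} to the IS-weighted case; (ii) upper bound $\Var(\hat{J}_{\bm{m}}(\bm{\bar{\theta}}/\bm{\theta}))$ by a closed-form expression matching the quantity inside the square root of Equation~\eqref{eq:offpolicy-bound-tight}; and (iii) invoke Cantelli.

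For (i), the IS change-of-measure identity gives, for every $h$ and $t<h$, $\E_{p_{\bm{m}}(\cdot|\bm{\theta})}[\omega_{\bm{\bar{\theta}}/\bm{\theta}}(\bm{\tau}_h)\gamma^t R(s_t,a_t)/n_t] = \gamma^t \E_{\pi_{\bm{\bar{\theta}}}}[R(s_t,a_t)]/n_t$. Summing over $h$, $i$, and $t$ and swapping the order of summation so that $t$ is outermost, the inner sum $\sum_{h=t+1}^T m_h$ collapses to $n_t$ by the very definition of $n_t$, giving $\E[\hat{J}_{\bm{m}}(\bm{\bar{\theta}}/\bm{\theta})] = J(\bm{\bar{\theta}})$. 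For (ii), the mutual independence of all collected trajectories makes the variance additive across $(h,i)$, and identical distribution within each horizon yields
\[
\Var(\hat{J}_{\bm{m}}(\bm{\bar{\theta}}/\bm{\theta})) = \sum_{h=1}^T m_h \Var\!\left(\omega_{\bm{\bar{\theta}}/\bm{\theta}}(\bm{\tau}_h)\sum_{t=0}^{h-1}\frac{\gamma^t R(s_t,a_t)}{n_t}\right).
\]
Since $\omega_{\bm{\bar{\theta}}/\bm{\theta}}\ge 0$ and $R\in[0,1]$, the inner random variable is nonnegative and deterministically bounded by $\omega_{\bm{\bar{\theta}}/\bm{\theta}}(\bm{\tau}_h)\phi_h$; bounding variance by second moment, squaring the deterministic inequality, and applying $\E[\omega_{\bm{\bar{\theta}}/\bm{\theta}}(\bm{\tau}_h)^2] = d_2(p(\cdot|\bm{\bar{\theta}},h)\|p(\cdot|\bm{\theta},h))$ gives $\Var(\hat{J}_{\bm{m}}(\bm{\bar{\theta}}/\bm{\theta}))\le V \coloneqq \sum_{h=1}^T m_h \phi_h^2\, d_2(p(\cdot|\bm{\bar{\theta}},h)\|p(\cdot|\bm{\theta},h))$.

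For (iii), Cantelli applied to the centered estimator yields $\Pr(\hat{J}_{\bm{m}}(\bm{\bar{\theta}}/\bm{\theta}) - J(\bm{\bar{\theta}}) \ge \lambda) \le \sigma^2/(\sigma^2+\lambda^2)$; since $x \mapsto x/(x+\lambda^2)$ is increasing on $[0,\infty)$, replacing $\sigma^2$ with the upper bound $V$ and solving $V/(V+\lambda^2)\le \delta$ for $\lambda$ produces $\lambda = \sqrt{\beta_\delta V}$ with $\beta_\delta = (1-\delta)/\delta$, which rearranged is exactly Equation~\eqref{eq:offpolicy-bound-tight}. The main subtlety I expect is in the variance step: one must exploit the nonnegativity of both the importance weights and the weighted rewards in order to turn the deterministic upper bound into a \emph{squared} bound that can be taken in expectation, and one must verify that the R\'enyi identity applies at each horizon $h$ separately with densities $p(\cdot|\bm{\bar{\theta}},h)$ and $p(\cdot|\bm{\theta},h)$. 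The absolute continuity hypothesis $\pi_{\bm{\bar{\theta}}}(\cdot|s)\ll\pi_{\bm{\theta}}(\cdot|s)$ a.s.\ is precisely what ensures finiteness of each $d_2$.
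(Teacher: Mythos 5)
Your proposal is correct and follows essentially the same route as the paper's proof: decompose the variance across independent trajectories, bound each term's second moment via the deterministic envelope $\omega_{\bm{\bar{\theta}}/\bm{\theta}}(\bm{\tau}_h)\phi_h$ (using nonnegativity of weights and rewards) together with the identity $\E[\omega^2]=d_2$, and then apply Cantelli's one-sided inequality to the unbiased, centered estimator. The monotonicity remark justifying the substitution of the variance by its upper bound inside the Cantelli tail is a detail the paper handles equivalently, so there is nothing further to add.
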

However, Equation \eqref{eq:offpolicy-bound-tight} is of little practical use to derive a DCS. Indeed, minimizing Equation \eqref{eq:offpolicy-bound-tight} as a function of $\bm{m}$ entails computing the Rényi divergence over the trajectory space. This, in turn, requires both to compute the approximation of a complex integral, and, for stochastic environments, the knowledge of the transition kernel $P$ of the underlying MDP \citep{metelli2018policy}. Therefore, to derive a tractable expression, we further bound each term as $d_2(p(\cdot|\bm{\bar{\theta}},h) \| p(\cdot|\bm{\theta},h))\le d_2(p(\cdot|\bm{\bar{\theta}},T) \| p(\cdot|\bm{\theta},T))$, thus leading to:
\begin{align}\label{eq:offpolicy-bound-loose}
\sqrt{ \beta_{\delta} d_2(p(\cdot|\bm{\bar{\theta}},T) \| p(\cdot|\bm{\theta},T)) \sum_{t=0}^{T-1} \frac{c_t}{n_t}  }.
\end{align}
However, it is easy to verify that, finding the DCS that minimizes this new expression, leads to an optimization problem with the same structure of Problem \eqref{sys:onpolicy-hard}. Indeed, $2\log(2/\delta)$ and $\beta_{\delta} d_2(p(\cdot|\bm{\bar{\theta}},T) \| p(\cdot|\bm{\theta},T))$ can be seen as constants that do not impact the result of the optimization. For this reason, it is possible to derive an equivalent of Theorem \ref{theo:onpolicy-sol} for the off-policy setting that we defer to Appendix \ref{app:proofs}. Notice, however, that the form of the approximately optimal DCS $\tilde{\bm{m}}^*$ is left unchanged, and, consequently, all the previous comments about the on-policy solution extends to the off-policy setting as well. To conclude, we remark that, although a further bound has been applied to obtain Equation \eqref{eq:offpolicy-bound-loose}, once the data has been collected, one can still make use of the tighter bound of Equation \eqref{eq:offpolicy-bound-tight} to obtain a confidence interval on $J(\bm{\bar{\theta}})$. Indeed, since $d_2(p(\cdot|\bm{\bar{\theta}},h) \| p(\cdot|\bm{\theta},h))\le d_2(p(\cdot|\bm{\bar{\theta}},T) \| p(\cdot|\bm{\theta},T))$, this implies a further source of improvement w.r.t. the uniform strategy.

\subsection{Discussion}
We now discuss the choice of our confidence interval metrics to optimize the DCS. As we have seen, our method provably minimizes confidence intervals on the expected discounted return of a given policy. More specifically, the choice of the confidence intervals that we adopt, together with the methodology that we present, leads to a novel \emph{fixed} DCS (i.e., $\tilde{\bm{m}}^*$) that can be adopted for estimation purposes. Minimizing confidence intervals is well-known to be a robust solution against heavy-tailed distributions \citep{lugosi2019mean}, and, consequently, our work comes with desired statistical properties that hold for \emph{any} possible pair of MDP and target policy. 
Moreover, given that the proposed DCS is pre-determined, it nicely fits situations where the agent collects its experience (i.e., spends $\Lambda$) in parallel over a cluster of machines, which is a typical scenario for policy gradient methods. 
At this point, one might object that there might exist MDPs in which, intuitively, truncating trajectories is a sub-optimal solution. In particular, suppose that the agent gathers rewards different from $0$ only in the last interaction step (e.g., a \emph{goal-based} problem). In this situation, we can imagine that the uniform strategy should be preferred over any other allocation strategy, even in a discounted setting.\footnote{We propose a variance analysis for these scenarios in Appendix \ref{app:proofs}.}
Our approach does not capture this \emph{problem-dependent} feature since it is designed to be agnostic w.r.t. the underlying structure of the MDP and target policy. However, we remark that,  without any sort of prior knowledge, our method provably minimizes the \emph{worst-case} scenario.
Furthermore, we also notice that when dealing with sparse rewards, $\gamma$ is usually selected to be close to $1$ to avoid nullifying the positive reward gathered at the end of the trajectory. In such a scenario, $\bm{\tilde{m}}^*$ tends to the uniform strategy.

\section{Truncating Trajectories in Policy Optimization via Importance Sampling}\label{sec:optim}
In this section, we discuss how to use our approximately optimal DCS $\tilde{\bm{m}}^*$ in a policy optimization algorithm. In particular, given the result from Section \ref{sec:eval}, Policy Optimization via Importance Sampling \citep[\pois,][]{metelli2018policy}, as we shall see in a moment, turns out to be an natural choice. \pois is a recent off-policy optimization algorithm that alternates \emph{online} interactions with the environment (i.e., data collection) with \emph{offline} optimization. In particular, \pois first makes use of the uniform DCS to collect a batch of $K$ episodes of length $T$ under the current policy $\pi_{\bm{\theta}}$. Then, it searches, by gradient steps, for the next policy $\pi_{\bm{\bar{\theta}}}$ that maximizes an empirical version of the statistical surrogate for the off-policy return derived from Equation \eqref{eq:offpolicy-bound-tight}. Namely, the agent optimizes for:
\begin{align*}
\hat{J}(\bm{\bar{\theta}}/\bm{\theta})-\sqrt{\beta_{\delta} \hat{d}_2(p(\cdot | \bm{\bar{\theta}},T ) | p(\cdot| \bm{\theta},T)) \left(\sum_{t=0}^{T-1} {\gamma^t}\right)^2 \frac{T}{\Lambda}}  ,
\end{align*}
where $\hat{d}_2(p(\cdot | \bm{\bar{\theta}},T ) | p(\cdot| \bm{\theta},T))$ is a sampled-based estimation of the Rényi divergence ${d}_2(p(\cdot | \bm{\bar{\theta}},T ) | p(\cdot| \bm{\theta},T))$ (see Equation $41$ in \citet{metelli2018policy}). In other words, POIS limits the update step via an adaptive trust region defined by the confidence intervals on the estimation of $J(\bm{\bar{\theta}})$ given that data have been collected using a different policy $\pi_{\bm{\theta}}$.

In this work, we build on POIS, and we propose to employ our optimized DCS $\bm{\tilde{m}}^*$, together with the corresponding estimator, to build a tighter surrogate of the off-policy return. More specifically, from Equation \eqref{eq:offpolicy-bound-tight}, we define $\mathcal{L}_{\delta}(\bm{\bar{\theta}}/\bm{\theta})$ as our empirical objective function:
\begin{equation}\label{eq:tt-pois-obj}
\begin{aligned}
& \mathcal{L}_{\delta}(\bm{\bar{\theta}}/\bm{\theta})  \coloneqq \hat{J}_{\bm{\tilde{m}}^*}(\bm{\bar{\theta}}/\bm{\theta}) \\
& \;\;\;\;-\sqrt{\beta_{\delta} \sum_{h=1}^T \tilde{m}_h^* (\tilde{\phi}_h^*)^2 \hat{d}_2(p(\cdot | \bm{\bar{\theta}},h ) | p(\cdot| \bm{\theta},h) )},
\end{aligned}
\end{equation}
where $\tilde{\phi}_h^* = \sum_{t=0}^{h-1} \frac{\gamma^t}{\tilde{n}_t^*}$ and $\hat{d}_2(p(\cdot | \bm{\bar{\theta}},h ) | p(\cdot| \bm{\theta},h))$ is a sampled-based estimation for ${d}_2(p(\cdot | \bm{\bar{\theta}},h) | p(\cdot| \bm{\theta},h))$. Notice that \ttpois, in Equation \eqref{eq:tt-pois-obj}, makes use of the \emph{tighter} bound of Equation \eqref{eq:offpolicy-bound-tight} with the approximately optimal DCS derived while optimizing Equation \eqref{eq:offpolicy-bound-loose}. This choice is justified by the discussion at the end of Section \ref{sec:off-policy-dcs}.


In the following, we will refer to the algorithm using Equation \eqref{eq:tt-pois-obj} as objective function as Truncating Trajectories in Policy Optimization via Importance Sampling (TT-POIS).
The pseudo-code, together with other practical implementation details can be found in Appendix \ref{app:pois-details}.


We conclude by highlighting that \ttpois can make better use of the collected data from a statistical perspective (i.e., smaller confidence intervals), suggesting that the surrogate loss will be closer to the true return  $J(\bm{\bar{\theta}})$. This implies that the adaptive trust region over the parameter space defined by Equation \eqref{eq:tt-pois-obj} will allow for larger update steps.

\section{Experiments}\label{sec:exp}

\begin{figure*}[t!]
\centering\includegraphics[width=13cm]{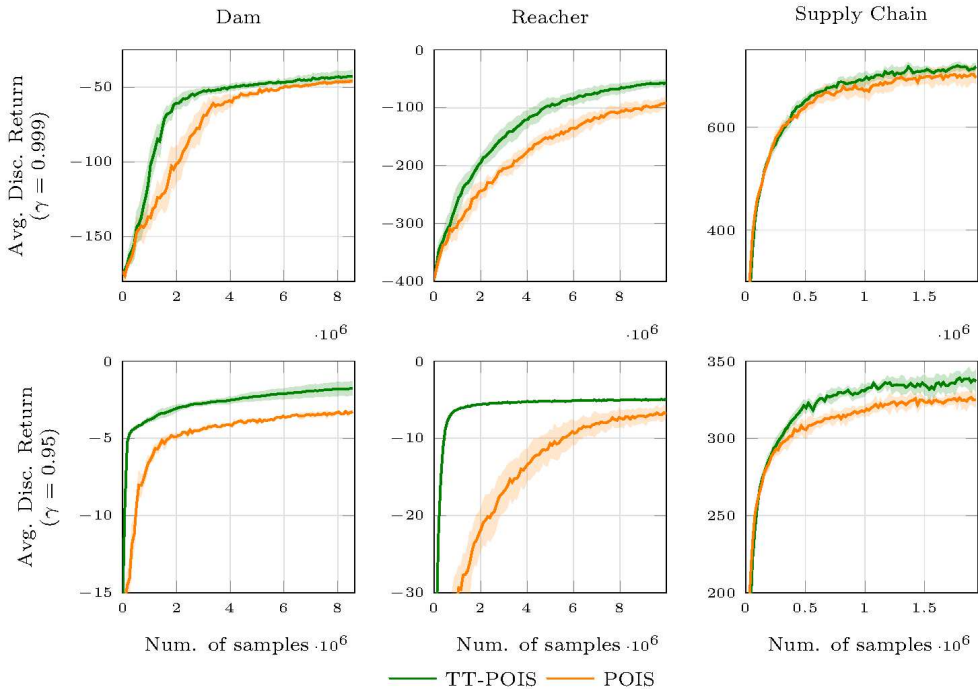} 
\caption{Experimental results (mean and $95\%$ confidence intervals of $5$ runs). The first row (resp. second row) reports average returns with $\gamma = 0.999$ (resp. $\gamma=0.95$).} 
\label{fig:mainresults}
\end{figure*}

In this section, we numerically validate our approach by targeting the comparison between \pois and \ttpois across multiple domains and varying both the discount factor and the available budget. In all experiments, we first tuned the hyper-parameters with \pois, and then, we applied \ttpois to the best hyper-parameter configuration of \pois. We now present our experimental domains, followed by a discussion of the results. Further details and additional experiments are deferred to Appendix \ref{app:exp}.


\paragraph{Dam Control}
In our first experimental domain, we consider a water resource management scenario \citep{castelletti2010tree,parisi2014policy,tirinzoni2018importance,liotet2022lifelong}. The goal of the agent is to learn a water release policy that trades off between some external demand $D$ (e.g., the needs of a town) and keeping the water level below a flooding threshold $F$. The dam is subject to an external and stochastic net inflow, that, each day,  determines the amount of additional water $i_t$ that will be stored (e.g., rain). More specifically, this inflow profile has a periodic shape defined over a period of one year; the demand, instead, is kept constant. The state of the system $s_t$ evolves according to a simple mass balance principle, namely $s_{t+1} = \max\{s_t - a_t + i_t, 0\}$, where $a_t$ is the amount of water that the agent intends to release at day $t$. The reward $R(s_t, a_t)$ is a convex combination of the two aforementioned objectives: $-c_1 \max\{0, s_t - F \}$ (i.e., flooding control) and $-c_2 \max\{0, D - a_t \}^2$ (i.e., meeting the demand), where $c_1, c_2 > 0$ are domain-dependent constants. 

\paragraph{Reacher}
In the second experiment, we consider the standard continuous control problem of a two-jointed robot arm \citep{mujoco}, whose goal is to move the robot's end effector close to a target spawned in a random position. The reward is a combination of a control cost together with a penalization for the end effector being far from the goal. 

\paragraph{Multi-Echelon Supply Chain}
Finally, we consider the problem of managing the complex inventory of a $4$ stage supply chain \citep{hubbs2020or}. During each day, the agent needs to decide, for each stage, how many products it should order from its supplier (i.e., the previous stage). Once the goods have been ordered at stage $i$, it takes a given amount of days $t_i$ (i.e., lead time of stage $i$) so that they are shipped and delivered to stage $i+1$. Goods at the last stage are sold according to some stochastic demand. If the retailer fails to meet the demand, lost orders are backlogged, meaning they are fulfilled later but for lower profit. The agent's key challenge is trading off the uncertainty of the demand (i.e., profit) with the costs incurred for storing products in the inventory. A mathematical description of the problem can be found in \citet{hubbs2020or}.

\paragraph{Results}
Figure~\ref{fig:mainresults} reports the average discounted return on the considered domains (mean and $95\%$ confidence intervals of $5$ runs) varying the discount factor $\gamma$. More specifically, the first row has been obtained with $\gamma=0.999$, while the second one with $\gamma=0.95$ (experiments with additional values of $\gamma$ can be found in Appendix \ref{app:additional-opt-results}). The considered budget per iteration is $\Lambda=8640$ for the Dam environment, $\Lambda=3900$ for the Supply Chain, and $\Lambda=8000$ for the Reacher. As we can notice, independently of the value of $\gamma$, \ttpois always performs better w.r.t. its original version \pois. It is worth noting what happens to the training curves when we change the value of $\gamma$. In these scenarios, as previously discussed, the dissimilarity between our non-uniform DCS $\tilde{\bm{m}}^*$ and the uniform one increases. Indeed, $\tilde{\bm{m}}^*$ will allocate a larger portion of the budget $\Lambda$ to the initial interaction steps. This, in turn, implies a larger difference in the confidence intervals of the surrogate objective function. Consequently, as soon as we decrease $\gamma$, we observe a larger performance improvement in each of the domains. This is consistent with our theory.  Due to the tighter confidence bounds, the agent is able to make better use of the collected data and takes larger update steps.



For what concerns experiments in which the budget $\Lambda$ changes, we report the results in Appendix \ref{app:additional-opt-results} for space constraints. 
However, we highlight that the behavior of the algorithms does not display significant differences. \ttpois always performs better than \pois, and what has been previously highlighted for Figure \ref{fig:mainresults} replicates consistently. 


\section{Conclusions and Future Works}\label{sec:disc}
In this work, we focused on how to allocate the interaction budget $\Lambda$ in Monte Carlo Reinforcement Learning. We started by building on the intuition that the common uniform-in-the-horizon strategy might not be the best option when discounted rewards are considered. To study the problem from a theoretical perspective, we introduced the novel concept of Data Collection Strategy (DCS), and we investigated alternative non-uniform solutions from the worst-case robust viewpoint of confidence intervals. More specifically, starting from the on-policy evaluation problem, we showed that, to minimize confidence intervals around the estimated expected return of a policy, non-uniform DCSs, which we provide in closed-form, represent a more appropriate solution, thus confirming our initial intuition. After theoretically analyzing the benefit of the proposed DCS from a PAC perspective, we further extended our reasoning to off-policy evaluation problems by generalizing the confidence bounds of \citet{metelli2018policy}, that are directly employed in the surrogate loss function that a recent algorithm, i.e., \pois \citep{metelli2018policy}, optimizes for. We then proposed an extension of \pois, \ttpois, that makes use of our optimized budget allocation, and we verified that it leads to performance improvements among multiple domains, and for different values of $\gamma$ and $\Lambda$.

We conclude by remarking that our work roots down to a main component of RL algorithms; i.e., the interaction with the environment. More specifically, we showed that it is possible to find principled strategies that optimize the \emph{collection} of the experience with the domain at hand. Our work, in this sense, does not close the problem but takes a first step toward this direction, thus paving the way for several exciting future works.

For example, while in this work we took a robust approach and derived a fixed data collection strategy that minimizes confidence intervals around the target return, other choices are also possible. In this sense, a complementary direction w.r.t. to the one we followed would be to find a dynamic DCS that performs \emph{online} minimization of the MSE of some estimator while interacting with the environment. This could be possible by integrating our approach and analysis with confidence intervals that rely on empirical quantities \citep[e.g.,][]{maurer2009empirical}, and, consequently, by designing strategies that aim at minimizing the MSE in an online fashion. Moreover, more effective strategies could be derived when restricting to specific subclasses of problems (e.g., goal-based) and leveraging their problem-dependent features.


Finally, we notice that our approach is based on Monte Carlo data collection and, therefore, does not deeply exploit the Markovian properties of the underlying MDP. Empowering the proposed methods with TD approaches that take into consideration value functions available in actor-critic algorithms \citep[e.g.,][]{schulman2017proximal} may lead to further improvements in the performance of policy-search algorithms.




\bibliography{biblio}
\bibliographystyle{icml2023}

\newpage
\appendix
\onecolumn

\section{Related Works}\label{app:rel-works}
Before diving into the details of the proofs, we provide a more in-depth discussion of previous works that are linked to ours.

In recent years, there has been a tremendous amount of interest in developing and improving policy search algorithms \citep[e.g.,][]{williams1992simple,baxter2001infinite,lillicrap2015continuous,schulman2015trust,schulman2017proximal,metelli2018policy,cobbe2021phasic}. Most of these methods interleave the two following phases during the training process: first, a batch of trajectories of fixed length is collected by interacting with the environment via Monte Carlo simulation, and, second, these data are used to update the parameters of the policy. Motivated by this setting, we study whether it is possible to optimize Monte Carlo simulations, by exploiting the reset possibility available in many real-world simulators.


To tackle this budget allocation problem, we started with analyzing the problem of estimating, with finite budget $\Lambda$, the performance of a given policy via Monte Carlo simulation \cite{kalos2009monte,owen2013monte}. We started with the on-policy setting by minimizing generalizations of the H\"{o}effding confidence intervals \citep{boucheron2003concentration} around the estimated return. Then, we extended the reasoning to the more intricated off-policy problem, through the lens of importance sampling \citep{hesterberg1988advances,owen2013monte}, and by building on top of the recent confidence intervals of \citet{metelli2018policy}. We remark that, in this sense, relevant works that are linked to ours can be found in, e.g., \citet{thomas2015high,thomas2016data,dai2020coindice,revar2022}. However, it has to be noticed that in previous studies, the focus was on different aspects of the estimation problem (e.g., building a policy that minimizes the variance of estimation for the return of a target policy), while we focus purely on the interaction within the environment. 

Given our analysis on the estimation problem, we propose to adopt our optimized data collection strategy, together with our optimized confidence intervals, to extend \pois \citep{metelli2018policy}, a recent off-policy optimization algorithm that relies on Monte Carlo simulation. More specifically, in \citet{metelli2018policy} the authors developed a principled off-policy method that directly models the uncertainty in the update step via controlling (upper bounds on) the variance of importance weights. In this sense, \pois defines an original concept of trust-region that constraints the target policy to be close to the behavioral one, thus limiting high-variance estimations problems that typically affect off-policy methods \citep{owen2013monte}. We notice that this concept of controlling the dissimilarity between the current policy and the next one is at the core of many well-known policy search methods \citep[e.g.,][]{schulman2015trust,schulman2017proximal}. Compared to this line of our work, we improve POIS by optimizing the dependence on the number of collected data in $\mathcal{L}(\bm{\bar{\theta}}/\bm{\theta})$. In this sense, we remark that, while we minimized confidence intervals around the empirical off-policy return (i.e., the POIS adaptive trust-region), other choices might better fit other algorithms: extending well-knowns algorithm such as TRPO and PPO with trajectory truncation mechanisms represents an exciting line for future works.

We also notice that several works have considered the problem of optimizing policy search methods by setting hyper-parameters, such as the learning rate \citep{pirotta2013adaptive}, the batch size \citep{papini2017adaptive}, or the amount of policy exploration \citep{papini2020balancing}, in a theoretically principled way. Compared to this line of work, in this paper, we are considering a \emph{novel type of hyper-parameter}, which is the budget that is spent by the agent to interact with the environment. To this end, we develop a principled theory that leads to collect truncated trajectories. This concept can be related to the recent strand of model-based policy optimization literature that simulates short trajectories in an estimated model of the considered domain to update the parameters of the policy \citep{janner2019trust,nguyen2018improving,bhatia2022adaptive}. Similar ideas on shortened/adaptive horizon have also arisen in the fields of multi-task reinforcement learning \citep{farahmand2016truncated} and imitation learning \citep{sun2018truncated}. However, in all these works, the motivation, the idea, the method, and the analysis completely differ. More specifically, we develop a theory that optimizes the interaction with the environment by exploiting the structure of the RL return. In this sense, our work is complementary to approaches that evolve the discount factor online to form a sort of curriculum \citep{franccois2015discount}. In particular, we notice that our method could be integrated into these approaches, by evolving the DCS based on the current value of the discount factor. Finally, most recently, \citet{poiani2022multifidelity} have introduced the idea of cutting trajectories while interacting with the environment to obtain a biased estimate of the return in planning algorithms such as depth-first search. This sort of idea was presented as an \emph{application} in the context of multi-fidelity bandits \citep[e.g.,][]{kandasamy2016multi,kandasamy2019multi}, where the crucial trade-off is between the introduced bias (i.e., the maximum error due to cutting the search at a given depth) and the cost of acquiring a given sample (i.e., the number of nodes generated by the algorithm). In this paper, we take a similar perspective, but we develop an approach that is tailored to Monte Carlo Reinforcement Learning settings.

\section{Proof and Derivations}\label{app:proofs}
In this Section, we provide proofs and derivations for all our theoretical claims. More specifically, Section \ref{app:proofs-on-pol} provides results for on-policy evaluation, Section \ref{app:proofs-off-pol} extends these results to the off-policy evaluation setting and Section \ref{app:proofs-further} provides further theoretical analysis.

\subsection{On-Policy Results}\label{app:proofs-on-pol}

We begin by proving that Equation \eqref{eq:onpolicy-dcs-est} is an unbiased estimate for $J(\bm{\theta})$.
\begin{theorem}\label{theo:onpolicy-unbias}
Consider an optimization budget $\Lambda \ge T$ and a DCS such that $m_T \ge 1$. Consider a policy $\pi_{\bm{\theta}} \in \Pi_{\Theta}$. Then:
\begin{align}
\mathop{\E}_{p_{\bm{m}}(\cdot|\bm{\theta})} \left[ \hat{J}_{\bm{m}}(\bm{\theta}) \right] = J(\bm{{\theta}}).
\end{align}
\end{theorem}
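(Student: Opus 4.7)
The plan is to show unbiasedness by linearity of expectation, pushing the expectation through the three nested sums in \eqref{eq:onpolicy-dcs-est} and then swapping the order of summation so that the normalization by $n_t$ cancels exactly.

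First I would observe that the condition $m_T \ge 1$ guarantees $n_t = \sum_{h=t+1}^{T} m_h \ge m_T \ge 1$ for every $t \in \{0,\dots,T-1\}$, so every denominator in the estimator is strictly positive and $\hat{J}_{\bm{m}}(\bm{\theta})$ is well-defined. Then, by linearity,
\begin{align*}
\mathop{\E}_{p_{\bm{m}}(\cdot|\bm{\theta})}\!\left[\hat{J}_{\bm{m}}(\bm{\theta})\right] = \sum_{h=1}^{T}\sum_{i=1}^{m_h}\sum_{t=0}^{h-1}\gamma^t\frac{\E\!\left[R(s_t^{(i)},a_t^{(i)})\right]}{n_t}.
\end{align*}
The key structural fact is that for any trajectory $\bm{\tau}_h^{(i)} \sim p(\cdot|\bm{\theta},h)$ and any $t < h$, the marginal distribution of $(s_t^{(i)}, a_t^{(i)})$ coincides with the marginal of $(s_t, a_t)$ under $\pi_{\bm{\theta}}$ in the underlying MDP; in particular it depends only on $t$ and not on $h$ or $i$. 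Denoting $r_t \coloneqq \E_{\pi_{\bm{\theta}}}[R(s_t,a_t)]$, the display collapses to $\sum_{h=1}^{T}\sum_{i=1}^{m_h}\sum_{t=0}^{h-1}\gamma^t r_t/n_t$.

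Now I would swap the order of summation so that the outer sum is over $t$: the triple sum ranges over $\{(h,i,t) : 1 \le h \le T,\, 1 \le i \le m_h,\, 0 \le t \le h-1\}$, which is the same as $\{(t,h,i) : 0 \le t \le T-1,\, t+1 \le h \le T,\, 1 \le i \le m_h\}$. For each fixed $t$ the number of $(h,i)$ pairs is exactly $\sum_{h=t+1}^{T} m_h = n_t$. Hence
\begin{align*}
\mathop{\E}_{p_{\bm{m}}(\cdot|\bm{\theta})}\!\left[\hat{J}_{\bm{m}}(\bm{\theta})\right] = \sum_{t=0}^{T-1}\gamma^t \frac{r_t}{n_t}\cdot n_t = \sum_{t=0}^{T-1}\gamma^t\, \E_{\pi_{\bm{\theta}}}\!\left[R(s_t,a_t)\right] = J(\bm{\theta}),
\end{align*}
where the last equality is linearity of expectation applied to $G(\bm{\tau}_T)$.

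There is no real obstacle here; the only subtlety is the combinatorial swap, and the reason for dividing by $n_t$ in the definition of the estimator is precisely to cancel the multiplicity $n_t$ that appears after swapping. The assumption $m_T \ge 1$ is used only to ensure all $n_t$ are positive.
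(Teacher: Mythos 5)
Your proof is correct and follows essentially the same route as the paper's: both push the expectation through the sums, reduce each reward to its marginal expectation $r_t$ (which depends only on $t$, not on the trajectory length $h$), swap the order of summation, and use the identity $\sum_{h=t+1}^{T} m_h = n_t$ to cancel the normalization. The only cosmetic difference is that you make the combinatorial reindexing and the marginal-distribution argument slightly more explicit than the paper does.
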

\begin{proof}
Let $r_{t,\bm{\theta}}$ be the expected $t$-th reward under policy $\pi_{\bm{\theta}}$. It is easy to verify that:
\begin{align*}
\mathop{\E}_{p_{\bm{m}}(\cdot|\bm{\theta})} \left[ \hat{J}_{\bm{m}}(\bm{\theta}) \right] & = \sum_{h=1}^T m_h \mathop{\E}_{p_{\bm{m}}(\cdot|\bm{\theta}, h)} \left[ \sum_{t=0}^{h-1} \gamma^t \frac{R(s_t, a_t)}{n_t} \right] = \sum_{h=1}^{T} m_h \sum_{t=0}^{h-1} \gamma^t \frac{r_{t,\bm{\theta}}}{n_t}
\end{align*}

At this point, unrolling the summation, we notice that, fixed $\bar{t} \in \{0, \dots, T-1 \}$ (i.e., the inner summation), its contribution appears in all $h \in \{1, \dots, T\}$ (i..e, outer summation) such that $h > \bar{t}$. Moreover, since $m_T \ge 1$, all $t \in \{0, \dots, T-1 \}$ appears at least once. Therefore:
\begin{align*}
\sum_{h=1}^T m_h \sum_{t=0}^{h-1} \gamma^t \frac{r_{t,\bm{{\theta}}}}{n_t} = \sum_{t=0}^{T-1} \gamma^t \frac{r_{t,\bm{{\theta}}}}{n_t} \sum_{h=t+1}^{T} m_h.
\end{align*}
However, given the relationship between $\bm{n}$ and $\bm{m}$, we have that:
\begin{align*}
\sum_{h=t+1}^{T} m_h = n_t - n_{t+1} + n_{t+1} - n_{t-2} + \dots + n_{T-2} - n_{T-1} + n_{T-1} = n_t.
\end{align*}
Therefore:
\begin{align*}
\sum_{t=0}^{T-1} \gamma^t \frac{r_{t,\bm{{\theta}}}}{n_t} \sum_{h=t+1}^{T} m_h = \sum_{t=0}^{T-1} \gamma^t r_{t,\bm{{\theta}}} = J(\bm{{\theta}}),
\end{align*}
which concludes the proof. 

\end{proof}

We now continue with a key technical Lemma that will be crucial to derive our generalizations of the H\"{o}effding confidence intervals.

\begin{lemma}\label{lemma:technical-lemma}
Consider an arbitrary DCS $\bm{m}$ such that $m_T \ge 1$. Then:
\begin{align}
\sum_{h=1}^{T} m_h \left( \sum_{t=0}^{h-1} \frac{\gamma_t}{n_t} \right)^2 = \sum_{t=0}^{T-1} \frac{c_t}{n_t}
\end{align}
where $c_t = \frac{\gamma^t (\gamma^t + \gamma^{t+1} - 2 \gamma^T)}{1-\gamma}$.
\end{lemma}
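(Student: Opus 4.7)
My plan is to reduce the quadratic expression on the left to a linear one in $1/n_t$ by swapping the order of summation and exploiting the monotonicity of $n_t$ (which follows from the DCS/$\bm n$ relationship used in Theorem~\ref{theo:onpolicy-unbias}).

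First, I expand the square to write the left-hand side as
\begin{align*}
\sum_{h=1}^T m_h \sum_{t=0}^{h-1}\sum_{s=0}^{h-1} \frac{\gamma^{t+s}}{n_t n_s}.
\end{align*}
Then I swap summations so that $t,s$ range over $\{0,\dots,T-1\}$ and $h$ ranges over $\{\max(t,s)+1,\dots,T\}$. Using the identity $\sum_{h=t+1}^T m_h = n_t$ (already established in the unbiasedness proof, Theorem~\ref{theo:onpolicy-unbias}), the inner sum collapses to $n_{\max(t,s)}$. The key observation is that, since $n_t$ is non-increasing in $t$, we have $n_{\max(t,s)} = \min(n_t, n_s)$, so every $(t,s)$ term simplifies to $\gamma^{t+s}/n_{\min(t,s)}$ — this is the crucial cancellation that turns the double-sum in $1/(n_t n_s)$ into a single sum in $1/n_t$.

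Next, I split the double sum into the diagonal $t=s$ plus twice the strict off-diagonal $t<s$, obtaining
\begin{align*}
\sum_{t=0}^{T-1} \frac{\gamma^{2t}}{n_t} + 2\sum_{t=0}^{T-1}\frac{1}{n_t}\sum_{s=t+1}^{T-1}\gamma^{t+s}.
\end{align*}
The inner geometric series evaluates to $(\gamma^{2t+1}-\gamma^{T+t})/(1-\gamma)$. Collecting all contributions over a common denominator $(1-\gamma)$ and noting the algebraic identity
\begin{align*}
\gamma^{2t}(1-\gamma) + 2\gamma^{2t+1} - 2\gamma^{T+t} = \gamma^t\bigl(\gamma^t + \gamma^{t+1} - 2\gamma^T\bigr),
\end{align*}
one recognizes the numerator as $c_t\,(1-\gamma)\cdot(1-\gamma)^{-1}\cdot\gamma^0$, i.e.\ precisely $c_t(1-\gamma)$, yielding $\sum_t c_t/n_t$ as required.

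The main technical point is the collapse $n_{\max(t,s)}/(n_t n_s) = 1/n_{\min(t,s)}$, which is what makes the identity work and which is, in fact, the reason Hoeffding-type bounds on $\hat J_{\bm m}$ (whose natural form involves $\sum_h m_h \phi_h^2$) can be rewritten as the cleaner $\sum_t c_t/n_t$ appearing in Proposition~\ref{prop:hoeffding-general}. I do not anticipate real obstacles beyond bookkeeping with the geometric sum; the only care needed is to include the boundary cases $h=T$ and $t=T-1$ (which is where the $-2\gamma^T$ term in $c_t$ comes from) and to check that the step relies only on $m_T\ge 1$ implicitly (through $n_t\ge 1$ for all $t$, making all divisions well-defined).
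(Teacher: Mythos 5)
Your proof is correct and follows essentially the same route as the paper's: expand the square, swap the order of summation, use $\sum_{h=u+1}^{T} m_h = n_u$ to cancel one factor of the denominator, and evaluate the remaining geometric series to assemble $c_t$. One cosmetic remark: the cancellation $n_{\max(t,s)}/(n_t n_s) = 1/n_{\min(t,s)}$ does not actually require monotonicity of $\bm{n}$ — it holds simply because $n_{\max(t,s)}$ is literally one of the two factors in the denominator — so that invocation is superfluous, though harmless.
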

\begin{proof}
Consider:

\begin{align}\label{eq:tech-lemma-eq1}
\sum_{h=1}^{T} m_h \left( \sum_{t=0}^{h-1} \frac{\gamma^t}{n_t} \right)^2 = \sum_{h=1}^T m_h \left(\sum_{t=0}^{h-1} \frac{\gamma^{2t}}{n_t^2} + \sum_{t=0}^{h-2} \sum_{t'=t+1}^{h-1} \frac{2\gamma^{t+t'}}{n_t n_{t'}}  \right).
\end{align}

Then, focus on the first component, namely, $\sum_{h=1}^T m_h \sum_{t=0}^{h-1} \frac{\gamma^{2t}}{n_t^2}$. By unrolling the summations, we notice that, since $m_T \ge 1$, fixed $\bar{t} \in \{0,\dots,T-1\}$ each component $\frac{\gamma^{2t}}{n_t^2}$ appears for all $h$ such that $h>t$. Therefore:

\begin{align}\label{eq:tech-lemma-eq2}
\sum_{h=1}^T m_h \sum_{t=0}^{h-1} \frac{\gamma^{2t}}{n_t^2} = \sum_{t=0}^{T-1} \left( \frac{\gamma^{2t}}{n_t^2} \sum_{h=t+1}^{T} m_h \right) = \sum_{t=0}^{T-1} \frac{\gamma^{2t}}{n_t},
\end{align}

where in the last passage, we have used $\sum_{h=t+1}^{T} m_h = n_t$, which directly follow by the relationship between $\bm{n}$ and $\bm{m}$ and the fact that $m_T \ge 1$.

Then, consider the second part of Equation \eqref{eq:tech-lemma-eq1}, namely $\sum_{h=1}^T m_h \left( \sum_{t=0}^{h-2} \sum_{t'=t+1}^{h-1} \frac{2\gamma^{t+t'}}{n_t n_{t'}} \right)$. By unrolling the  summations, we notice that fixed the outer index of the inner summation, i.e., $t=\bar{t} \in \{0, \dots, T-2 \}$, its contribution will appear only for $h > t+1$, thus leading to:

\begin{align*}
\sum_{h=1}^T m_h \left( \sum_{t=0}^{h-2} \sum_{t'=t+1}^{h-1} \frac{2\gamma^{t+t'}}{n_t n_{t'}} \right) = \sum_{t=0}^{T-2} \sum_{h=t+2}^{T} m_h \sum_{t'=t+1}^{h-1} \frac{2\gamma^{t+t'}}{n_t n_{t'}}.
\end{align*}

At this point, fix $\bar{t} \in \{0, \dots, T-2 \}$ and consider $\sum_{h=\bar{t}+2}^{T} m_h \sum_{t'=\bar{t}+1}^{h-1} \frac{2\gamma^{\bar{t}+t'}}{n_{\bar{t}} n_{t'}}$. Unrolling the summation, we notice that a given term $t'$ appears only for $h > t'$. Therefore:
 
\begin{align}\label{eq:tech-lemma-eq3}
\sum_{h=\bar{t}+2}^{T} m_h \sum_{t'=\bar{t}+1}^{h-1} \frac{2\gamma^{\bar{t}+t'}}{n_{\bar{t}} n_t'} = \sum_{t'=\bar{t}+1}^{T-1} \frac{2\gamma^{\bar{t}+t'}}{n_{\bar{t}} n_{t'}} \left( \sum_{h=t'+1}^{T} m_h \right) = \sum_{t'=\bar{t}+1}^{T-1} \frac{2\gamma^{\bar{t}+t'}}{n_{\bar{t}}} .
\end{align}
 
Using Equations \eqref{eq:tech-lemma-eq2} and \eqref{eq:tech-lemma-eq3} in Equation \eqref{eq:tech-lemma-eq1}, we have that:

\begin{align*}
\sum_{h=1}^{T} m_h \left( \sum_{t=0}^{h-1} \frac{\gamma^t}{n_t}  \right)^2 & = \sum_{t=0}^{T-1} \frac{\gamma^{2t}}{n_t} + \sum_{t=0}^{T-2} \sum_{t'=t+1}^{T-1} \frac{2\gamma^{t'+t}}{n_t}  \\ & = \sum_{t=0}^{T-1} \frac{\gamma^{2t}}{n_t}  + \sum_{t=0}^{T-2} \frac{2\gamma^t}{n_t} \sum_{t'=t+1}^{T-1} \gamma^{t'} \\ & = \sum_{t=0}^{T-1} \frac{\gamma^{2t}}{n_t} + \sum_{t=0}^{T-2} \frac{2\gamma^t}{n_t}  \left( \frac{\gamma^{t+1}-\gamma^T}{1-\gamma} \right) \\ & = \sum_{t=0}^{T-1} \frac{\gamma^{2t}}{n_t} + \sum_{t=0}^{T-1} \frac{2\gamma^t}{n_t}  \left( \frac{\gamma^{t+1}-\gamma^T}{1-\gamma} \right) \\ & = \sum_{t=0}^{T-1} \frac{\gamma^t (\gamma^t + \gamma^{t+1} - 2\gamma^T)}{1-\gamma} \cdot \frac{1}{n_t} \\ & = \sum_{t=0}^{T-1} \frac{c_t}{n_t},
\end{align*} 
 
which concludes the proof.
\end{proof}

At this point, we are ready to prove Theorem \ref{prop:hoeffding-general}. We first report, for completeness, the H\"{o}effding's inequality for the sum of subgaussian random variables. 

\begin{lemma}\label{lemma:hoeffding}
Let $X_1, \dots, X_n$ be independent sub-gaussian r.v. with mean $\mu_1, \dots, \mu_n$ and subgaussianity parameters $\sigma_1^2, \dots, \sigma_n^2$, respectively. Let $\bar{\mu}_n \coloneqq \sum_{i=1}^n \mu_i$ and $\hat{\mu}_n \coloneqq \sum_{i=1}^n X_i$. Then, $\forall \epsilon > 0$, it holds that:

\begin{align*}
    \mathbb{P}(|\hat{\mu}_n - \bar{\mu}_n| > \epsilon) \le 2 \exp{\left( -\frac{\epsilon^2}{2 \sum_{i=1}^{n} \sigma_i^2} \right)}.
\end{align*}
\end{lemma}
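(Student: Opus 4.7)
The plan is to prove Lemma~\ref{lemma:hoeffding} by the classical Chernoff/MGF--tensorization argument. First I would center the variables by setting $Y_i \coloneqq X_i - \mu_i$, so that $Y_1, \dots, Y_n$ are independent, zero-mean, and sub-gaussian with parameters $\sigma_i^2$, i.e.\ $\mathbb{E}[e^{\lambda Y_i}] \le \exp(\lambda^2 \sigma_i^2 / 2)$ for every $\lambda \in \mathbb{R}$ (this is the standard reading of the hypothesis of the lemma). Writing $S_n \coloneqq \hat{\mu}_n - \bar{\mu}_n = \sum_{i=1}^n Y_i$, the target inequality reduces to a two-sided tail bound on $S_n$.

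Next I would handle the upper tail $\mathbb{P}(S_n > \epsilon)$ by the Chernoff method: for any $\lambda > 0$, Markov's inequality applied to $e^{\lambda S_n}$ yields $\mathbb{P}(S_n > \epsilon) \le e^{-\lambda \epsilon}\, \mathbb{E}[e^{\lambda S_n}]$. Independence of the $Y_i$ makes the MGF tensorize, and applying the sub-gaussian bound termwise gives $\mathbb{E}[e^{\lambda S_n}] \le \exp\bigl(\tfrac{\lambda^2}{2} \sum_{i=1}^n \sigma_i^2\bigr)$. The resulting exponent $-\lambda \epsilon + \tfrac{\lambda^2}{2} \sum_i \sigma_i^2$ is a convex quadratic in $\lambda$, minimized at $\lambda^\star = \epsilon / \sum_i \sigma_i^2$; substituting back produces the one-sided Gaussian-type bound $\exp\bigl(-\epsilon^2 / (2 \sum_i \sigma_i^2)\bigr)$.

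The final step is to obtain the matching lower tail by running the identical argument on $-S_n$, exploiting that $-Y_i$ is sub-gaussian with the same parameter $\sigma_i^2$ (the MGF bound is symmetric in $\lambda$), and then combining the two one-sided estimates $\mathbb{P}(S_n > \epsilon)$ and $\mathbb{P}(-S_n > \epsilon)$ by a union bound, which is exactly what produces the prefactor $2$ in the statement.

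There is no genuine obstacle here: this is the canonical textbook derivation of Hoeffding's inequality for sub-gaussian summands (see, e.g., Boucheron, Lugosi, and Massart, as cited in the paper). The only mild subtlety is fixing the convention so that the sub-gaussian MGF bound holds at \emph{both} signs of $\lambda$, which is needed to reuse the estimate on $-S_n$; under the standard convention this is immediate, and the whole proof fits on a few lines. In practice, I would either write it out as above or simply replace it by a citation to a standard reference, since Lemma~\ref{lemma:hoeffding} is used in this paper only as a black box to derive the DCS-dependent concentration inequality of Proposition~\ref{prop:hoeffding-general}.
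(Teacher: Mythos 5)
Your proposal is correct: the centering, Chernoff/MGF tensorization, optimization at $\lambda^\star = \epsilon/\sum_i \sigma_i^2$, and union bound over the two tails is exactly the standard derivation. The paper itself states Lemma~\ref{lemma:hoeffding} without proof, citing it as a known result from the concentration-inequalities literature, so your argument coincides with the canonical proof the paper implicitly relies on.
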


\hoeffdinggeneral*
\begin{proof}
First of all, we notice that rewards are bounded in $[0,1]$. It follows that, given a trajectory of length $h$, $\sum_{t=0}^{h-1} \gamma^t \frac{R(s_t,a_t)}{n_t}$ is a subgaussian r.v. with subgaussianity parameter $\sigma^2_h = \frac{1}{4}\left( \sum_{t=0}^{h-1} \frac{\gamma^t}{n_t} \right)^2$.\footnote{We recall that a r.v. with bounded support over $[a, b]$ ($b > a$) is sub-gaussian with scale given by $\frac{(b-a)^2}{4}$} It follows that we can treat $\hat{J}_{\bm{m}}(\bm{\theta})$ as a sum of random variables with expected value $J({\bm{{\theta}}})$. Therefore, we can apply Lemma \ref{lemma:hoeffding} with $\epsilon = \sqrt{2 \sum_{h=1}^T m_h \sigma_h^2  \log(2/\delta)}$, obtaining that, with probability at least $1-\delta$:
\begin{align*}
|J_{\bm{m}}(\bm{\theta}) - J(\bm{\theta})| \le \sqrt{2 \sum_{h=1}^T m_h \sigma^2_h \log(2/\delta)} = \sqrt{\frac{1}{2} \sum_{h=1}^T m_h \left( \sum_{t=0}^{h-1} \frac{\gamma^t}{n_t} \right)^2 \log(2/\delta)} 
\end{align*}
The result, then, follow by combining the previous Equation with Lemma \ref{lemma:technical-lemma}.
\end{proof}

At this point, our focus shifts toward finding our approximately optimal DCS $\bm{\tilde{m}}^*$. We will derive our results for the general any-budget case, after which Theorem \ref{theo:onpolicy-sol} will follow as a special case. Our proofs follow by combining a closed form solution of a relaxation of \eqref{sys:onpolicy-hard}, where we drop the integer constraints on $n_t$, and some integrality gap arguments. More specifically, we are interested in the following relaxation of \eqref{sys:onpolicy-hard}:
\begin{equation}\label{sys:onpolicy-easy-orig}
\begin{aligned}
\min_{\bm{n}} \quad & \sqrt{\frac{1}{2} \log(2/\delta) \sum_{t=0}^{T-1} \frac{c_t}{n_t}}\\
\textrm{s.t.} \quad &  \sum_{t=0}^{T-1} n_t = \Lambda \\
  & n_t \ge n_{t+1}, \quad \forall t \in \{0, \dots, T-2\} \\
  & n_t \ge 1,  \quad \forall t \in \{0, \dots, T-1\}
\end{aligned}
\end{equation}
where the only difference stands in the fact that $n_t \in \mathbb{N}_+$ has now been replaced with $n_t \ge 1$. 
For this reason, we first present a simplified version of \eqref{sys:onpolicy-easy-orig}, that preserves the optimal solution. 

\begin{lemma}\label{lemma:convex-equivalence}
Consider an optimization $\Lambda \ge T$. The convex relaxation of the optimization problem \eqref{sys:onpolicy-hard} can be written as:
\begin{equation}\label{sys:onpolicy-easy}
\begin{aligned}
\min_{\bm{n}} \quad & \sum_{t=0}^{T-1} \frac{c_t}{n_t}\\
\textrm{s.t.} \quad &  \sum_{t=0}^{T-1} n_t = \Lambda \\
  & n_t \ge 1, \quad \forall t \in \{0, \dots, T-1\}  
\end{aligned}
\end{equation}
where $\bm{n} = (n_0, \dots, n_{T-1})$ and $c_t = \gamma^t (\gamma^t + \gamma^{t+1} - 2\gamma^T)$ Furthermore, the optimization problem \eqref{sys:onpolicy-easy} is convex in $\bm{n}$.
\end{lemma}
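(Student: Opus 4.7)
The plan is to obtain \eqref{sys:onpolicy-easy} from \eqref{sys:onpolicy-hard} in two conceptual steps: first relax the integrality, and then simplify using the problem structure. After dropping the constraint $n_t \in \mathbb{N}_+$ in \eqref{sys:onpolicy-hard}, I would argue that the objective can be replaced by $\sum_{t=0}^{T-1} c_t/n_t$ without altering the argmin. This follows because (i) the square root is strictly increasing on the nonnegative reals, (ii) the multiplicative constant $\tfrac{1}{2}\log(2/\delta)$ is strictly positive, and (iii) the $\tfrac{1}{1-\gamma}$ factor common to all $c_t$'s (in the version of $c_t$ appearing in Proposition~\ref{prop:hoeffding-general}) is likewise a positive constant that can be pulled out, so the redefined coefficients $c_t = \gamma^t(\gamma^t+\gamma^{t+1}-2\gamma^T)$ used in the statement are equivalent for optimization purposes. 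None of these transformations affects the feasible set.

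The nontrivial step is to show that the monotonicity constraints $n_t \ge n_{t+1}$ can be dropped without changing the optimum, leaving only $n_t\ge 1$ and $\sum_t n_t = \Lambda$. The key observation is that the coefficients $c_t$ are strictly decreasing in $t$: writing $c_t = \gamma^t(\gamma^t+\gamma^{t+1}-2\gamma^T)$, both $\gamma^t$ and the parenthetical factor are nonincreasing in $t$ on $\{0,\dots,T-1\}$ since $\gamma\in(0,1)$. I would then invoke a rearrangement/swap argument: for any feasible $\bm{n}$ of the simpler problem with $n_t < n_{t+1}$ for some $t$, swapping $n_t$ and $n_{t+1}$ preserves feasibility (the sum and the $\ge 1$ bounds are unchanged) and decreases the objective, because
\begin{align*}
\frac{c_t}{n_t}+\frac{c_{t+1}}{n_{t+1}} - \frac{c_t}{n_{t+1}} - \frac{c_{t+1}}{n_t} = (c_t - c_{t+1})\left(\frac{1}{n_t} - \frac{1}{n_{t+1}}\right) > 0.
\end{align*}
Iterating swaps yields a solution that satisfies $n_t\ge n_{t+1}$ for all $t$ without worsening the objective; hence the feasible regions of the two formulations share the same optima, and the programs are equivalent.

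Finally, I would verify convexity of \eqref{sys:onpolicy-easy}. The constraint set is the intersection of a hyperplane with halfspaces $\{n_t\ge 1\}$, hence convex. For the objective, each summand $n_t \mapsto c_t/n_t$ is convex on $\{n_t>0\}$ (second derivative $2c_t/n_t^3 \ge 0$ since $c_t\ge 0$), so the sum is convex on the feasible set. I do not anticipate any serious obstacle: the only place a subtlety arises is the rearrangement argument, but it is a direct consequence of the monotonicity of $c_t$ and the standard ``rearrangement-type'' identity displayed above.
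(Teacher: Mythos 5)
Your proposal is correct and follows essentially the same route as the paper's proof: strip the monotone square root and the positive multiplicative constants, drop the ordering constraints via a swap argument based on the strict monotonicity of $c_t$, and verify convexity through the diagonal Hessian with entries $2c_t/n_t^3$. Your explicit rearrangement identity $(c_t - c_{t+1})\left(\tfrac{1}{n_t} - \tfrac{1}{n_{t+1}}\right) > 0$ merely makes precise the swap step that the paper states informally.
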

\begin{proof}
First, we prove the equivalence of the objective function. We notice that since the square root is a monotic function, it does not affect the optimal solution. Moreover, $\log(2/\delta)$ can be seen as a constant and, therefore, it can be neglected from the objective function as well. Thus, the optimal solution of the problem is preserved.

Then, we prove the equivalence of the constraints. The only difference between \eqref{sys:onpolicy-easy} and \eqref{sys:onpolicy-easy-orig} lies in the fact that 
$n_t \ge n_{t+1}$ has been neglected from the formulation. Given the structure of the simplified objective function (i.e., $\sum_{t=0}^{T-1} \frac{c_t}{n_t}$), we notice that $n_t \ge n_{t+1}$ will always be satisfied for an optimal solution. Indeed, suppose that $n_t < n_{t+1}$ for some $t$; then, since $c_{t} > c_{t+1}$, we can always improve the value of the objective function by swapping $n_t$ with $n_{t+1}$. Therefore, it is possible to neglect these constraints given that $n_t \ge 1$. 

Finally, we conclude by proving the convexity of \eqref{sys:onpolicy-easy}. First of all, we notice that all constraints are linear (and, thus, convex). It remains to prove the convexity of the objective function, i.e., $\sum_{t=0}^{T-1} \frac{c_t}{n_t}$. We begin by remarking that $\sum_{t=0}^{T-1} \frac{c_t}{n_t}$ is infinitely differentiable over the domain $\left( \mathbb{R} - \{0\} \right)^T$. In this case, to prove the convexity it is sufficient to ensure that the Hessian matrix is positive semidefinite. More specifically, in our case, the Hessian matrix is a diagonal matrix where the $i$-th element of the diagonal is given by $\frac{2c_i}{n_i^3}$, which, thus, is positive definite. This concludes the proof.
\end{proof}

At this point, we derive a closed-form solution for \eqref{sys:onpolicy-easy-orig} by analyzing the KKT condition of \eqref{sys:onpolicy-easy} \citep{boyd2004convex}.

\begin{lemma}\label{lemma:op}
Consider an optimization budget $\Lambda > T$, and consider $h \in \{1, \dots, T \}$. Let $n_t(h) = 1$ for $t \ge h$ and $n_t(h) = \frac{\sqrt{c_t}}{\sum_{i=0}^{h-1} \sqrt{c_i}}(\Lambda -T +h)$ for $t < h$. The optimal value of the objective function of the convex relaxation of \eqref{sys:onpolicy-hard} can be computed as:

\begin{align}\label{eq:sol-with-min}
\min_{h \in \{1, \dots, T\}} \sqrt{\frac{1}{2} \log(2/\delta) \sum_{t=0}^{T-1} \frac{c_t}{n_t(h)} }
\end{align}
and $h$ is such that, for all $t \ge h$ it holds that:
\begin{align}
\Lambda - T + h \le \frac{\sum_{i=0}^{h-1} \sqrt{c_i}}{\sqrt{c_t}},
\end{align}
and, for all $t < h$:
\begin{align}
\Lambda - T + h > \frac{\sum_{i=0}^{h-1} \sqrt{c_i}}{\sqrt{c_t}}.
\end{align}

\end{lemma}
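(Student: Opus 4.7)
The plan is to treat \eqref{sys:onpolicy-easy} as a convex program (which it is, by Lemma \ref{lemma:convex-equivalence}) and to solve it via its KKT conditions. Introducing a multiplier $\mu \in \mathbb{R}$ for the budget equality $\sum_t n_t = \Lambda$ and $\lambda_t \ge 0$ for each lower bound $n_t \ge 1$, stationarity of the Lagrangian yields $c_t/n_t^2 = \mu - \lambda_t$, and complementary slackness then gives the dichotomy: either $\lambda_t = 0$, in which case $n_t = \sqrt{c_t/\mu}$; or $n_t = 1$, in which case $\lambda_t = \mu - c_t \ge 0$, i.e.\ $\mu \ge c_t$. Because the objective is strictly convex, any KKT-satisfying point is automatically the unique minimizer, so the remaining task is to solve this system.

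The next step is to argue that the active set $\{t : n_t = 1\}$ is a contiguous suffix of $\{0,\ldots,T-1\}$. A short algebraic check, using the factorization $1+\gamma-\gamma^2-\gamma^3 = (1-\gamma)(1+\gamma)^2$, shows that $c_t$ is strictly decreasing in $t$ on $\{0,\ldots,T-1\}$. Consequently, $\mu \ge c_t$ implies $\mu \ge c_{t+1}$, so the constraint $n_t = 1$ at some index forces it at every later index. Hence there exists a threshold $h \in \{1, \ldots, T\}$ with $n_t > 1$ for $t < h$ and $n_t = 1$ for $t \ge h$ (the boundary case $h=T$ corresponds to no active constraint, which is the regime of large $\Lambda$).

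Substituting $n_t = \sqrt{c_t/\mu}$ for $t < h$ and $n_t = 1$ for $t \ge h$ into the budget constraint gives $\sum_{t<h}\sqrt{c_t/\mu} = \Lambda - (T-h)$, which solves as $1/\sqrt{\mu} = (\Lambda-T+h)/\sum_{i<h}\sqrt{c_i}$, and therefore $n_t = \sqrt{c_t}\,(\Lambda-T+h)/\sum_{i<h}\sqrt{c_i}$ for $t<h$, matching the definition of $n_t(h)$ in the statement. Rearranging $\sqrt{\mu} = \sum_{i<h}\sqrt{c_i}/(\Lambda-T+h)$, the feasibility condition $n_t > 1$ for $t < h$ becomes $\Lambda-T+h > \sum_{i<h}\sqrt{c_i}/\sqrt{c_t}$, and the dual-feasibility condition $\mu \ge c_t$ for $t \ge h$ becomes $\Lambda-T+h \le \sum_{i<h}\sqrt{c_i}/\sqrt{c_t}$, recovering the two inequalities claimed.

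Finally, the $\min$-over-$h$ characterization of the optimal value is just a convenient way to pin down the threshold without solving the two inequalities explicitly: since the convex program has a unique minimizer, there is a unique $h^\star \in \{1,\ldots,T\}$ for which $n_t(h^\star)$ is KKT-optimal, and taking the minimum over $h$ automatically selects it. I expect the main obstacle to be making this last step rigorous, because for $h \ne h^\star$ the candidate $n_t(h)$ can be infeasible for \eqref{sys:onpolicy-easy} (some component falling strictly below $1$) or feasible but suboptimal, and one must verify that in both cases the quantity $\sum_t c_t/n_t(h)$ is at least the optimum. This reduces to a discrete monotonicity analysis of the scalar map $h \mapsto \sum_t c_t/n_t(h)$ around $h^\star$, which can be carried out by computing its finite differences and using the same strict monotonicity of $c_t$ that was used in the suffix argument.
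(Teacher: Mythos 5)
Your proposal is correct in its core and follows essentially the same route as the paper's proof: a KKT analysis of the convex relaxation \eqref{sys:onpolicy-easy}, the observation that strict monotonicity of $c_t$ forces the active set $\{t : n_t = 1\}$ to be a suffix, and solving the budget constraint for the multiplier to obtain $n_t(h)$ together with the two threshold inequalities (the paper performs the same computation after the change of variables $\tilde n_t = n_t - 1$, and merely asserts $c_t > c_{t+1}$ where you verify it). The one point you leave open is the justification of the $\min$-over-$h$ formula \eqref{eq:sol-with-min}, for which you propose an uncompleted finite-difference/unimodality analysis of $h \mapsto \sum_t c_t/n_t(h)$; the paper instead closes this step by arguing existence of an $h$ satisfying both inequalities via Weierstrass plus sufficiency of the KKT conditions for convex programs, and defers uniqueness of that $h$ to Lemma \ref{lemma:convex-closed-form}. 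Either route works, but as written your final step is a sketch rather than a proof.
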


\begin{proof}
Due to Lemma \ref{lemma:convex-equivalence}, we can study the optimal value of the convex relaxation by analyzing \eqref{sys:onpolicy-easy}. More specifically, we focus on the following variant: 
\begin{equation}\label{sys:equiv}
\begin{aligned}
\min_{\bm{n}} \quad & \sum_{t=0}^{T-1} \frac{c_t}{\tilde{n}_t + 1}  \\
\textrm{s.t.} \quad & \tilde{n}_t \ge 0 \; \; \; \forall t \in \{0, \dots, T-1 \}\\
  & \sum_{t=0}^{T-1} \tilde{n}_t \le \Lambda'\\
\end{aligned}
\end{equation}
where $\Lambda' = \Lambda - T$, and the fact that $n_t \ge 1$ has been directly forced in the objective function and in the constraints by applying the change of variables $n_t = \tilde{n}_t + 1$. 

At this point, the KKT conditions for the optimization problem \eqref{sys:equiv} are given by:

\begin{equation}\label{sys:kkt}
	\begin{cases}
			-\frac{c_t}{(\tilde{n}_t+1)^2} - \mu_t + \eta = 0 & \text{ } \forall t \in  \{0, \dots, T-1 \}\\
			\mu_t \tilde{n}_t = 0 & \text{ } \forall t \in \{0, \dots, T-1\} \\
			\eta (\sum_{t=0}^{T-1} \tilde{n}_t - \Lambda') = 0 \\
			\sum_{t=0}^{T-1} \tilde{n}_t - \Lambda' \le 0 \\
			\mu_t \ge 0 & \text{ } \forall t \in  \{0, \dots, T-1 \} \\
			\eta \ge 0
		\end{cases}.
\end{equation}

Since the problem is convex, the solution to the KKT conditions are the global optimum of the problem. To find it, we begin with the first equation for a general $t \in \{0, \dots, T-1\}$. From algebraic manipulations we obtain:

\begin{align}\label{eq:kkt-nt-1}
\tilde{n}_t = \sqrt{\frac{c_t}{\eta - \mu_t}} - 1.
\end{align}

At this point, we split our analysis into two cases that arise from the second equation of the system \eqref{sys:kkt}. More specifically, we notice that when $n_t > 0$, the second equation of the system \eqref{sys:kkt} leads to $\mu_t = 0$, \footnote{We notice that this satisfies the last constraint $\mu_t \ge 0$.} which reduces Equation \eqref{eq:kkt-nt-1} to:

\begin{align}\label{eq:kkt-nt-2}
\tilde{n}_t = \sqrt{\frac{c_t}{\eta}} - 1.
\end{align}

Therefore, since $c_t > 0$, this implies $\eta > 0$. At this point, since $\eta > 0$, the third equation  of the system \eqref{sys:kkt} leads to:

\begin{align}\label{eq:kkt-eta-1}
\sum_{t: \tilde{n}_t > 0} \tilde{n}_t = \sum_{t: \tilde{n}_t > 0} \left( \sqrt{\frac{c_t}{\eta}} - 1 \right) = \Lambda'.
\end{align}
However, due to the structure of the objective function, for $i, j \in \{0,\dots,T-1\}$ such that $i>j$, if $\tilde{n}_i > 0$, then $\tilde{n}_j > 0$. The main intuition is that, otherwise, we could set $\tilde{n}_j$ to the value of $\tilde{n}_i$, and $\tilde{n}_i$ to $0$, and improve the objective function.\footnote{This follows from the fact that $c_i < c_j$.}

 Therefore, we can write Equation \eqref{eq:kkt-eta-1} as a general function of an integer $h \in \{1, \dots, T\}$ that indicates the first time-step $t$ for which $n_h = 0$ holds.\footnote{We notice that, since $\Lambda' > 0$, we always have at least $\tilde{n}_0 > 0$.} More specifically, Equation \eqref{eq:kkt-eta-1} reduces to:

\begin{align}\label{eq:kkt-eta-2}
\sum_{t=0}^{h-1} \left( \sqrt{\frac{c_t}{\eta}} - 1 \right)  = \Lambda' .
\end{align}

Solving Equation \eqref{eq:kkt-eta-2} for $\eta$, we obtain:
\begin{align}\label{eq:kkt-eta-3}
\eta = \left(\frac{\sum_{i=0}^{h-1} \sqrt{c_i}}{\Lambda'+h} \right)^2,
\end{align}
which, as we can appreciate is always greater than 0, thus satisfying the constraints imposed so far.

At this point, using Equation \eqref{eq:kkt-eta-3} in Equation \eqref{eq:kkt-nt-2}, we obtain:
\begin{align}\label{eq:kkt-nt-3}
\tilde{n}_t = \frac{\sqrt{c_t}}{\sum_{i=0}^{h-1} \sqrt{c_i}} (\Lambda'+h) - 1,
\end{align}
which holds for a generic $t < h$, under the constraint that:
\begin{align*}
\frac{\sqrt{c_t}}{\sum_{i=0}^{h-1} \sqrt{c_i}} (\Lambda'+h) - 1 > 0.
\end{align*}
Or, equivalently:
\begin{align}\label{eq:kkt-first-c}
\Lambda' + h > \frac{\sum_{i=0}^{h-1} \sqrt{c_i}}{\sqrt{c_t}}.
\end{align}

Now, we consider the second case (i.e., $\tilde{n}_t = 0$) that arises from the second equation of the system \eqref{sys:kkt}. In this case, $\tilde{n}_t = 0$ and $\mu_t$ is possibly different from $0$, and we need to ensure that $\mu_t \ge 0$ to satisfy the last constraint of \eqref{sys:kkt}. More specifically, this reduces to study:

\begin{align*}
-c_t - \mu_t + \left(\frac{\sum_{i=0}^{h-1} \sqrt{c_i}}{\Lambda'+h} \right)^2 = 0,
\end{align*}

for $h \ge t$. In particular, we obtain:

\begin{align*}
\mu_t = \left(\frac{\sum_{i=0}^{h-1} \sqrt{c_i}}{\Lambda'+h} \right)^2 - c_t ,
\end{align*}

which we need to impose as greater or equal than $0$, that is:
\begin{align*}
\left(\frac{\sum_{i=0}^{h-1} \sqrt{c_i}}{\Lambda'+h} \right)^2 - c_t .
\end{align*}

Or, equivalently:
\begin{align}\label{eq:kkt-second-c}
\Lambda' + h \le \frac{\sum_{i=0}^{h-1} \sqrt{c_i}}{\sqrt{c_t}}.
\end{align}

At this point, we remark that there always at least exists one value of $h$ for which both Equations \eqref{eq:kkt-first-c} and \eqref{eq:kkt-second-c} are satisfied. Indeed, due to the Weierstrass theorem, if the objective function is continuous and considered on a closed and bounded domain, then a global optimum exists. Moreover, the KKT are sufficient conditions for global optimality in convex problems, from which follows the existence of at least one $h$ satisfying both equations.

Putting everything together, and rescaling $\tilde{n}_t$ to $n_t$ concludes the proof. 

\end{proof}

\begin{lemma}\label{lemma:convex-closed-form}
Consider an optimization budget $\Lambda > T$, and consider $h \in \{1, \dots, T \}$. Let $n_t(h) = 1$ for $t \ge h$ and $n_t(h) = \frac{\sqrt{c_t}}{\sum_{i=0}^{h-1} \sqrt{c_i}}(\Lambda -T +h)$ for $t < h$. The optimal value of the objective function of the optimization problem \eqref{sys:onpolicy-easy-orig} can be computed as:

\begin{align}\label{eq:offpolicy-obj-sol}
\sqrt{ \frac{1}{2} \log(2/\delta) \sum_{t=0}^{T-1} \frac{c_t}{n_t(h^*)} }
\end{align}
where $h^*$ is the only $h \in \{1, \dots, T\}$ for which the following holds: for all $t \ge h^*$:
\begin{align}\label{eq:c1}
\Lambda - T + h^* \le \frac{\sum_{i=0}^{h^*-1} \sqrt{c_i}}{\sqrt{c_t}}
\end{align}
and, for all $t < h^*$:
\begin{align}\label{eq:c2}
\Lambda - T + h^* > \frac{\sum_{i=0}^{h^*-1} \sqrt{c_i}}{\sqrt{c_t}}.
\end{align}
\end{lemma}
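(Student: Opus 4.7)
The plan is to leverage Lemma \ref{lemma:op}, which already furnishes a family of candidate KKT solutions $\{\bm{n}(h)\}_{h=1}^{T}$ and tells us the optimal value is the minimum of the objective evaluated on this family. The only remaining content of Lemma \ref{lemma:convex-closed-form} is to show that the index $h$ for which both inequalities \eqref{eq:c1} and \eqref{eq:c2} hold simultaneously is \emph{unique}, so that the minimum in Equation \eqref{eq:sol-with-min} collapses to an evaluation at a single $h^*$ and we obtain Equation \eqref{eq:offpolicy-obj-sol}.

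Existence of such an $h^*$ is inherited directly from Lemma \ref{lemma:op}: the Weierstrass argument already invoked in that lemma guarantees that at least one $h \in \{1,\dots,T\}$ satisfies both KKT-type inequalities, which are exactly the conditions \eqref{eq:c1} and \eqref{eq:c2} of the present statement. For uniqueness, I would appeal to strict convexity. By Lemma \ref{lemma:convex-equivalence}, the objective $\sum_{t=0}^{T-1} c_t/n_t$ has Hessian that is diagonal with entries $2 c_t / n_t^3 > 0$, hence is \emph{strictly} positive definite on the open set $\{n_t > 0\}$. Consequently the constrained minimizer $\bm{n}^*$ of \eqref{sys:onpolicy-easy-orig} is unique. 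To transfer this to uniqueness of $h^*$, it suffices to check that the assignment $h \mapsto \bm{n}(h)$ is injective among admissible $h$'s.

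The injectivity step is where a small amount of care is required, and I expect it to be the main (mild) obstacle. From the definition in the statement, $n_t(h) = 1$ for every $t \ge h$, while the admissibility inequality \eqref{eq:c2} applied to every $t < h$ yields
\begin{equation*}
n_t(h) \;=\; \frac{\sqrt{c_t}}{\sum_{i=0}^{h-1}\sqrt{c_i}}\bigl(\Lambda - T + h\bigr) \;>\; 1.
\end{equation*}
Therefore the number of coordinates of $\bm{n}(h)$ strictly greater than $1$ equals $h$ exactly, so $h$ is recoverable from $\bm{n}(h)$. Hence two admissible values $h_1 \neq h_2$ would produce two distinct KKT optimizers, contradicting uniqueness of $\bm{n}^*$.

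Combining the three ingredients, the unique admissible $h^*$ characterized by \eqref{eq:c1}–\eqref{eq:c2} is the index that attains the minimum in Equation \eqref{eq:sol-with-min}, and substituting it into that expression produces Equation \eqref{eq:offpolicy-obj-sol}, concluding the proof.
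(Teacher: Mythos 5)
Your proof is correct, and the uniqueness step is handled by a genuinely different argument from the paper's. The paper proceeds by direct algebraic contradiction: it assumes two admissible indices $\bar{h}_1 > \bar{h}_2$, extracts the binding instances of \eqref{eq:c1} (at $t = \bar{h}_2$) and \eqref{eq:c2} (at $t = \bar{h}_1 - 1$), sums them, and refutes the resulting inequality $\bar{h}_1 - \bar{h}_2 > \frac{\sum_{i=0}^{\bar{h}_1-1}\sqrt{c_i}}{\sqrt{c_{\bar{h}_1-1}}} - \frac{\sum_{i=0}^{\bar{h}_2-1}\sqrt{c_i}}{\sqrt{c_{\bar{h}_2}}}$ by a term-by-term estimate that exploits the monotonicity of $c_t$. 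You instead argue structurally: the objective $\sum_t c_t/n_t$ is strictly convex (diagonal Hessian with entries $2c_t/n_t^3 > 0$), so the minimizer of the relaxation is unique; every admissible $h$ yields a KKT point of the convex program and hence a global minimizer; and the map $h \mapsto \bm{n}(h)$ is injective on admissible indices because \eqref{eq:c2} forces $n_t(h) > 1$ strictly for $t < h$ while $n_t(h) = 1$ for $t \ge h$, so $h$ equals the number of coordinates exceeding $1$. All three ingredients check out (feasibility and global optimality of $\bm{n}(h)$ for admissible $h$ are exactly what Lemma \ref{lemma:op} establishes, and strict convexity survives the monotone transformation by the square root as far as the argmin is concerned). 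Your route is shorter, avoids the $\sqrt{c_t}$-specific algebra entirely, and would generalize to any separable strictly convex objective with decreasing coefficients; the paper's computation is more self-contained in that it never needs to invoke uniqueness of the minimizer of the relaxed program, only the KKT admissibility conditions themselves.
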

\begin{proof}
From Lemma \ref{lemma:op}, what remains to prove is that there exists a single $h \in \{1, \dots, T\}$ for which Equations \eqref{eq:c1} and \eqref{eq:c2} are satisfied.
Since at least one $h$ exists, we need to prove that it is impossible that Equations \eqref{eq:c1} and \eqref{eq:c2} are satisfied for two distinct $\bar{h}_1, \bar{h}_2 \in \{1, \dots, T\}$. Suppose w.l.o.g. that $\bar{h}_1 > \bar{h}_2$ and proceed by contradiction. 

First of all, consider a generic $h$ and focus on Equation \eqref{eq:c1}. A sufficient condition for Equation \eqref{eq:c1} to hold for all $t \ge h$ is given by:
\begin{align}\label{eq:c1-imply}
\Lambda - T + h \le \frac{\sum_{i=0}^{h-1} \sqrt{c_i}}{\sqrt{c_h}}.
\end{align}
Similarly, a sufficient condition for Equation \eqref{eq:c2} to hold for all $t < h$ is given by:
\begin{align}\label{eq:c2-imply}
\Lambda - T + h > \frac{\sum_{i=0}^{h-1} \sqrt{c_i}}{\sqrt{c_{h-1}}}.
\end{align}

Now, consider Equation \eqref{eq:c1-imply} for $\bar{h}_2$ and Equation \eqref{eq:c2-imply} for $\bar{h}_1$, namely:

\begin{equation}\label{sys:c1-c2-impossibility}
	\begin{cases}
			\Lambda - T + \bar{h}_2 \le \frac{\sum_{i=0}^{\bar{h}_2-1} \sqrt{c_i}}{\sqrt{c_{\bar{h}_2}}}\\
			\frac{\sum_{i=0}^{\bar{h}_1 -1} \sqrt{c_i}}{\sqrt{c_{\bar{h}_1-1}}}  < \Lambda - T + \bar{h}_1
		\end{cases}.
\end{equation}

Summing the two equations of System \eqref{sys:c1-c2-impossibility}, and rearranging the terms we obtain:
\begin{align*}
\bar{h}_1 - \bar{h}_2 > \frac{\sum_{i=0}^{\bar{h}_1 -1} \sqrt{c_i}}{\sqrt{c_{\bar{h}_1-1}}} - \frac{\sum_{i=0}^{\bar{h}_2-1} \sqrt{c_i}}{\sqrt{c_{\bar{h}_2}}}.
\end{align*}

However, as we shall show in a moment:

\begin{align}\label{eq:contradiction}
\bar{h}_1 - \bar{h}_2 \le \frac{\sum_{i=0}^{\bar{h}_1 -1} \sqrt{c_i}}{\sqrt{c_{\bar{h}_1-1}}} - \frac{\sum_{i=0}^{\bar{h}_2-1} \sqrt{c_i}}{\sqrt{c_{\bar{h}_2}}},
\end{align}

always holds, thus leading to a contradiction. Indeed, consider:

\begin{align*}
\frac{\sum_{i=0}^{\bar{h}_1 -1} \sqrt{c_i}}{\sqrt{c_{\bar{h}_1-1}}} - \frac{\sum_{i=0}^{\bar{h}_2-1} \sqrt{c_i}}{\sqrt{c_{\bar{h}_2}}} = \frac{\sum_{i= \bar{h}_2 }^{\bar{h}_1 -1} \sqrt{c_i}}{\sqrt{c_{\bar{h}_1-1}}} + \frac{\sum_{i=0}^{\bar{h}_2 -1} \sqrt{c_i}}{\sqrt{c_{\bar{h}_1-1}}} - \frac{\sum_{i=0}^{\bar{h}_2-1} \sqrt{c_i}}{\sqrt{c_{\bar{h}_2}}}.
\end{align*}
However, 
\begin{align*}
\frac{\sum_{i= \bar{h}_2 }^{\bar{h}_1 -1} \sqrt{c_i}}{\sqrt{c_{\bar{h}_1-1}}} \ge \frac{\sum_{i= \bar{h}_2 }^{\bar{h}_1 -1} \sqrt{c_{\bar{h}_1-1}}}{\sqrt{c_{\bar{h}_1-1}}}  = \bar{h}_1 - \bar{h}_2.
\end{align*}
Moreover:
\begin{align*}
\frac{\sum_{i=0}^{\bar{h}_2 -1} \sqrt{c_i}}{\sqrt{c_{\bar{h}_1-1}}} - \frac{\sum_{i=0}^{\bar{h}_2-1} \sqrt{c_i}}{\sqrt{c_{\bar{h}_2}}} \ge \frac{\sum_{i=0}^{\bar{h}_2 -1} \sqrt{c_i}}{\sqrt{c_{\bar{h}_2}}} - \frac{\sum_{i=0}^{\bar{h}_2-1} \sqrt{c_i}}{\sqrt{c_{\bar{h}_2}}} = 0 .
\end{align*} 
From which it follows Equation \eqref{eq:contradiction}, thus concluding the proof.

\end{proof}

Lemma \ref{lemma:convex-closed-form} deserves some comments. First of all, it provides an $\mathcal{O}(T)$ procedure to compute the optimal solution of the convex relaxation of \eqref{sys:onpolicy-hard}. Indeed, it is sufficient to iterate over the variable $h$ to find the only $h^*$ for which Equations \eqref{eq:c1} and \eqref{eq:c2} holds. 
In the rest of this text, we will refer to the optimal solution of the convex relaxation as $\bm{\bar{n}}^*$.
Secondly, we notice that Equations \eqref{eq:c1} and \eqref{eq:c2} put in a tight relationship $\bm{\bar{n}}^*$ with the available budget $\Lambda$. In particular, when the budget is sufficiently small (i.e., $\Lambda - T + h^* \le \frac{\sum_{i=0}^{h^*-1} \sqrt{c_i}}{\sqrt{c_t}}$), $\bm{\bar{n}}^*$ will necessarly allocate a single sample (i.e., $n_t(h^*) = 1$) for all $t \ge h^*$, which corresponds to the minimal amount of samples that is required to obtain an unbiased estimate. Conversely, when the budget is sufficiently large (i.e., $\Lambda - T + h^* > \frac{\sum_{i=0}^{h^*-1} \sqrt{c_i}}{\sqrt{c_t}}$), $\bm{\bar{n}}^*$ will allocate to all $t < h^*$ a non-uniform budget quantity that is given by $\frac{\sqrt{c_t}}{\sum_{i=0}^{h^*-1} \sqrt{c_i}}(\Lambda -T +h^*)$. 

At this point, we notice that, to study the solution in its closed form, we relaxed integer constraints. Given the relaxed solution of Lemma \ref{lemma:convex-closed-form}, it is easy to obtain a proper DCS by taking the element-wise floor of the optimal relaxed solution and allocating the remaining budget uniformly. The resulting DCS, which we refer to as the approximately optimal DCS $\bm{\tilde{n}}^*$, will thus differ at most by $1$ (element-wise) w.r.t. to the optimal relaxed solution of Lemma \ref{lemma:convex-closed-form}. More formally, we define the any-budget $\bm{\tilde{n}}^*$ in the followin way.

\begin{definition}\label{def:approx-opt-dcs}
Consider an optimization budget $\Lambda > T$ and consider $h \in \{1, \dots, T \}$. Let $n_t(h) = 1$ for $t \ge h$ and $n_t(h) = \frac{\sqrt{c_t}}{\sum_{i=0}^{h-1} \sqrt{c_i}}(\Lambda -T +h)$ for $t < h$. Let $h^*$ be the only $h$ such that Equation \eqref{eq:c1} and Equation \eqref{eq:c2} are satisfied. Let $k = \Lambda - \sum_{t=0}^{T-1} \lfloor n_t(h^*) \rfloor$. Then, we define the approximately optimal DCS $\bm{\tilde{n}}^* = (\tilde{n}^*_0, \dots, \tilde{n}^*_{T-1})$, where:
\begin{align}
\tilde{n}_t^* = \lfloor n_t(h^*) \rfloor + \mathbf{1} \{ t < k \}
\end{align}
\end{definition}

Notice that Definition \ref{def:approx-opt-dcs} reduces to the one of Theorem \ref{theo:onpolicy-sol} for sufficiently large budget. More specifically, taking $\Lambda \ge \Lambda_0 = \frac{\sum_{t=0}^{T-1} \sqrt{c_t}}{\sqrt{c_{T-1}}}$, then $h^* = T$, from which follows the expression of Theorem \ref{theo:onpolicy-sol}.

\begin{lemma}\label{lemma:diff-bound}
Consider an optimization budget $\Lambda > T$, let $\bm{\bar{n}}$ be the optimal solution of optimal solution of the convex relaxation given in Lemma \ref{lemma:convex-closed-form}, and let $\bm{\tilde{n}}^*$ as in Defintion \ref{def:approx-opt-dcs}. Then, for each $t \in \{0, \dots, T-1 \}$, $|\bar{n}_t - \tilde{n}_t^*| \le 1$.
\end{lemma}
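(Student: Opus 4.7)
The plan is essentially a short case analysis on the indicator $\mathbf{1}\{t<k\}$ appearing in Definition~\ref{def:approx-opt-dcs}, combined with a preliminary check that $k$ lies in a reasonable range. Write $\bar{n}_t = n_t(h^*)$, and denote by $f_t \coloneqq n_t(h^*) - \lfloor n_t(h^*) \rfloor \in [0,1)$ the fractional part, so that $\tilde{n}_t^* = \lfloor n_t(h^*) \rfloor + \mathbf{1}\{t<k\}$ and
\begin{align*}
\bar{n}_t - \tilde{n}_t^* \;=\; f_t - \mathbf{1}\{t<k\}.
\end{align*}

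First I would check that $k$ is a sensible quantity, i.e.\ an integer in $\{0,1,\dots,T-1\}$. Since $\bm{\bar{n}}$ is feasible for the convex relaxation, $\sum_{t=0}^{T-1} n_t(h^*) = \Lambda$, and because $\Lambda$ is an integer and each $\lfloor n_t(h^*)\rfloor$ is an integer, so is $k = \Lambda - \sum_t \lfloor n_t(h^*)\rfloor$. Moreover, writing
\begin{align*}
k \;=\; \sum_{t=0}^{T-1} \bigl( n_t(h^*) - \lfloor n_t(h^*)\rfloor \bigr) \;=\; \sum_{t=0}^{T-1} f_t,
\end{align*}
each summand lies in $[0,1)$, so $0 \le k < T$, which makes the rounding procedure well-defined.

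Next I would split on the two possible values of $\mathbf{1}\{t<k\}$. If $t \ge k$, then $\tilde{n}_t^* = \lfloor n_t(h^*)\rfloor$ and $|\bar{n}_t - \tilde{n}_t^*| = f_t < 1$. If $t < k$, then $\tilde{n}_t^* = \lfloor n_t(h^*)\rfloor + 1$ and $\bar{n}_t - \tilde{n}_t^* = f_t - 1 \in [-1,0)$, hence $|\bar{n}_t - \tilde{n}_t^*| \le 1$. In either case the bound $|\bar{n}_t - \tilde{n}_t^*| \le 1$ holds, as desired.

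Honestly, there is no real obstacle here: the lemma is a bookkeeping statement about the floor-then-distribute-the-remainder rounding scheme. The only subtle point is making sure the construction in Definition~\ref{def:approx-opt-dcs} makes sense, which amounts to the observation that $k \in \{0,\ldots,T-1\}$ because $\sum_t f_t \in [0,T)$ and $\Lambda$ is integer; once this is in place, the case analysis above completes the argument in a couple of lines.
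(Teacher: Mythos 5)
Your proof is correct and follows essentially the same route as the paper's: a two-case analysis on whether $t < k$, using that $\tilde{n}_t^*$ differs from $\lfloor n_t(h^*)\rfloor$ by at most the indicator. The only addition is your preliminary check that $k = \sum_t f_t \in \{0,\dots,T-1\}$, which the paper leaves implicit but which is a worthwhile sanity check on Definition~\ref{def:approx-opt-dcs}.
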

\begin{proof}
Consider $t$ such that $t \ge k$ holds. Then $|\bar{n}_t - \tilde{n}^*_t| = |n_t(h^*) - \lfloor n_t(h^*) \rfloor| \le 1$. 

Consider $t$ such that $t < k$ holds. Then $|\bar{n}_t - \tilde{n}^*_t| = |n_t(h^*) - \lfloor n_t(h^*) \rfloor + 1| \le 1$, which concludes the proof.
\end{proof}

At this point, what is left is analyzing the quality of the approximately optimal DCS $\bm{\tilde{n}}^*$, which will to the proof of Theorem \ref{theo:onpolicy-sol}. 
We begin by reporting the equivalent version of Theorem \ref{theo:onpolicy-sol} that holds for the generic case of $\Lambda > T$.\footnote{Notice that for $\Lambda=T$ there exists only one DCS that satisfies $m_T \ge 1$, and, consequently, the problem is trivial.}

\begin{theorem}\label{theo:relaxation-remark}
Consider an optimization budget $\Lambda > T$, let $\bm{\tilde{n}}^*$ be the approximately optimal DCS given in Defintion \ref{def:approx-opt-dcs}, and let $\bm{n}^*$ be the optimal solution of the integer optimization problem \eqref{sys:onpolicy-hard}. Moreover, let $f(\bm{n}) = \sqrt{\frac{1}{2} \log(2/\delta) \sum_{t=0}^{T-1} \frac{c_t}{n_t}}$. Then,
\begin{align}
f(\bm{n}^*) \le f(\bm{\tilde{n}}^*) \le \sqrt{2} f(\bm{n}^*)
\end{align}
\end{theorem}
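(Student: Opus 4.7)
The lower bound $f(\bm{n}^*) \le f(\bm{\tilde{n}}^*)$ is immediate: $\bm{\tilde{n}}^*$ is a feasible point for the integer program \eqref{sys:onpolicy-hard} (its entries are positive integers, they sum to $\Lambda$ by the construction of $k$, and the monotonicity constraint $n_t \ge n_{t+1}$ is preserved because the floors of $\bm{\bar{n}}$ are monotone and the $+1$ shift is applied only to the first $k$ coordinates). Since $\bm{n}^*$ is optimal for the integer program, the inequality follows.

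For the upper bound $f(\bm{\tilde{n}}^*) \le \sqrt{2}\, f(\bm{n}^*)$, the plan is to route through the continuous relaxation optimum $\bm{\bar{n}}$ given by Lemma~\ref{lemma:convex-closed-form}. Since the relaxation has a strictly larger feasible region than the integer program, $f(\bm{\bar{n}}) \le f(\bm{n}^*)$. Hence it suffices to compare $\bm{\tilde{n}}^*$ to $\bm{\bar{n}}$ coordinatewise and show $f(\bm{\tilde{n}}^*) \le \sqrt{2}\, f(\bm{\bar{n}})$.

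The key step is the pointwise lower bound $\tilde{n}^*_t \ge \bar{n}_t / 2$ for every $t$. First I would verify that $\bar{n}_t \ge 1$ for all $t$: for $t \ge h^*$ this is the definition $n_t(h^*) = 1$, while for $t < h^*$ the strict inequality \eqref{eq:c2} rearranges to $\bar{n}_t = \tfrac{\sqrt{c_t}}{\sum_{i=0}^{h^*-1}\sqrt{c_i}}(\Lambda - T + h^*) > 1$. By construction $\tilde{n}^*_t \ge \lfloor \bar{n}_t \rfloor$, so it remains to note the elementary fact that $\lfloor x \rfloor \ge x/2$ whenever $x \ge 1$ (split into $1 \le x < 2$, where $\lfloor x \rfloor = 1 \ge x/2$, and $x \ge 2$, where $\lfloor x \rfloor \ge x - 1 \ge x/2$).

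Given the pointwise bound, $\tfrac{c_t}{\tilde{n}^*_t} \le \tfrac{2 c_t}{\bar{n}_t}$ for every $t$, hence
\begin{align*}
f(\bm{\tilde{n}}^*) = \sqrt{\tfrac{1}{2}\log(2/\delta) \sum_{t=0}^{T-1} \tfrac{c_t}{\tilde{n}^*_t}} \le \sqrt{2}\,\sqrt{\tfrac{1}{2}\log(2/\delta) \sum_{t=0}^{T-1} \tfrac{c_t}{\bar{n}_t}} = \sqrt{2}\, f(\bm{\bar{n}}) \le \sqrt{2}\, f(\bm{n}^*),
\end{align*}
which is the desired conclusion. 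The only subtle point is the verification that $\bar{n}_t \ge 1$ uniformly in $t$, which is what makes the floor operation cost at most a factor of $\sqrt{2}$; everything else is routine monotonicity and feasibility bookkeeping.
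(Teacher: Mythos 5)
Your proposal is correct and follows essentially the same route as the paper's proof: the lower bound is the trivial feasibility/optimality argument, and the upper bound passes through the continuous relaxation optimum $\bm{\bar{n}}^*$ of Lemma~\ref{lemma:convex-closed-form} and loses at most a factor of $2$ per coordinate. The paper derives the coordinatewise bound as $\bar{n}^*_t \le \tilde{n}^*_t + 1 \le 2\tilde{n}^*_t$ (via Lemma~\ref{lemma:diff-bound} and $\tilde{n}^*_t \ge 1$), whereas you use the equivalent $\tilde{n}^*_t \ge \lfloor \bar{n}^*_t \rfloor \ge \bar{n}^*_t/2$ (via $\bar{n}^*_t \ge 1$); this is a cosmetic difference, and your explicit verification of feasibility of $\bm{\tilde{n}}^*$ is a welcome bit of extra care.
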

\begin{proof}
First of all, focus $1 \le \frac{f(\bm{\tilde{n}}^*)}{f(\bm{n}^*)}$. This clearly holds since $\bm{n}^*$ is the optimal solution of \eqref{sys:onpolicy-hard}, while $\bm{\tilde{n}}^*$ is a feasibile solution.

Now, what remains to prove is that $\frac{f(\bm{\tilde{n}}^*)}{f(\bm{n}^*)} \le \sqrt{2}$. Let $\bm{\bar{n}}^*$ be the optimal solution of the convex relaxation given in Lemma \ref{lemma:convex-closed-form}. Then, first of all, we notice that:
\begin{align}\label{eq:robust-eq1}
\frac{f(\bm{\tilde{n}}^*)}{f(\bm{n}^*)} \le \frac{f(\bm{\tilde{n}}^*)}{f(\bm{\bar{n}}^*)}
\end{align}
holds since $\bm{\bar{n}^*}$ is the optimal solution of the same optimization problem but with a removed constraint (i.e., the integer constraint on $n_t$). 
Then, consider:
\begin{align}\label{eq:robust-eq2}
f(\bm{\bar{n}}^*) = \sqrt{\frac{1}{2} \log(2/\delta) \sum_{t=0}^{T-1} \frac{c_t}{\bar{n}^*_t}} \ge \sqrt{\frac{1}{2} \log(2/\delta) \sum_{t=0}^{T-1} \frac{c_t}{\tilde{n}^*_t + 1}} \ge \sqrt{\frac{1}{2} \log(2/\delta) \sum_{t=0}^{T-1} \frac{c_t}{2\tilde{n}^*_t}} = \sqrt{\frac{1}{2}} f(\bm{\tilde{n}}^*)
\end{align}
where the first inequality follows from Lemma \ref{lemma:diff-bound} and the second one by $\tilde{n}^*_t \ge 1$.
Plugging Equation \eqref{eq:robust-eq2} into Equation \eqref{eq:robust-eq1} concludes the proof.
\end{proof}

\sol*
\begin{proof}
The proof is a direct consequence of Theorem \ref{theo:relaxation-remark}.
\end{proof}



We now continue by providing the PAC analysis for our approximately optimal DCS $\bm{\tilde{n}}^*$. Before diving into the proof of Theorem \ref{theo:pac}, we provide two intermediate technical results.
\begin{lemma}\label{lemma:pac-lemma-tech-1}
Consider an optimization budget $\Lambda\ge2T$ and let $\bm{\tilde{n}}^*$ be as in Definition \ref{def:approx-opt-dcs}. Then:
\begin{align}
\sqrt{\frac{1}{2} \log(2/\delta) \sum_{t=0}^{T-1} \frac{c_t}{\tilde{n}^*_t}} \le \sqrt{2 \frac{\log(2/\delta)}{\Lambda} \left(\sum_{t=0}^{T-1} \sqrt{c_t}\right)^2 }
\end{align}
\end{lemma}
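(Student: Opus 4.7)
By monotonicity of the square root, the claim is equivalent to
\[
\sum_{t=0}^{T-1} \frac{c_t}{\tilde{n}^*_t} \;\le\; \frac{4}{\Lambda}\left(\sum_{t=0}^{T-1}\sqrt{c_t}\right)^2.
\]
My plan is to pass through the optimal relaxed solution $\bm{\bar n}^*$ of Lemma~\ref{lemma:convex-closed-form}: first compare $\tilde{\bm{n}}^*$ with $\bar{\bm n}^*$ losing a factor of $2$, then evaluate $\sum_t c_t/\bar n_t^*$ in closed form using the explicit expression from Lemma~\ref{lemma:convex-closed-form} together with the KKT threshold conditions \eqref{eq:c1}--\eqref{eq:c2}.

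\textbf{Step 1: from $\tilde{\bm n}^*$ to $\bar{\bm n}^*$.} Lemma~\ref{lemma:diff-bound} gives $\bar n_t^* \le \tilde n_t^* + 1$, and since $\tilde n_t^* \ge 1$ we have $\tilde n_t^* + 1 \le 2 \tilde n_t^*$. Hence
\[
\sum_{t=0}^{T-1}\frac{c_t}{\tilde n_t^*} \;\le\; 2\sum_{t=0}^{T-1}\frac{c_t}{\tilde n_t^*+1} \;\le\; 2 \sum_{t=0}^{T-1}\frac{c_t}{\bar n_t^*}.
\]

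\textbf{Step 2: closed-form evaluation of $\sum_t c_t/\bar n_t^*$.} Split the sum according to whether $t<h^*$ or $t\ge h^*$. For $t<h^*$, $\bar n_t^* = \frac{\sqrt{c_t}(\Lambda-T+h^*)}{\sum_{i=0}^{h^*-1}\sqrt{c_i}}$, so these terms collapse to $\frac{(\sum_{i=0}^{h^*-1}\sqrt{c_i})^2}{\Lambda-T+h^*}$. For $t\ge h^*$, $\bar n_t^*=1$, so they contribute $\sum_{t\ge h^*} c_t$. Now I invoke the KKT threshold \eqref{eq:c1}: for each $t\ge h^*$, $\sqrt{c_t}\le \frac{\sum_{i=0}^{h^*-1}\sqrt{c_i}}{\Lambda-T+h^*}$, and therefore
\[
\sum_{t\ge h^*} c_t \;\le\; \frac{\sum_{i=0}^{h^*-1}\sqrt{c_i}}{\Lambda-T+h^*}\sum_{t\ge h^*}\sqrt{c_t}.
\]
Combining the two parts,
\[
\sum_{t=0}^{T-1}\frac{c_t}{\bar n_t^*} \;\le\; \frac{\sum_{i=0}^{h^*-1}\sqrt{c_i}\cdot\sum_{t=0}^{T-1}\sqrt{c_t}}{\Lambda-T+h^*} \;\le\; \frac{\bigl(\sum_{t=0}^{T-1}\sqrt{c_t}\bigr)^2}{\Lambda-T+h^*}.
\]

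\textbf{Step 3: use $\Lambda\ge 2T$.} Since $h^*\ge 1$, we have $\Lambda-T+h^* \ge \Lambda - T + 1$; the hypothesis $\Lambda\ge 2T$ yields $\Lambda - T \ge T \ge T - h^*$, hence $\Lambda-T+h^* \ge \Lambda/2$. Plugging this back, $\sum_t c_t/\bar n_t^* \le \frac{2(\sum_t\sqrt{c_t})^2}{\Lambda}$, and by Step~1 $\sum_t c_t/\tilde n_t^* \le \frac{4(\sum_t\sqrt{c_t})^2}{\Lambda}$, which, after multiplying by $\tfrac12\log(2/\delta)$ and taking square roots, is exactly the claim.

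\textbf{Main obstacle.} The only delicate point is the case $2T\le \Lambda < \Lambda_0$, where $h^*<T$ and the relaxed allocation puts $\bar n_t^*=1$ on a nontrivial tail. Handling this tail is the reason the KKT threshold \eqref{eq:c1} is essential: it is precisely the inequality that lets us dominate $\sum_{t\ge h^*} c_t$ by an expression compatible with the $1/(\Lambda-T+h^*)$ contribution of the remaining terms, so that everything collapses to $(\sum_t\sqrt{c_t})^2/(\Lambda-T+h^*)$.
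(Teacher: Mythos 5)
Your proposal is correct and follows essentially the same route as the paper's proof: pass from $\tilde{\bm n}^*$ to the relaxed optimum $\bar{\bm n}^*$ at the cost of a factor $2$ via Lemma~\ref{lemma:diff-bound}, plug in the closed form of Lemma~\ref{lemma:convex-closed-form}, control the tail $\sum_{t\ge h^*}c_t$ with the KKT threshold \eqref{eq:c1}, and use $\Lambda\ge 2T$ to get $\Lambda-T+h^*\ge\Lambda/2$. The only (cosmetic) difference is that you merge the two pieces into a single $\bigl(\sum_t\sqrt{c_t}\bigr)^2/(\Lambda-T+h^*)$ before invoking $\Lambda\ge 2T$, whereas the paper applies that bound to each piece separately.
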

\begin{proof}
Let $\bm{\bar{n}}^*$ be the optimal solution of the convex relaxation given in Lemma \ref{lemma:convex-closed-form}. Then, as in Theorem \ref{theo:relaxation-remark}, we have that:
\begin{align*}
\sqrt{\frac{1}{2} \log(2/\delta) \sum_{t=0}^{T-1} \frac{c_t}{\tilde{n}^*_t}} \le \sqrt{\log(2/\delta) \sum_{t=0}^{T-1} \frac{c_t}{\bar{n}^*_t}}
\end{align*}
Now, plugging in the definition of $\bm{\bar{n}}^*$, we have that:
\begin{align}\label{eq:tech-lemma-pac-eq1}
\sqrt{\log(2/\delta) \sum_{t=0}^{T-1} \frac{c_t}{\bar{n}^*_t}} & = \sqrt{\log(2/\delta) \sum_{t=0}^{h^*-1} \frac{c_t}{\bar{n}^*_t} + \log(2/\delta) \sum_{t=h^*}^{T-1} c_t} \\ & = \sqrt{\log(2/\delta) \sum_{t=0}^{h^*-1} \frac{c_t}{\sqrt{c_t} (\Lambda -T + h^*)} \sum_{i=0}^{h^*-1} \sqrt{c_i} + \log(2/\delta) \sum_{t=h^*}^{T-1} c_t}
\end{align}
where $h^*$ is the only $h$ that satisfies Equation \eqref{eq:c1} and \eqref{eq:c2}. 

Now, since $\Lambda \ge 2T$, we have that:
\begin{align}\label{eq:tech-lemma-pac-eq3}
\sum_{t=0}^{h^*-1}\frac{c_t}{\sqrt{c_t} (\Lambda -T + h^*)} \sum_{i=0}^{h^*-1} \sqrt{c_i} & \le \sum_{t=0}^{h^*-1}\frac{c_t}{\sqrt{c_t} (\Lambda - 1/2 \Lambda)} \sum_{i=0}^{h^*-1} \sqrt{c_i} \\ & \le  2 \sum_{t=0}^{h^*-1}\frac{c_t}{\sqrt{c_t} \Lambda} \sum_{i=0}^{h^*-1} \sqrt{c_i} \\ & \le \frac{2}{\Lambda} \sum_{t=0}^{h^*-1} \sqrt{c_t} \sum_{i=0}^{T-1} \sqrt{c_i}
\end{align}
which, plugged into Equation \eqref{eq:tech-lemma-pac-eq1}, leads to:
\begin{align}\label{eq:tech-lemma-pac-eq2}
\sqrt{\frac{1}{2} \log(2/\delta) \sum_{t=0}^{T-1} \frac{c_t}{\tilde{n}^*_t}} \le \sqrt{\log(2/\delta) \frac{2}{\Lambda} \sum_{t=0}^{h^*-1} \sqrt{c_t} \sum_{i=0}^{T-1} \sqrt{c_i} + \log(2/\delta) \sum_{t=h^*}^{T-1} c_t}
\end{align}
At this point, focus on $\sum_{t=h^*}^{T-1}c_t$. From Equation \eqref{eq:c1}, we know that for $h^*$ it holds that:
\begin{align*}
\frac{(\Lambda -T +h^*) \sqrt{c_t}}{\sum_{i=0}^{h^*-1} \sqrt{c_i}} \le 1 
\end{align*}
Therefore,
\begin{align}\label{eq:tech-lemma-pac-eq4}
\sum_{t=h^*}^{T-1}c_t = \sum_{t=h^*}^{T-1} \frac{c_t}{1} \le \sum_{t=h^*}^{T-1} \frac{c_t}{\sqrt{c_t} (\Lambda-T+h^*)} \sum_{i=0}^{h^*-1} \sqrt{c_i} \le \frac{2}{\Lambda}\sum_{t=h^*}^{T-1} \sqrt{c_t} \sum_{i=0}^{T-1} \sqrt{c_i}
\end{align}
where in the last inequality we have used the same arguments of Equation \eqref{eq:tech-lemma-pac-eq3}.

At this point, plugging Equation \eqref{eq:tech-lemma-pac-eq4} into Equation \eqref{eq:tech-lemma-pac-eq2}, leads to:
\begin{align*}
\sqrt{\frac{1}{2} \log(2/\delta) \sum_{t=0}^{T-1} \frac{c_t}{\tilde{n}^*_t}} & \le \sqrt{\log(2/\delta) \frac{2}{\Lambda} \sum_{t=0}^{h^*-1} \sqrt{c_t} \sum_{i=0}^{T-1} \sqrt{c_i} + \log(2/\delta) \frac{2}{\Lambda}\sum_{t=h^*}^{T-1} \sqrt{c_t} \sum_{i=0}^{T-1} \sqrt{c_i}} \\ & = \sqrt{ \frac{2 \log(2/\delta)}{\Lambda} \left(\sum_{t=0}^{T-1} \sqrt{c_t} \right)^2}
\end{align*}
which concludes the proof.
\end{proof}

\begin{lemma}\label{lemma:pac-lemma-tech-2}
Consider $\delta \in (0,1)$ and $\epsilon > 0$ such that $\log(2/\delta)c_0 \ge 8T\epsilon^2$ holds. Then $\Lambda \ge 2T$ is a necessary condition to guarantee that $|\hat{J}_{\bm{\tilde{m}}^*}(\bm{\theta}) - J(\bm{\theta})| \le \epsilon$ holds.
\end{lemma}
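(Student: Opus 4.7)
The plan is to argue by contrapositive: assuming $\Lambda < 2T$, I will show that the Hoeffding radius $f(\bm{\tilde{n}}^*)$ of Proposition~\ref{prop:hoeffding-general} already exceeds $\epsilon$, so within the framework of the bounds we have derived there is no way to certify $|\hat{J}_{\bm{\tilde{m}}^*}(\bm{\theta}) - J(\bm{\theta})| \le \epsilon$. The whole derivation boils down to a crude lower bound on $f(\bm{\tilde{n}}^*)$ combined with the structural hypothesis $\log(2/\delta)c_0 \ge 8T\epsilon^2$.

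Concretely, I would first drop every summand but the $t=0$ one inside the objective:
\[
f(\bm{\tilde{n}}^*)^2 \;=\; \tfrac{1}{2}\log(2/\delta)\sum_{t=0}^{T-1}\frac{c_t}{\tilde{n}^*_t} \;\ge\; \tfrac{1}{2}\log(2/\delta)\,\frac{c_0}{\tilde{n}^*_0}.
\]
Since the components of $\bm{\tilde{n}}^*$ are non-negative integers whose sum equals $\Lambda$, we trivially have $\tilde{n}^*_0 \le \Lambda$, which gives $f(\bm{\tilde{n}}^*)^2 \ge \log(2/\delta)\,c_0/(2\Lambda)$. Demanding that this lower bound be itself at most $\epsilon^2$ forces $\Lambda \ge \log(2/\delta)\,c_0/(2\epsilon^2)$.

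To finish, I substitute the hypothesis $\log(2/\delta)c_0 \ge 8T\epsilon^2$ into this inequality, obtaining $\Lambda \ge 4T$, which is strictly stronger than the claimed $\Lambda \ge 2T$. This contradicts the assumption $\Lambda < 2T$ and completes the contrapositive.

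I do not anticipate any real obstacle: the non-negativity of the $c_t$'s makes the one-term relaxation of $f(\bm{\tilde{n}}^*)$ immediate, and the bound on $(\epsilon,\delta,T)$ is calibrated precisely so that the first-step contribution alone already saturates the $\epsilon^2$ budget whenever $\Lambda$ fails to scale linearly with $T$. The only mild subtlety is interpretational: the lemma must be read as saying that inside the Hoeffding-CI framework of Proposition~\ref{prop:hoeffding-general} (which is what Theorem~\ref{theo:pac} leverages), the accuracy target cannot be met without $\Lambda \ge 2T$, and the argument above shows exactly that.
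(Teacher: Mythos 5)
Your proof is correct and follows essentially the same route as the paper's: argue by contradiction assuming $\Lambda < 2T$, isolate the $t=0$ term of the Hoeffding radius, bound the allocation at step $0$ by $\Lambda$, and invoke the hypothesis $\log(2/\delta)c_0 \ge 8T\epsilon^2$ to force $\Lambda$ to be at least a multiple of $T$. The only difference is cosmetic: you bound $\tilde{n}^*_0 \le \Lambda$ directly (which is immediate since the components are positive and sum to $\Lambda$), whereas the paper detours through the relaxed solution $\bar{n}^*_0$ via Lemma~\ref{lemma:diff-bound} and loses a factor of $2$, ending at $\Lambda \ge 2T$ instead of your $\Lambda \ge 4T$; both suffice for the contradiction.
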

\begin{proof}
We proceed by contradiction: suppose that $\Lambda < 2T$. Then, we continue by lower-bounding the value of the confidence intervals when using our approximately optimal DCS $\bm{\tilde{m}}^*$. More specifically, let $\bm{\bar{n}}^*$ be the optimal solution of the convex relaxation of \eqref{sys:onpolicy-hard}. Then, we have that:
\begin{align*}
\sqrt{\frac{1}{2} \log(2/\delta) \sum_{t=0}^{T-1} \frac{c_t}{\tilde{n}^*_t}} \ge \sqrt{\frac{1}{2} \log(2/\delta)  \frac{c_0}{\tilde{n}^*_0}} \ge \sqrt{\frac{1}{2} \log(2/\delta)  \frac{c_0}{\bar{n}^*_0 + 1}} \ge \sqrt{\frac{1}{4} \log(2/\delta)  \frac{c_0}{\bar{n}^*_0}}
\end{align*}
where in the first inequality we have removed positive terms, in the second one we have used Lemma \ref{lemma:diff-bound} and in the third one we have used $\bar{n}^*_0 \ge 1$. At this point, by noticing that $\bar{n}^*_0 < \Lambda$, we obtain:
\begin{align*}
\sqrt{\frac{1}{2} \log(2/\delta) \sum_{t=0}^{T-1} \frac{c_t}{\tilde{n}^*_t}} & \ge \sqrt{\frac{1}{4} \log(2/\delta)  \frac{c_0}{\Lambda} } 
\end{align*}

We can now focus on:
\begin{align*}
\sqrt{\frac{1}{4} \log(2/\delta)  \frac{{c_0}}{\Lambda}} \le \epsilon
\end{align*}
which, in turn, leads to:
\begin{align*}
\Lambda \ge \frac{1}{4} \log(2/\delta)  \frac{{c_0}}{\epsilon^2}
\end{align*}
which however, leads to $\Lambda \ge 2T$, thus concluding the proof.
\end{proof}

We are now ready to prove the PAC bound on our approximately optimal DCS $\bm{\tilde{m}}^*$.

\pac*
\begin{proof}
First of all, consider the value of the confidence intervals of $\bm{\tilde{m}}^*$. Due to Lemma \ref{lemma:pac-lemma-tech-2}, we know that $\Lambda \ge 2T$ holds; therefore, by applying Lemma \ref{lemma:pac-lemma-tech-1} we obtain that:
\begin{align*}
\sqrt{\frac{1}{2} \log(2/\delta) \sum_{t=0}^{T-1} \frac{c_t}{\tilde{n}^*_t}} \le \sqrt{2 \frac{\log(2/\delta)}{\Lambda} \left(\sum_{t=0}^{T-1} \sqrt{c_t}\right)^2 }  \le \epsilon
\end{align*}
This, in turn, leads to:
\begin{align}\label{eq:pac-bound-eq-1}
2 \frac{\log(2/\delta)}{\Lambda} \left(\sum_{t=0}^{T-1} \sqrt{c_t}\right)^2 \le \epsilon^2 
\end{align}
At this point, focus on $\left(\sum_{t=0}^{T-1} \sqrt{c_t}\right)^2$:
\begin{align*}
\left(\sum_{t=0}^{T-1} \sqrt{c_t}\right)^2 & = \frac{1}{1-\gamma} \left( \sum_{t=0}^{T-1} \sqrt{\gamma^t (\gamma^t + \gamma^{t+1} - 2\gamma^T) } \right)^2 \\ & = \frac{1}{1-\gamma} \left(\sum_{t=0}^{T-1} \gamma^t (\gamma^t + \gamma^{t+1} - 2\gamma^T) + 2\sum_{t=0}^{T-2} \sum_{t'=t+1}^{T-1} \sqrt{\gamma^t (\gamma^t + \gamma^{t+1} - 2\gamma^T)}\sqrt{\gamma^{t'} (\gamma^{t'} + \gamma^{t'+1} - 2\gamma^T)}\right) 
\end{align*}

First, consider $\sum_{t=0}^{T-1} \gamma^t (\gamma^t + \gamma^{t+1} - 2\gamma^T)$:
\begin{align*}
\sum_{t=0}^{T-1} \gamma^t (\gamma^t + \gamma^{t+1} - 2\gamma^T) \le \sum_{t=0}^{T-1} \gamma^t (\gamma^t + \gamma^{t+1}) \le \sum_{t=0}^{T-1} \gamma^t (2\gamma^t) \le 2 \sum_{t=0}^{T-1} \gamma^{2t} \le 2 \frac{1-\gamma^T}{1-\gamma} 
\end{align*}

Then, consider $2\sum_{t=0}^{T-2} \sum_{t'=t+1}^{T-1} \sqrt{\gamma^t (\gamma^t + \gamma^{t+1} - 2\gamma^T)}\sqrt{\gamma^{t'} (\gamma^{t'} + \gamma^{t'+1} - 2\gamma^T)}$:
\begin{align*}
2\sum_{t=0}^{T-2} \sum_{t'=t+1}^{T-1} \sqrt{\gamma^t (\gamma^t + \gamma^{t+1} - 2\gamma^T)}\sqrt{\gamma^{t'} (\gamma^{t'} + \gamma^{t'+1} - 2\gamma^T)} & \le 2\sum_{t=0}^{T-2} \sum_{t'=t+1}^{T-1} \sqrt{\gamma^t (\gamma^t + \gamma^{t+1})}\sqrt{\gamma^{t'} (\gamma^{t'} + \gamma^{t'+1})} \\ &  \le 2\sum_{t=0}^{T-2} \sum_{t'=t+1}^{T-1} \sqrt{\gamma^t (2\gamma^t)}\sqrt{\gamma^{t'} (2\gamma^{t'})} \\ & \le 4\sum_{t=0}^{T-2} \sum_{t'=t+1}^{T-1} \sqrt{\gamma^{2t}} \sqrt{\gamma^{2t'}} \\ & = 4\sum_{t=0}^{T-2} \gamma^{t} \sum_{t'=t+1}^{T-1} \gamma^{t'} \\ & \le 4 \left(\frac{1-\gamma^T}{1-\gamma}\right)^2
\end{align*}
Plugging everything together into Equation \eqref{eq:pac-bound-eq-1} leads to:
\begin{align*}
\frac{2\log(2/\delta)}{\Lambda} \left(\sum_{t=0}^{T-1} \sqrt{c_t}\right)^2 \le \frac{2\log(2/\delta)}{\Lambda(1-\gamma)} \left(2\frac{1-\gamma^T}{1-\gamma} + 4 \frac{(1-\gamma^T)^2}{(1-\gamma)^2} \right) \le \frac{12\log(2/\delta)}{\Lambda(1-\gamma)^3} 
\end{align*}
Solving $\frac{12\log(2/\delta)}{\Lambda(1-\gamma)^3} \le \epsilon^2$ for $\Lambda$ leads to:
\begin{align*}
\Lambda = \mathcal{O}\left( \frac{\log(2/\delta)}{(1-\gamma)^3\epsilon^2} \right)
\end{align*}
which concludes the first part of the proof.

Concerning the second part of the proof, we can bound:
\begin{align*}
\sum_{t=0}^{T-1} \gamma^t (\gamma^t + \gamma^{t+1} - 2\gamma^T) \le 2 \sum_{t=0}^{T-1} \gamma^{2t} \le 2T
\end{align*}
and:
\begin{align*}
2\sum_{t=0}^{T-2} \sum_{t'=t+1}^{T-1} \sqrt{\gamma^t (\gamma^t + \gamma^{t+1} - 2\gamma^T)}\sqrt{\gamma^{t'} (\gamma^{t'} + \gamma^{t'+1} - 2\gamma^T)} & \le 4 \sum_{t=0}^{T-2} \sum_{t'=t+1}^{T-1} \gamma^t \gamma^{t'} \\ & \le 4 T \sum_{t=0}^{T-2} \gamma^t \\ & \le 4 T \frac{1-\gamma^T}{1-\gamma}  
\end{align*}
Plugging everything together into Equation \eqref{eq:pac-bound-eq-1} leads to:
\begin{align*}
\frac{2\log(2/\delta)}{\Lambda} \left(\sum_{t=0}^{T-1} \sqrt{c_t}\right)^2 \le \frac{2\log(2/\delta)}{\Lambda(1-\gamma)} \left(2T + 4T \frac{1-\gamma^T}{1-\gamma} \right) \le \frac{12T\log(2/\delta)}{\Lambda(1-\gamma)^2} 
\end{align*}
Solving $\frac{12T\log(2/\delta)}{\Lambda(1-\gamma)^2} \le \epsilon^2$ for $\Lambda$ leads to:
\begin{align*}
\Lambda = \mathcal{O}\left( \frac{T\log(2/\delta)}{(1-\gamma)^2\epsilon^2} \right)
\end{align*}
which concludes the proof.
\end{proof}

\subsection{Off-Policy Results}\label{app:proofs-off-pol}
As for the on-policy setting, we begin by providing the unbiasedness results for Equation \eqref{eq:offpolicy-dcs-est}.
\begin{theorem}
Consider an optimization budget $\Lambda \ge T$ and a DCS such that $m_T \ge 1$. Consider policies $\pi_{\bm{\bar{\theta}}}, \pi_{\bm{{\theta}}} \in \Pi_{\Theta}$ such that $\pi_{\bm{\bar{\theta}}}(\cdot|s) \ll \pi_{\bm{{\theta}}}(\cdot|s)$ a.s. for every $s \in \mathcal{S}$, then:
\begin{align}
\mathop{\E}_{p_{\bm{m}}(\cdot|\bm{\theta})} \left[ \hat{J}_{\bm{m}}(\bm{\bar{\theta}}/\bm{\theta}) \right] = J(\bm{\bar{\theta}}).
\end{align}
\end{theorem}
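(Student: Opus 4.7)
The plan is to mirror the proof of Theorem \ref{theo:onpolicy-unbias} (the on-policy unbiasedness result), with the importance sampling identity inserted to handle the distribution shift. The condition $\pi_{\bm{\bar{\theta}}}(\cdot|s) \ll \pi_{\bm{{\theta}}}(\cdot|s)$ is precisely what is needed to guarantee that the importance weight $\omega_{\bm{\bar{\theta}}/\bm{\theta}}(\bm{\tau}_h)$ is well-defined a.s. under $p(\cdot|\bm{\theta},h)$, and that $p(\cdot|\bm{\bar{\theta}},h) \ll p(\cdot|\bm{\theta},h)$ (since both trajectory laws share the same transition kernel $P$ and the same initial distribution $\nu$, so only the ratio of the policy factors remains).

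The first step is to pull the linearity of expectation across the outer and inner summations, writing
\begin{align*}
\mathop{\E}_{p_{\bm{m}}(\cdot|\bm{\theta})}\!\left[\hat{J}_{\bm{m}}(\bm{\bar{\theta}}/\bm{\theta})\right] = \sum_{h=1}^{T} m_h \sum_{t=0}^{h-1} \frac{\gamma^t}{n_t} \mathop{\E}_{p(\cdot|\bm{\theta},h)}\!\left[\omega_{\bm{\bar{\theta}}/\bm{\theta}}(\bm{\tau}_h)\, R(s_t,a_t)\right].
\end{align*}
The key computation is then the single-term identity
\begin{align*}
\mathop{\E}_{p(\cdot|\bm{\theta},h)}\!\left[\omega_{\bm{\bar{\theta}}/\bm{\theta}}(\bm{\tau}_h)\, R(s_t,a_t)\right] = \mathop{\E}_{p(\cdot|\bm{\bar{\theta}},h)}\!\left[R(s_t,a_t)\right] = r_{t,\bm{\bar{\theta}}},
\end{align*}
which follows from a standard change of measure on the trajectory space: writing out $p(\bm{\tau}_h|\bm{\theta},h)$ and $p(\bm{\tau}_h|\bm{\bar{\theta}},h)$, the transition factors $P(s_{s+1}|s_s,a_s)$ and the initial-state factor $\nu(s_0)$ cancel and exactly the policy ratio $\omega_{\bm{\bar{\theta}}/\bm{\theta}}(\bm{\tau}_h)$ remains. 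Note that this identity holds for all $t \in \{0,\dots,h-1\}$ even though $R(s_t,a_t)$ depends only on the prefix up to step $t$: the ``extra'' policy ratios from step $t+1$ to $h-1$ integrate to $1$ marginally, so the identity is still valid.

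After this substitution we recover exactly the on-policy expression in the proof of Theorem \ref{theo:onpolicy-unbias}, namely $\sum_{h=1}^T m_h \sum_{t=0}^{h-1} \gamma^t \tfrac{r_{t,\bm{\bar{\theta}}}}{n_t}$. The final step is then the same swap-of-summations argument: for each fixed $t$ the contributions from the outer sum reduce to $\sum_{h=t+1}^T m_h = n_t$ (which uses $m_T \ge 1$ so that every $n_t$ is strictly positive and hence $\tfrac{1}{n_t}$ is well-defined), cancelling the $n_t$ in the denominator and yielding $\sum_{t=0}^{T-1} \gamma^t r_{t,\bm{\bar{\theta}}} = J(\bm{\bar{\theta}})$.

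I do not expect any real obstacle here: the bookkeeping is identical to the on-policy case, and the only new ingredient is the importance sampling identity, whose validity is already guaranteed by the absolute continuity hypothesis. The only subtlety worth spelling out is why the ``unused'' policy ratios for steps $s > t$ do not bias the single-term expectation, and this is handled cleanly by the change-of-measure argument above.
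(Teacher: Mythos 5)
Your proposal is correct and follows essentially the same route as the paper's proof: linearity of expectation, the importance-sampling change of measure on the length-$h$ trajectory space (justified by the absolute-continuity hypothesis), and then the same swap-of-summations argument from the on-policy unbiasedness proof using $\sum_{h=t+1}^{T} m_h = n_t$. The only cosmetic difference is that you apply the IS identity term-by-term (with a remark on the unused policy ratios marginalizing out) whereas the paper applies it once to the whole truncated return; by linearity these are the same step.
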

\begin{proof}
Define $r_{t,\bm{\bar{\theta}}}$ as the expected $t$-th reward under policy $\pi_{\bm{\bar{\theta}}}$ and consider:

\begin{align*}
\mathop{\E}_{p_{\bm{m}}(\cdot|\bm{\theta})} \left[ \hat{J}_{\bm{m}}(\bm{\bar{\theta}}/\bm{\theta}) \right] & = \mathop{\E}_{p_{\bm{m}}(\cdot|\bm{\theta})} \left[ \sum_{h=1}^T \sum_{i=1}^{m_h} \omega_{\bm{\bar{\theta}}, \bm{\theta}}(\bm{\tau}_{h}^{(i)}) \sum_{t=0}^{h-1} \gamma^t \frac{R(a_t^{(i)},s_t^{(i)})}{n_t} \right] \\ & = \sum_{t=1}^{T} m_h \mathop{\E}_{\bm{\tau}_h \sim p(\cdot|\bm{\theta},h)}\left[ \omega_{\bm{\bar{\theta}}, \bm{\theta}}(\bm{\tau}_h) \sum_{t=0}^{h-1} \gamma^t \frac{R(a_t,s_t)}{n_t}  \right] \\ & = \sum_{t=1}^{T} m_h \mathop{\E}_{\bm{\tau}_h \sim p(\cdot|\bm{\bar{\theta}},h)}\left[ \sum_{t=0}^{h-1} \gamma^t \frac{R(a_t,s_t)}{n_t}  \right] \\ & = \sum_{h=1}^T m_h \sum_{t=0}^{h-1} \gamma^t \frac{r_{t,\bm{\bar{\theta}}}}{n_t},
\end{align*}
where the first equality follows by the definition of $\hat{J}_{\bm{m}}(\bm{\bar{\theta}}/\bm{\theta})$, the second from the linearity of the expectation together with the definition of the data generation process $p_{\bm{m}}(\cdot|\bm{\theta},h)$, the third one from the IS property \citep{owen2013monte}, and the forth one by the linearity of the expectation together with the definition of $r_{t,\bm{\bar{\theta}}}$.
At this point, the rest of the proof follows directly from the one of Theorem \ref{theo:onpolicy-unbias}.
\end{proof}

We now continue by extending the high-probability confidence intervals of \citet{metelli2018policy}. In the rest of this section, we assume that rewards are bounded in $[-R_{\text{MAX}}, R_{\text{MAX}}]$ to allow for a direct comparison with \citet{metelli2018policy}. We also notice that, all the following results are derived based on the Cantelli's inequality, which is an appropriate choice for one-sided tail bounds. Two-sided tail bounds can be straightforwardly derived by using Chebyshev’s inequality. For completeness, we begin by reporting the original result of \citet{metelli2018policy}. 

\begin{restatable}{theorem}{originalpoisbound}\label{theo:originalpoisbound}
Let $\pi_{\bm{\bar{\theta}}},\pi_{\bm{\theta}} \in \Pi_{\Theta}$ such that $\pi_{\bm{\bar{\theta}}}(\cdot| s) \ll \pi_{\bm{\theta}}(\cdot| s)$ a.s. for every $s \in \mathcal{S}$. Let us define the off-policy expected return estimator with $K$ trajectories of horizon $T$ collected with $\pi_{\bm{\theta}}$:
\begin{align}\label{eq:classical-off-policy-est}
\hat{J}(\bm{\bar{\theta}}/\bm{\theta}) = \frac{1}{K}\sum_{i=1}^{K} \omega_{\bm{\bar{\theta}}/\bm{\theta}}(\bm{\tau}_T^{(i)}) \sum_{t=0}^{T-1} \gamma^t R(s_t^{(i)}, a_t^{(i)}) 
\end{align}
Let $\Lambda = KT$, $\beta_{\delta} = \frac{1-\delta}{\delta}$ and $\phi=R_{\text{MAX}}\frac{1-\gamma^T}{1-\gamma}$, then,
Then, with probability at least $1-\delta$ it holds that:
\begin{align}\label{eq:original-pois-bound}
J(\bm{\bar{\theta}}) \ge \hat{J}(\bm{\bar{\theta}}/\bm{\theta}) - \phi \sqrt{\frac{T \beta_{\delta} {d}_2\left( p(\cdot |\bm{\bar{\theta}},T)\|p(\cdot |\bm{{\theta}},T) \right)}{\Lambda}},
\end{align}
\end{restatable}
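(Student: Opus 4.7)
The plan is to derive the bound via a one-sided concentration argument (Cantelli's inequality) applied to the off-policy estimator $\hat{J}(\bm{\bar{\theta}}/\bm{\theta})$, combined with the standard identification of the second moment of the importance weight with the exponentiated Rényi divergence of order $2$.

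First, I would verify that $\hat{J}(\bm{\bar{\theta}}/\bm{\theta})$ is unbiased for $J(\bm{\bar{\theta}})$, which is an immediate consequence of the importance sampling identity under the absolute continuity assumption $\pi_{\bm{\bar{\theta}}}(\cdot|s) \ll \pi_{\bm{\theta}}(\cdot|s)$. Then I would bound the variance of the single-sample estimator $X_i \coloneqq \omega_{\bm{\bar{\theta}}/\bm{\theta}}(\bm{\tau}_T^{(i)}) G(\bm{\tau}_T^{(i)})$ by its second moment:
\begin{align*}
\Var_{\bm{\tau}_T \sim p(\cdot|\bm{\theta},T)}\!\left[\omega_{\bm{\bar{\theta}}/\bm{\theta}}(\bm{\tau}_T) G(\bm{\tau}_T)\right] \le \E_{\bm{\tau}_T \sim p(\cdot|\bm{\theta},T)}\!\left[\omega_{\bm{\bar{\theta}}/\bm{\theta}}(\bm{\tau}_T)^2 G(\bm{\tau}_T)^2\right] \le \phi^2\, d_2(p(\cdot|\bm{\bar{\theta}},T)\|p(\cdot|\bm{\theta},T)),
\end{align*}
where I use that $|G(\bm{\tau}_T)| \le R_{\text{MAX}}\tfrac{1-\gamma^T}{1-\gamma}=\phi$ because rewards are bounded in $[-R_{\text{MAX}},R_{\text{MAX}}]$, and the Rényi identity $\E_{x \sim Q}[\omega_{P/Q}(x)^2] = d_2(P\|Q)$ from the preliminaries. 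Since the $K$ samples are i.i.d., the variance of the average $\hat{J}(\bm{\bar{\theta}}/\bm{\theta})$ is bounded by $\tfrac{\phi^2}{K}d_2(p(\cdot|\bm{\bar{\theta}},T)\|p(\cdot|\bm{\theta},T))$.

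Next I would invoke Cantelli's (one-sided Chebyshev) inequality: for a random variable $Y$ with finite variance, $\Pr[Y \le \E[Y] - t] \le \frac{\Var[Y]}{\Var[Y]+t^2}$. Setting $Y = \hat{J}(\bm{\bar{\theta}}/\bm{\theta})$ and choosing $t$ so that the right-hand side equals $\delta$ gives $t = \sqrt{\tfrac{1-\delta}{\delta}\Var[Y]} = \sqrt{\beta_\delta \Var[Y]}$. Plugging in the variance bound and using $K = \Lambda/T$ yields, with probability at least $1-\delta$,
\begin{align*}
J(\bm{\bar{\theta}}) = \E[\hat{J}(\bm{\bar{\theta}}/\bm{\theta})] \ge \hat{J}(\bm{\bar{\theta}}/\bm{\theta}) - \phi\sqrt{\frac{T\beta_\delta\, d_2(p(\cdot|\bm{\bar{\theta}},T)\|p(\cdot|\bm{\theta},T))}{\Lambda}},
\end{align*}
which is precisely Equation \eqref{eq:original-pois-bound}.

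The main technical subtlety is the clean calibration of Cantelli's inequality, as a naive Chebyshev argument would produce a factor of $1/\sqrt{\delta}$ rather than $\sqrt{(1-\delta)/\delta}$ and would be two-sided; getting the tighter $\beta_\delta$ constant requires the one-sided form and solving for the correct deviation level. Everything else is routine: unbiasedness, the boundedness of $G(\bm{\tau}_T)$, and the variance-via-second-moment inequality that lets the Rényi divergence appear. No concentration tool beyond second-moment control is needed here, which is the point of using $d_2$ instead of, e.g., Hoeffding, given that importance weights can be heavy-tailed.
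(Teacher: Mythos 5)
Your proof is correct and follows essentially the same route as the paper, which proves the generalized Theorem~\ref{theo:off-policy-ci} by bounding the variance of the estimator through the second moment of the importance weights (identified with $d_2$ via the Rényi identity) and then applying Cantelli's inequality calibrated at $t=\sqrt{\beta_\delta \Var[\,\cdot\,]}$; the uniform-DCS specialization of that proof is exactly your computation. The only nit is that you state Cantelli for the lower tail of $\hat{J}(\bm{\bar{\theta}}/\bm{\theta})$, whereas the event to control is the upper tail $\hat{J}(\bm{\bar{\theta}}/\bm{\theta}) - J(\bm{\bar{\theta}}) \ge t$ (which yields $J(\bm{\bar{\theta}}) \ge \hat{J}(\bm{\bar{\theta}}/\bm{\theta}) - t$ on the complement); since Cantelli holds with the same bound for both tails, this does not affect the argument.
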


At this point, we are ready to provide our generalization of Theorem \ref{theo:originalpoisbound} (which, for $R_{\text{MAX}}=1$, reduces to Theorem \ref{theo:off-policy-ci-main} of Section \ref{sec:eval}).

\begin{theorem}\label{theo:off-policy-ci}
Consider $\pi_{\bm{\bar{\theta}}}$, $\pi_{\bm{{\theta}}} \in \Pi_{\Theta}$ such that $\pi_{\bm{\bar{\theta}}}(\cdot|s) \ll \pi_{\bm{\theta}}(\cdot |s)$ a.s. for all $s \in \mathcal{S}$. Consider an optimization budget $\Lambda \ge T$ and a generic DCS $\bm{m}$.
Then, with probability at least $1-\delta$ it holds that:
\begin{align}\label{eq:general-offpol-bound}\resizebox{.42\textwidth}{!}{$\displaystyle
J(\bm{\bar{\theta}}) \ge \hat{J}_{\bm{m}}(\bm{\bar{\theta}}/\bm{\theta}) - \sqrt{ \beta_{\delta} \sum_{h=1}^T m_h \phi_h^2 d_2(p(\cdot|\bm{\bar{\theta}},h) \| p(\cdot|\bm{\theta},h)) },$}
\end{align}
where $\beta_{\delta} = \frac{1-\delta}{\delta}$ and $\phi_h \coloneqq R_{\textrm{MAX}} \sum_{t=0}^{h-1} \frac{\gamma^t}{n_t}$ .
\end{theorem}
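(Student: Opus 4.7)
The plan is to mirror the strategy of \citet{metelli2018policy} for the uniform DCS, but applied to our generalized truncated estimator of Equation \eqref{eq:offpolicy-dcs-est}. The starting point is Cantelli's inequality: for a random variable $X$ with finite mean $\mu$ and finite variance $\sigma^2$, we have $\mathbb{P}(X \le \mu - \lambda) \le \sigma^2 / (\sigma^2 + \lambda^2)$ for any $\lambda > 0$. Setting this probability equal to $\delta$ and solving for $\lambda$ yields $\lambda = \sigma \sqrt{(1-\delta)/\delta} = \sigma\sqrt{\beta_\delta}$. Applied to $X = \hat{J}_{\bm{m}}(\bm{\bar{\theta}}/\bm{\theta})$, whose expectation equals $J(\bm{\bar{\theta}})$ by the unbiasedness result already established for the off-policy estimator, this yields, with probability at least $1-\delta$,
\[
J(\bm{\bar{\theta}}) \;\ge\; \hat{J}_{\bm{m}}(\bm{\bar{\theta}}/\bm{\theta}) \;-\; \sqrt{\beta_\delta \, \mathrm{Var}\bigl[\hat{J}_{\bm{m}}(\bm{\bar{\theta}}/\bm{\theta})\bigr]}.
\]
So the entire task reduces to bounding the variance of the truncated off-policy estimator.

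Next I would exploit the independence built into the DCS. The $m_h$ trajectories of horizon $h$ are mutually independent, as are trajectories across different horizons, so the variance decomposes additively:
\[
\mathrm{Var}\bigl[\hat{J}_{\bm{m}}(\bm{\bar{\theta}}/\bm{\theta})\bigr] \;=\; \sum_{h=1}^{T} m_h \, \mathrm{Var}_{\bm{\tau}_h \sim p(\cdot|\bm{\theta},h)}\!\left[\omega_{\bm{\bar{\theta}}/\bm{\theta}}(\bm{\tau}_h)\sum_{t=0}^{h-1}\gamma^t \frac{R(s_t,a_t)}{n_t}\right].
\]
For each summand I would use $\mathrm{Var}[Y] \le \E[Y^2]$ and then split the square: the factor containing rewards is deterministically bounded, since $|\sum_{t=0}^{h-1}\gamma^t R(s_t,a_t)/n_t| \le R_{\mathrm{MAX}} \sum_{t=0}^{h-1}\gamma^t/n_t = \phi_h$. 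Pulling this out gives
\[
\E_{\bm{\tau}_h \sim p(\cdot|\bm{\theta},h)}\!\left[\omega_{\bm{\bar{\theta}}/\bm{\theta}}(\bm{\tau}_h)^2\right] \cdot \phi_h^2.
\]

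The final ingredient is the identity between importance-weight moments and the exponentiated Rényi divergence recalled in the Preliminaries: for $\alpha=2$,
\[
\E_{\bm{\tau}_h \sim p(\cdot|\bm{\theta},h)}\!\left[\omega_{\bm{\bar{\theta}}/\bm{\theta}}(\bm{\tau}_h)^2\right] \;=\; d_2\bigl(p(\cdot|\bm{\bar{\theta}},h)\,\|\,p(\cdot|\bm{\theta},h)\bigr).
\]
Combining the three steps yields $\mathrm{Var}[\hat{J}_{\bm{m}}(\bm{\bar{\theta}}/\bm{\theta})] \le \sum_{h=1}^T m_h \phi_h^2\, d_2(p(\cdot|\bm{\bar{\theta}},h)\|p(\cdot|\bm{\theta},h))$, which, plugged into the Cantelli bound above, gives exactly Equation \eqref{eq:general-offpol-bound}.

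I do not anticipate a serious obstacle; the argument is essentially a careful bookkeeping exercise generalizing the proof of Theorem \ref{theo:originalpoisbound}. The only subtle points are (i) verifying that the independence decomposition of the variance respects the DCS structure (each trajectory of length $h$ is sampled independently from $p(\cdot|\bm{\theta},h)$, so cross-covariances vanish), and (ii) keeping the rescaling by $n_t$ inside the bounded reward factor so that it aggregates into $\phi_h$ rather than leaking into the Rényi term — this is what makes the bound depend on $\phi_h$ (which encodes the DCS through $n_t$) while the Rényi divergence depends only on trajectory length $h$.
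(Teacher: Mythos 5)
Your proposal is correct and follows essentially the same route as the paper's proof: decompose the variance across independent trajectories, bound each term by its second moment, pull out the deterministic factor $\bigl|\sum_{t=0}^{h-1}\gamma^t R(s_t,a_t)/n_t\bigr|\le\phi_h$, invoke the identity $\E[\omega^2]=d_2(\cdot\|\cdot)$, and finish with Cantelli. The one nit is that you stated Cantelli for the lower tail $\mathbb{P}(X\le\mu-\lambda)$, which after complementation yields an \emph{upper} bound on $J(\bm{\bar{\theta}})$; the theorem needs the upper tail $\mathbb{P}(\hat{J}_{\bm{m}}-J\ge\lambda)$ as in the paper, but since Cantelli's one-sided bound holds identically for either tail this does not affect the conclusion.
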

\begin{proof}
As in \citet{metelli2018policy}, we split the proof into two parts, i.e., first we upper bound the variance of the estimator, and then we make use of the Cantelli's inequality to prove Equation \eqref{eq:general-offpol-bound}.

Let us start with the variance bound. Consider:

\begin{align*}
\mathop{\Var}_{p_{\bm{m}}(\cdot|\bm{\theta})} \left[ \hat{J}_{\bm{m}}(\bm{\bar{\theta}}/\bm{\theta}) \right] = \mathop{\Var}_{p_{\bm{m}}(\cdot|\bm{\theta})} \left[ \sum_{h=1}^{T} \sum_{i=1}^{m_h} \omega_{\bm{\bar{\theta}}, \bm{\theta}}(\bm{\tau}_{h}^{(i)}) \sum_{t=0}^{h-1} \gamma^t \frac{R(a_t^{(i)},s_t^{(i)})}{n_t}  \right].
\end{align*}

Since the different trajectories are independent, we can write:
\begin{align*}
\mathop{\Var}_{p_{\bm{m}}(\cdot|\bm{\theta})} \left[ \sum_{h=1}^T \sum_{i=1}^{m_h} \omega_{\bm{\bar{\theta}}, \bm{\theta}}(\bm{\tau}_{h}^{(i)}) \sum_{t=0}^{h-1} \gamma^t \frac{R(a_t^{(i)},s_t^{(i)})}{n_t}  \right] & = \sum_{h=1}^T m_h \mathop{\Var}_{\bm{\tau}_h \sim p(\cdot|\bm{\theta},h)} \left[ \omega_{\bm{\bar{\theta}}, \bm{\theta}}(\bm{\tau}_{h}) \sum_{t=0}^{h-1} \gamma^t \frac{R(s_t,a_t)}{n_t}\right] \\ & \le \sum_{h=1}^T m_h \mathop{\E}_{\bm{\tau}_h \sim p(\cdot|\bm{\theta},h)} \left[ \left( \omega_{\bm{\bar{\theta}}, \bm{\theta}}(\bm{\tau}_{h}) \sum_{t=0}^{h-1} \gamma^t \frac{R(s_t,a_t)}{n_t} \right) ^2 \right]  \\ & \le \sum_{h=1}^T m_h \mathop{\E}_{\bm{\tau}_h \sim p(\cdot|\bm{\theta},h)} \left[ \left( \omega_{\bm{\bar{\theta}}, \bm{\theta}}(\bm{\tau}_{h}) \sum_{t=0}^{h-1} \gamma^t \frac{R_{\textrm{MAX}}}{n_t} \right) ^2 \right] \\ & = \sum_{h=1}^T m_h \mathop{\E}_{\bm{\tau}_h \sim p(\cdot|\bm{\theta},h)} \left[ \left( \omega_{\bm{\bar{\theta}}, \bm{\theta}}(\bm{\tau}_{h}) \phi_h \right) ^2 \right] \\ & = \sum_{h=1}^T m_h \phi_h^2 d_2(p(\cdot|\bm{\bar{\theta}},h) || p(\cdot|\bm{{\theta}},h)),
\end{align*}
where the last passage follows from the relationship between the moments of the importance weights and the Rényi divergence. This concludes the first part of the proof.

Concerning the second part, we start from the Cantelli's inequality applied on the random variable $\hat{J}_{\bm{m}}(\bm{\bar{\theta}}/\bm{\theta})$, namely:



\begin{align*}
\mathbb{P}\left( \hat{J}_{\bm{m}}(\bm{\bar{\theta}}/\bm{\theta}) - J(\bm{\bar{\theta}})  \ge \alpha  \right) \le \frac{1}{1 + \frac{\alpha^2}{\mathop{\Var}_{p_{\bm{m}}(\cdot | \bm{\theta})}\left[ \hat{J}_{\bm{m}}(\bm{\bar{\theta}}/\bm{\theta}) \right]}}
\end{align*}

Set $\delta = \frac{1}{1 + \frac{\alpha^2}{\mathop{\Var}_{p_{\bm{m}}(\cdot | \bm{\theta})}\left[ \hat{J}_{\bm{m}}(\bm{\bar{\theta}}/\bm{\theta}) \right]}}$  and consider the complementary event. Then, with probability at least $1-\delta$ it holds that:

\begin{align*}
J(\bm{\bar{\theta}}) & \ge  \hat{J}_{\bm{m}}(\bm{\bar{\theta}}/\bm{\theta}) - \sqrt{ \frac{1-\delta}{\delta} \mathop{\Var}_{p_{\bm{m}}(\cdot | \bm{\theta})}\left[ \hat{J}_{\bm{m}}(\bm{\bar{\theta}}/\bm{\theta}) \right]} \\ & \ge \hat{J}_{\bm{m}}(\bm{\bar{\theta}}/\bm{\theta}) - \sqrt{ \beta_{\delta} \sum_{h=1}^T m_h \phi_h^2 d_2(p(\cdot|\bm{\bar{\theta}},h) \| p(\cdot|\bm{\theta},h)) },
\end{align*}

which concludes the proof.

\end{proof}

As we can appreciate, using the uniform DCS in Equation \eqref{eq:general-offpol-bound} we recover exactly Equation \eqref{eq:original-pois-bound} of Theorem \ref{theo:originalpoisbound}. At this point, one might be tempted to directly minimize Equation the confidence intervals around $J(\bm{\bar{\theta}}/\bm{{\theta}})$ as a function of $\bm{m}$ to obtain a tighter high-probability bound. However, as noted in \citet{metelli2018policy}, computing the Rényi divergence over the trajectory space requires both the approximation of a complex integral, and, for stochastic environments, the knowledge of the transition kernel $P$ of the underlying MDP. Therefore, to derive a tractable expression that can be optimized as a function of the DCS, we further bound each term $d_2(p(\cdot|\bm{\bar{\theta}},h) \| p(\cdot|\bm{\theta},h))$ with $d_2(p(\cdot|\bm{\bar{\theta}},T) \| p(\cdot|\bm{\theta},T))$, which is justified by the following result.

\begin{lemma}\label{lemma:renyi-bound}
Consider two policies $\pi_{\bm{\bar{\theta}}}, \pi_{\bm{\theta}} \in \Pi_{\Theta}$ such that $\pi_{\bm{\bar{\theta}}} \ll \pi_{\bm{\theta}}$ a.s. for every $s \in \mathcal{S}$. Consider $h \in \{1, \dots, T-2\}$, then:
\begin{align*}
d_2(p(\cdot|\bm{\bar{\theta}},h) \| p(\cdot|\bm{\theta},h)) \le d_2(p(\cdot|\bm{\bar{\theta}},h+1) \| p(\cdot|\bm{\theta},h+1)).
\end{align*}
\end{lemma}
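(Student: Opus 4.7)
The plan is to expand the exponentiated $2$-R\'enyi divergence directly from its definition as an expectation of the squared likelihood ratio, factor out the one-step extension of the trajectory, and use the fact that the transition kernel cancels inside the importance weight. First I would write
\begin{align*}
d_2(p(\cdot|\bm{\bar{\theta}},h{+}1)\|p(\cdot|\bm{\theta},h{+}1)) = \int_{\mathcal{T}_{h+1}} p(\bm{\tau}_{h+1}|\bm{\theta},h{+}1)\left(\frac{p(\bm{\tau}_{h+1}|\bm{\bar{\theta}},h{+}1)}{p(\bm{\tau}_{h+1}|\bm{\theta},h{+}1)}\right)^{\!2} \mathrm{d}\bm{\tau}_{h+1},
\end{align*}
and then use the recursion $p(\bm{\tau}_{h+1}|\bm{\theta},h{+}1) = p(\bm{\tau}_h|\bm{\theta},h)\,\pi_{\bm{\theta}}(a_h|s_h)\,P(s_{h+1}|s_h,a_h)$, and similarly for $\bm{\bar{\theta}}$. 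Substituting both into the integrand, the $P(s_{h+1}|s_h,a_h)$ factors cancel in the squared ratio, so the integrand depends on $s_{h+1}$ only through the single factor $P(s_{h+1}|s_h,a_h)$ coming from the outer density.

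Next I would integrate out $s_{h+1}$ using $\int P(s_{h+1}|s_h,a_h)\,\mathrm{d}s_{h+1}=1$, obtaining
\begin{align*}
d_2(p(\cdot|\bm{\bar{\theta}},h{+}1)\|p(\cdot|\bm{\theta},h{+}1))
= \int_{\mathcal{T}_h} p(\bm{\tau}_h|\bm{\theta},h)\,\omega_{\bm{\bar{\theta}}/\bm{\theta}}(\bm{\tau}_h)^2 \left[\int_{\mathcal{A}} \pi_{\bm{\theta}}(a_h|s_h)\left(\frac{\pi_{\bm{\bar{\theta}}}(a_h|s_h)}{\pi_{\bm{\theta}}(a_h|s_h)}\right)^{\!2}\mathrm{d}a_h\right]\mathrm{d}\bm{\tau}_h.
\end{align*}
The inner bracket is exactly $d_2(\pi_{\bm{\bar{\theta}}}(\cdot|s_h)\|\pi_{\bm{\theta}}(\cdot|s_h))$, which by Jensen's inequality (equivalently, the non-negativity of the $2$-R\'enyi divergence between two probability measures) is at least $\mathbb{E}_{a_h \sim \pi_{\bm{\theta}}}[\omega_{\bm{\bar{\theta}}/\bm{\theta}}(a_h|s_h)]^2 = 1$, since $\pi_{\bm{\bar{\theta}}}(\cdot|s_h) \ll \pi_{\bm{\theta}}(\cdot|s_h)$ guarantees that this ratio integrates to one under $\pi_{\bm{\theta}}(\cdot|s_h)$.

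Dropping this nonnegative factor from the integrand gives the lower bound $d_2(p(\cdot|\bm{\bar{\theta}},h{+}1)\|p(\cdot|\bm{\theta},h{+}1)) \ge \int p(\bm{\tau}_h|\bm{\theta},h)\,\omega_{\bm{\bar{\theta}}/\bm{\theta}}(\bm{\tau}_h)^2\,\mathrm{d}\bm{\tau}_h = d_2(p(\cdot|\bm{\bar{\theta}},h)\|p(\cdot|\bm{\theta},h))$, which is the desired inequality. There is no real obstacle here; the only delicate point is making sure the factorization of the densities is handled cleanly so that the transition kernel cancels, and that the absolute-continuity hypothesis on the policies (stated in the lemma) is invoked to justify the integration step and the application of Jensen's inequality to the policy likelihood ratio. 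The argument extends inductively, though a single step suffices for the statement as given.
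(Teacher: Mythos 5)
Your proposal is correct and follows essentially the same route as the paper's proof: both factor the squared importance weight of the length-$(h{+}1)$ trajectory into the length-$h$ part times the last-step policy ratio, identify the conditional expectation of the squared last-step ratio as the per-state exponentiated $2$-R\'enyi divergence, and lower-bound it by $1$ via Jensen. The only cosmetic difference is that you write the argument with explicit integrals and note the cancellation of the transition kernel, while the paper phrases it directly as a nested expectation of importance weights.
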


\begin{proof}
Focus on $h+1$. Due to the link between $d_2(p(\cdot|\bm{\bar{\theta}},h+1) \| p(\cdot|\bm{\theta},h+1))$ and the second moment of the importance weights, we have that:

\begin{align*}
d_2(p(\cdot|\bm{\bar{\theta}},h+1) \| p(\cdot|\bm{\theta},h+1)) & = \mathop{\E}_{\bm{\tau}_{h+1} \sim p(\cdot|\bm{\theta},h+1)} \left[ \prod_{t=0}^h \left(\frac{\pi_{\bm{\bar{\theta}}}(a_t|s_t)}{\pi_{\bm{{\theta}}}(a_t|s_t)}\right)^2 \right] \\ & = \mathop{\E}_{\bm{\tau}_{h+1} \sim p(\cdot|\bm{\theta},h+1)} \left[ \prod_{t=0}^{h-1} \left(\frac{\pi_{\bm{\bar{\theta}}}(a_t|s_t)}{\pi_{\bm{{\theta}}}(a_t|s_t)}\right)^2  \left(\frac{\pi_{\bm{\bar{\theta}}}(a_h|s_h)}{\pi_{\bm{{\theta}}}(a_h|s_h)}\right)^2 \right].
\end{align*}
Now, since $\bm{\tau}_{h+1} = (s_0, a_0, \dots, s_{h}, a_{h}, s_{h+1})$, we can write the last expectation as:
\begin{align}\label{eq:ren-proof-eq1}
\mathop{\E}_{(s_0, a_0, \dots, s_{h-1}, a_{h-1}) \sim p(\cdot|\bm{\theta},h)} \left[ \prod_{t=0}^{h-1} \left(\frac{\pi_{\bm{\bar{\theta}}}(a_t|s_t)}{\pi_{\bm{{\theta}}}(a_t|s_t)}\right)^2  \mathop{\E}_{(s_{h}, a_{h}, s_{h+1}) \sim p_h(\cdot|\bm{\theta},h+1)} \left[ \left(\frac{\pi_{\bm{\bar{\theta}}}(a_h|s_h)}{\pi_{\bm{{\theta}}}(a_h|s_h)}\right)^2 \right] \right].
\end{align}
where with $p_h(\cdot|\bm{\theta},h+1)$ we denote the $h$-th step (i.e., the last one) in a trajectory of length $h+1$. With a little abuse of notation, we drop the dependency on $p_h(\cdot|\bm{\theta},h+1)$ and we write:
\begin{align}\label{eq:ren-proof-eq2}
\mathop{\E}_{s_{h}, a_{h}} \left[ \left(\frac{\pi_{\bm{\bar{\theta}}}(a_h|s_h)}{\pi_{\bm{{\theta}}}(a_h|s_h)}\right)^2 \right] =  \mathop{\E}_{s_{h}} \left[ \mathop{\E}_{a_h \sim \pi_{\bm{\theta}}(\cdot|s_h)} \left[ \left(\frac{\pi_{\bm{\bar{\theta}}}(a_h|s_h)}{\pi_{\bm{{\theta}}}(a_h|s_h)}\right)^2 \right] \right] \ge \inf_{s \in \mathcal{S}} \mathop{\E}_{a \sim \pi_{\bm{\theta}}(\cdot|s)} \left[ \left(\frac{\pi_{\bm{\bar{\theta}}}(a|s)}{\pi_{\bm{{\theta}}}(a|s)}\right)^2 \right] \ge 1,
\end{align}
where the last inequality follows from the fact that $\mathop{\E}_{a \sim \pi_{\bm{\theta}}(\cdot|s)} \left[ \left(\frac{\pi_{\bm{\bar{\theta}}}(a|s)}{\pi_{\bm{{\theta}}}(a|s)}\right)^2 \right]$ can be interpreted as the exponentiated Rényi divergence with $\alpha=2$ at state $s$. 
At this point, plugging Equation \eqref{eq:ren-proof-eq2} in Equation \eqref{eq:ren-proof-eq1}, it follows that:
\begin{align*}
d_2(p(\cdot|\bm{\bar{\theta}},h+1) \| p(\cdot|\bm{\theta},h+1)) & \ge \mathop{\E}_{(s_0, a_0, \dots, s_{h-1}, a_{h-1}) \sim p(\cdot|\bm{\theta},h)} \left[ \prod_{t=0}^{h-1} \left(\frac{\pi_{\bm{\bar{\theta}}}(a_t|s_t)}{\pi_{\bm{{\theta}}}(a_t|s_t)}\right)^2 \right] \\ & = \mathop{\E}_{\bm{\tau_h} \sim p(\cdot|\bm{\theta},h)} \left[ \prod_{t=0}^{h-1} \left(\frac{\pi_{\bm{\bar{\theta}}}(a_t|s_t)}{\pi_{\bm{{\theta}}}(a_t|s_t)}\right)^2 \right] \\ & = d_2(p(\cdot|\bm{\bar{\theta}},h) \| p(\cdot|\bm{\theta},h)),
\end{align*}
which concludes the proof.
\end{proof}

At this point, combining Lemma \ref{lemma:technical-lemma} with Lemma \ref{lemma:renyi-bound} and Theorem \ref{theo:off-policy-ci}, we obtain the following contrained optimization problem.
\begin{equation}\label{sys:offpolicy-hard}
\begin{aligned}
\min_{\bm{n}} \quad & \sqrt{\beta_{\delta} d_2(p(\cdot|\bm{\bar{\theta}},T) || p(\cdot|\bm{\theta},T)) \sum_{t=0}^{T-1} \frac{c_t}{n_t} } \\
\textrm{s.t.} \quad & n_t \ge n_{t+1}, \quad \forall t \in \{0, \dots, T-2\} \\
  & \sum_{t=0}^{T-1} n_t = \Lambda  \\
  & n_t \in \mathbb{N}_+, \quad \forall t \in \{0, \dots, T-1\}
\end{aligned}
\end{equation}
Notice that \eqref{sys:offpolicy-hard} is equivalent, up to constant factors, to the one of the on-policy case. Consequently, it can be solved using the same methodology applied in the previous section. More specifically, following the same proof scheme, it is possible to derive the following result.

\begin{theorem}\label{theo:off-pol-relaxation-remark}
Consider an optimization budget $\Lambda > T$, let $\bm{\tilde{n}}^*$ be the approximately optimal DCS given in Defintion \ref{def:approx-opt-dcs}, and let $\bm{n}^*$ be the optimal solution of the integer optimization problem \eqref{sys:offpolicy-hard}. Moreover, let $f(\bm{n}) = \sqrt{\beta_{\delta} d_2(p(\cdot|\bm{\bar{\theta}},T) || p(\cdot|\bm{\theta},T)) \sum_{t=0}^{T-1} \frac{c_t}{n_t} }$. Then,
\begin{align}
f(\bm{n}^*) \le f(\bm{\tilde{n}}^*) \le \sqrt{2} f(\bm{n}^*)
\end{align}
\end{theorem}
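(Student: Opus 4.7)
The plan is to observe that Problem~\eqref{sys:offpolicy-hard} has exactly the same structure as its on-policy counterpart \eqref{sys:onpolicy-hard}, the only difference being the multiplicative constant inside the square root: $\frac{1}{2}\log(2/\delta)$ is replaced by $\beta_{\delta}\, d_2(p(\cdot|\bm{\bar{\theta}},T)\|p(\cdot|\bm{\theta},T))$. Since this quantity does not depend on $\bm{n}$, it is an inert positive constant for the purposes of minimization and can be factored out of the objective. In particular, it affects neither the set of optimizers nor any ratio $f(\bm{n}_1)/f(\bm{n}_2)$ evaluated on two feasible points, because it appears multiplicatively under the square root in both numerator and denominator.

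Concretely, I would first define $\widetilde{f}(\bm{n}) \coloneqq \sqrt{\sum_{t=0}^{T-1} c_t/n_t}$ and observe that $f(\bm{n}) = \kappa \,\widetilde{f}(\bm{n})$ with $\kappa \coloneqq \sqrt{\beta_{\delta}\, d_2(p(\cdot|\bm{\bar{\theta}},T)\|p(\cdot|\bm{\theta},T))}$, which is a strictly positive constant (it equals zero only in degenerate cases where the two policies coincide on the reachable trajectories, in which case the bound is trivial). Consequently, $\bm{n}^*$ minimizes \eqref{sys:offpolicy-hard} if and only if it minimizes $\widetilde{f}$ subject to the same constraints, i.e., if and only if it minimizes the on-policy integer problem \eqref{sys:onpolicy-hard} up to the analogous constant factor. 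The monotonicity and convexity reductions in Lemma~\ref{lemma:convex-equivalence}, as well as the KKT-based closed form derived in Lemma~\ref{lemma:op} and Lemma~\ref{lemma:convex-closed-form}, depend only on the shape of $\widetilde{f}$ and the constraint set, so they transfer verbatim. The definition of $\tilde{\bm{n}}^*$ in Definition~\ref{def:approx-opt-dcs} is likewise unchanged, as is the element-wise bound $|\bar n_t - \tilde n_t^*|\le 1$ of Lemma~\ref{lemma:diff-bound}.

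Finally, the two-sided bound would be established exactly as in Theorem~\ref{theo:relaxation-remark}. The lower inequality $f(\bm{n}^*)\le f(\tilde{\bm{n}}^*)$ is immediate since $\tilde{\bm{n}}^*$ is feasible and $\bm{n}^*$ is optimal. For the upper inequality, let $\bar{\bm{n}}^*$ denote the optimizer of the convex relaxation; since relaxing the integrality constraint can only improve the objective, $f(\bar{\bm{n}}^*)\le f(\bm{n}^*)$. Then using Lemma~\ref{lemma:diff-bound} and $\tilde n_t^* \ge 1$,
\begin{align*}
f(\bar{\bm{n}}^*) \;=\; \kappa\sqrt{\sum_{t=0}^{T-1}\frac{c_t}{\bar n_t^*}} \;\ge\; \kappa\sqrt{\sum_{t=0}^{T-1}\frac{c_t}{\tilde n_t^*+1}} \;\ge\; \kappa\sqrt{\sum_{t=0}^{T-1}\frac{c_t}{2\tilde n_t^*}} \;=\; \tfrac{1}{\sqrt{2}}\,f(\tilde{\bm{n}}^*),
\end{align*}
which combined with $f(\bar{\bm{n}}^*)\le f(\bm{n}^*)$ yields $f(\tilde{\bm{n}}^*)\le \sqrt{2}\,f(\bm{n}^*)$.

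There is no genuine obstacle here: this is really a remark that the proof of Theorem~\ref{theo:relaxation-remark} goes through unchanged once the common multiplicative constant is factored out. The only mild bookkeeping point is checking that $\kappa>0$ so that dividing by it (or cancelling it in ratios) is legitimate, which holds under the absolute continuity assumption $\pi_{\bar{\bm{\theta}}}(\cdot|s)\ll\pi_{\bm{\theta}}(\cdot|s)$ whenever the two policies are not trivially identical on the support.
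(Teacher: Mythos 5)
Your proposal is correct and is exactly the argument the paper intends: the paper itself only remarks that \eqref{sys:offpolicy-hard} is ``equivalent, up to constant factors'' to \eqref{sys:onpolicy-hard} and that the same proof scheme as Theorem~\ref{theo:relaxation-remark} applies, which is precisely your factor-out-the-constant reduction followed by the identical two-sided bound via $\bar{\bm{n}}^*$, Lemma~\ref{lemma:diff-bound}, and $\tilde n_t^*\ge 1$. The only cosmetic quibble is that your worry about $\kappa=0$ is unnecessary: $d_2(P\|Q)\ge 1$ always (it equals $1$, not $0$, when the policies coincide), so $\kappa\ge\sqrt{\beta_\delta}>0$ unconditionally.
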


\subsection{Further analysis}\label{app:proofs-further}

In Theorem \ref{theo:relaxation-remark}, we have seen that $f(\bm{\tilde{n}}^*) \le \sqrt{2}f(\bm{n^*})$. We have now ask ourselves if we can obtain tighter values for the constant. The following results provides a positive answer.

\begin{proposition}\label{prop:relaxation-remark-improved}
Consider an optimization budget $\Lambda > T$, let $\bm{\tilde{n}}^*$ be the approximately optimal DCS given in Defintion \ref{def:approx-opt-dcs}, let $\bm{n}^*$ be the optimal solution of the integer optimization problem \eqref{sys:onpolicy-hard}, and let $\bm{\bar{n}}^*$ be the solution of the convex relaxation of \eqref{sys:onpolicy-hard} given in Lemma \ref{lemma:convex-closed-form}. Define $\mathcal{X} = \{x \in (0,1) : \bar{n}^*_{T-1} \ge \frac{1}{1-x} \}$. Then, if $\bar{n}^*_{T-1} > 1$ holds, we have that:
\begin{align}\label{eq:relaxation-improved-bound}
f(\bm{\tilde{n}}^*) \le \min_{x \in \mathcal{X}} \sqrt{\frac{1}{x}}f(\bm{n}^*)
\end{align}
\end{proposition}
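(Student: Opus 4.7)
The plan is to refine the proof of Theorem~\ref{theo:relaxation-remark} by exploiting the slack guaranteed by $\bar{n}^*_{T-1} > 1$. In that theorem, the constant $\sqrt{2}$ came from the crude bound $\tilde{n}^*_t + 1 \le 2\tilde{n}^*_t$, which only used $\tilde{n}^*_t \ge 1$. When the relaxed optimum $\bm{\bar{n}}^*$ stays uniformly above $1/(1-x)$, the factor $2$ can be replaced by $1/x$, yielding \eqref{eq:relaxation-improved-bound}.

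Concretely, I would keep the first link $f(\bm{n}^*) \ge f(\bm{\bar{n}}^*)$ of the chain, which follows because \eqref{sys:onpolicy-hard} is a restriction of its convex relaxation. Then I would invoke Lemma~\ref{lemma:convex-closed-form} to observe that, since $c_t$ is strictly decreasing, $\bar{n}^*_t = \sqrt{c_t}(\Lambda - T + h^*)/\sum_{i=0}^{h^*-1}\sqrt{c_i}$ for $t < h^*$ and $\bar{n}^*_t = 1$ for $t \ge h^*$, the sequence $(\bar{n}^*_t)_t$ is non-increasing in $t$. Hence for every $x \in \mathcal{X}$, the assumption $\bar{n}^*_{T-1} \ge 1/(1-x)$ propagates to $\bar{n}^*_t \ge 1/(1-x)$ for every $t$. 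This is equivalent to $\bar{n}^*_t - 1 \ge x\, \bar{n}^*_t$, and combining it with the rounding bound $\tilde{n}^*_t \ge \bar{n}^*_t - 1$ from Lemma~\ref{lemma:diff-bound} gives $\bar{n}^*_t \le \tilde{n}^*_t / x$. Multiplying by $c_t \ge 0$ and summing over $t$,
\begin{align*}
f(\bm{\bar{n}}^*)^2 = \tfrac{1}{2}\log(2/\delta) \sum_{t=0}^{T-1} \frac{c_t}{\bar{n}^*_t} \ge x \cdot \tfrac{1}{2}\log(2/\delta) \sum_{t=0}^{T-1} \frac{c_t}{\tilde{n}^*_t} = x\, f(\bm{\tilde{n}}^*)^2,
\end{align*}
so $f(\bm{\tilde{n}}^*) \le \sqrt{1/x}\, f(\bm{\bar{n}}^*) \le \sqrt{1/x}\, f(\bm{n}^*)$ for each admissible $x$; minimizing over $x \in \mathcal{X}$ produces \eqref{eq:relaxation-improved-bound}.

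I do not anticipate any real obstacle beyond verifying that $\mathcal{X}$ is non-empty whenever $\bar{n}^*_{T-1} > 1$; this is immediate since $1/(1-x) \to 1^+$ as $x \to 0^+$, so sufficiently small $x$ lie in $\mathcal{X}$ and the minimum is well defined. Every other ingredient is already available in Lemmas~\ref{lemma:convex-closed-form} and~\ref{lemma:diff-bound}, so the proof amounts to inserting a single sharper pointwise comparison into the argument of Theorem~\ref{theo:relaxation-remark}.
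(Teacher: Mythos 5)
Your proposal is correct and follows essentially the same route as the paper's proof: the chain $f(\bm{n}^*) \ge f(\bm{\bar{n}}^*)$ followed by the pointwise comparison $\tilde{n}^*_t \ge \bar{n}^*_t - 1 \ge x\,\bar{n}^*_t$ (valid for all $t$ by monotonicity of $\bar{n}^*_t$ once $\bar{n}^*_{T-1} \ge \tfrac{1}{1-x}$) is exactly the argument in the paper, which also relegates the non-emptiness of $\mathcal{X}$ and the propagation of the lower bound to footnotes. No gaps.
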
 

\begin{proof}
Let us analyze: $\frac{f(\bm{\tilde{n}}^*)}{f(\bm{n}^*)}$. For the same reasoning of Theorem \ref{theo:relaxation-remark}, we have that:
\begin{align}\label{eq:relax-imp-eq1}
\frac{f(\bm{\tilde{n}}^*)}{f(\bm{n}^*)} \le \frac{f(\bm{\tilde{n}}^*)}{f(\bm{\bar{n}}^*)}
\end{align}
Then, we provide an upper bound on $f(\bm{\tilde{n}}^*)$ that holds whenever $\bar{n}^*_{T-1} \ge 1$. More specifically, consider a generic $x \in \mathcal{X}$:\footnote{Notice that when $\bar{n}^*_{T-1} > 1$, $\mathcal{X}$ is a non-empty set.}
\begin{align}\label{eq:relax-imp-eq2}
f(\bm{\tilde{n}}^*) = \sqrt{\frac{1}{2}\log(2/\delta) \sum_{t=0}^{T-1} \frac{c_t}{\tilde{n}^*_t}} \le \sqrt{\frac{1}{2}\log(2/\delta) \sum_{t=0}^{T-1} \frac{c_t}{\bar{n}^*_t - 1}} \le \sqrt{\frac{1}{2}\log(2/\delta) \sum_{t=0}^{T-1} \frac{c_t}{x\bar{n}^*_t}} = \sqrt{\frac{1}{x}} f(\bm{\bar{n}^*})
\end{align}
 where, in the first inequality we have used Lemma \ref{lemma:diff-bound} together with $\bar{n}^*_{T-1} > 1$ \footnote{Notice that $\bar{n}^*_{T-1} > 1$ implies $\bar{n}^*_{t} > 1$ for all $t \in \{0, \dots, T-1 \}$.}, and in the second one we have used the definition of $\mathcal{X}$. 
 
Plugging Equation \eqref{eq:relax-imp-eq2} into \eqref{eq:relax-imp-eq1} concludes the proof.
\end{proof}

Proposition \ref{prop:relaxation-remark-improved} provides a tighter upper bound that depends on the number of samples allocated to $\bar{n}^*_{T-1}$ by the solution of the convex relaxation of \eqref{sys:onpolicy-hard}. Due to Lemma \ref{lemma:convex-closed-form}, we know that there is a tight relationship between $\bar{n}^*_t$ and the available budget $\Lambda$. More specifically, we can appreciate that, as the budget increase, so does $\bar{n}^*_{T-1}$. Due to Equation \eqref{eq:relaxation-improved-bound}, this, in turn, implies tighter upper bounds on the quality of the approximately optimal DCS. As an example, suppose that $\bar{n}^*_{T-1} = 100$. Then, we have that:
\begin{align*}
f(\bm{\tilde{n}}^*) \le \sqrt{\frac{100}{99}}f(\bm{n}^*)
\end{align*}

\begin{figure}[t]
\centering\includegraphics[width=4in]{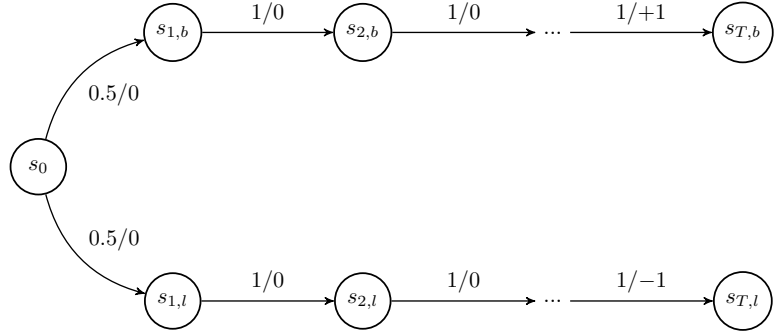} 
\vspace{.3in}
\caption{MDP example with a fixed policy $\pi_{\bm{\bar{\theta}}}$ in which rewards are gathered only at the end of the episode. Each edge reports the probability of taking that action, together with its associated reward. For states in which there is a single edge that is followed with probability $1$ (e.g., $s_{1,l}$), other actions have been masked.}
\label{fig:mrpgrid}
\end{figure}

We now provide some variance analysis for settings in which rewards are gathered at the end of the episode.

\begin{proposition}\label{prop:variance-gridworld}
Consider the MDP of Figure \ref{fig:mrpgrid} together with the indicated policy. Fix a DCS $\bm{m}$ such that $m_T \ge 1$. Then, it holds that:
\begin{align}\label{eq:var-mrp-app-gridworld}
\mathop{\Var}_{p_{\bm{m}}(\cdot|{\bm{\bar{\theta}}})} \left[ \hat{J}_{\bm{m}}({\bm{\bar{\theta}}}/\bm{\bar{\theta}}) \right] = \frac{\gamma^{2(T-1)}}{n_{T-1}}.
\end{align}
\end{proposition}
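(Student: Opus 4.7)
The plan is to exploit the fact that, in the MDP of Figure~\ref{fig:mrpgrid}, rewards are gathered only at step $T-1$, which collapses the estimator to an ordinary sample mean and reduces the variance computation to a single line.

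First, I would observe that the off-policy estimator $\hat{J}_{\bm{m}}(\bm{\bar{\theta}}/\bm{\bar{\theta}})$ in Equation~\eqref{eq:offpolicy-dcs-est} coincides with the on-policy estimator~\eqref{eq:onpolicy-dcs-est}, since every importance weight equals $1$ when target and behavior policies are identical. Because $R(s_t,a_t)=0$ for every $t < T-1$ on this MDP, only the term $t = T-1$ contributes to the inner sum; in addition, a trajectory of length $h < T$ never reaches step $T-1$, so it vanishes from the outer sum. Combined with the identity $n_{T-1}=m_T$ that follows directly from the relationship between $\bm{n}$ and $\bm{m}$ (only trajectories of length $T$ produce a sample at the last step), the estimator collapses to
\begin{align*}
\hat{J}_{\bm{m}}(\bm{\bar{\theta}}/\bm{\bar{\theta}}) = \frac{\gamma^{T-1}}{n_{T-1}}\sum_{i=1}^{n_{T-1}} R\bigl(s_{T-1}^{(i)}, a_{T-1}^{(i)}\bigr).
\end{align*}

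Next, I would invoke the elementary variance identity for sample means. Since the $n_{T-1}$ trajectories of length $T$ are drawn independently from $p(\cdot\mid\bm{\bar{\theta}},T)$, the terminal rewards $R(s_{T-1}^{(i)}, a_{T-1}^{(i)})$ are i.i.d.\ copies of a single random variable $R_{T-1}$. Pulling the constant factor $\gamma^{T-1}/n_{T-1}$ out of the variance and using independence, I get
\begin{align*}
\Var_{p_{\bm{m}}(\cdot\mid\bm{\bar{\theta}})}\!\bigl[\hat{J}_{\bm{m}}(\bm{\bar{\theta}}/\bm{\bar{\theta}})\bigr] = \frac{\gamma^{2(T-1)}}{n_{T-1}^{2}}\,n_{T-1}\,\Var_{\pi_{\bm{\bar{\theta}}}}\!\bigl[R_{T-1}\bigr] = \frac{\gamma^{2(T-1)}}{n_{T-1}}\,\Var_{\pi_{\bm{\bar{\theta}}}}\!\bigl[R_{T-1}\bigr].
\end{align*}

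Finally, I would verify by a direct inspection of Figure~\ref{fig:mrpgrid} that the terminal reward distribution induced by $\pi_{\bm{\bar{\theta}}}$ satisfies $\Var_{\pi_{\bm{\bar{\theta}}}}[R_{T-1}] = 1$, which then yields the claimed equality. The main obstacle is, in fact, only this last bookkeeping step: the structural collapse of the estimator is essentially immediate from the sparsity of the reward, whereas verifying $\Var_{\pi_{\bm{\bar{\theta}}}}[R_{T-1}] = 1$ requires enumerating each reachable terminal state–action pair, weighting it by the product of the policy and transition probabilities labeled on the edges of Figure~\ref{fig:mrpgrid}, and checking that the resulting $\mathbb{E}[R_{T-1}^2]-\mathbb{E}[R_{T-1}]^2$ equals one.
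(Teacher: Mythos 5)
Your proposal is correct and follows essentially the same route as the paper: both exploit trajectory independence and the reward sparsity to reduce the variance to that of the terminal reward scaled by $\gamma^{2(T-1)}/n_{T-1}$ (the paper keeps the sum over all lengths $h$ and observes only $h=T$ survives, while you collapse the estimator to a sample mean first, which is the same computation in a different order). Your deferral of the check $\Var_{\pi_{\bm{\bar{\theta}}}}[R_{T-1}]=1$ to inspection of Figure~\ref{fig:mrpgrid} mirrors the paper's own implicit reliance on the figure (the paper silently drops the centering term, which requires the terminal reward to have zero mean, and then asserts the second moment is one), so there is no gap beyond what the paper itself assumes.
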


\begin{proof}
\begin{align*}
\mathop{\Var}_{p_{\bm{m}}(\cdot|{\bm{\bar{\theta}}})} \left[ \hat{J}_{\bm{m}}({\bm{\bar{\theta}}}/\bm{\bar{\theta}}) \right]  = \mathop{\Var}_{p_{\bm{m}}(\cdot|{\bm{\bar{\theta}}})} \left[ \sum_{h=1}^T \sum_{i=1}^{m_h} \sum_{t=0}^{h-1} \gamma^t \frac{R(s_t^{(i)}, a_t^{(i)})}{n_t} \right]
\end{align*}
Since the different trajectories are independent, we can write:
\begin{align*}
\mathop{\Var}_{p_{\bm{m}}(\cdot|{\bm{\bar{\theta}}})} \left[ \sum_{h=1}^T \sum_{i=1}^{m_h} \sum_{t=0}^{h-1} \gamma^t \frac{R(s_t^{(i)}, a_t^{(i)})}{n_t} \right] & = \sum_{h=1}^T m_h \mathop{\Var}_{p(\cdot|\bm{\bar{\theta}},h)} \left[ \sum_{t=0}^{h-1} \gamma^t \frac{R(s_t, a_t)}{n_t} \right] \\ & = \sum_{h=1}^T m_h \mathop{\E}_{p(\cdot|\bm{\bar{\theta}},h)} \left[ \left( \sum_{t=0}^{h-1} \gamma^t \frac{R(s_t, a_t)}{n_t} - \mathop{\E}_{p(\cdot|\bm{\bar{\theta}},h)} \left[ \sum_{t=0}^{h-1} \gamma^t \frac{R(s_t, a_t)}{n_t} \right]  \right)^2 \right] \\ & = \sum_{h=1}^T m_h \mathop{\E}_{p(\cdot|\bm{\bar{\theta}},h)} \left[ \left( \sum_{t=0}^{h-1} \gamma^t \frac{R(s_t, a_t)}{n_t}   \right)^2 \right] \\ & = \sum_{h=1}^T m_h \mathop{\E}_{p(\cdot|\bm{\bar{\theta}},h)} \left[ \sum_{t=0}^{h-1} \gamma^{2t} \frac{R(s_t, a_t)^2}{n_t^2} + \sum_{t=0}^{h-2} \sum_{t'=t+1}^{h-1} \gamma^{t+t'} \frac{R(s_t, a_t) R(s_{t'},a_{t'}) }{n_t n_{t'}}  \right] \\ & = m_{T} \frac{\gamma^{2(T-1)}}{n_{T-1}^2} \\ & = \frac{\gamma^{2(T-1)}}{n_{T-1}}
\end{align*} 
which concludes the proof.
\end{proof}

From Equation \eqref{eq:var-mrp-app-gridworld}, we can make the following consideration. As noticed in Section \ref{sec:eval}, the uniform strategy is intuitively a good choice for settings such as the one of Figure~\ref{fig:mrpgrid}. Indeed, the only relevant data is gathered at the end of the episode, and, therefore, we need to interact with the environment as much as possible at time $t=T-1$. The variance of the uniform strategy will be given by:
\begin{align}\label{eq:gridworld-uniform-variance}
\frac{\gamma^{2(T-1)}T}{\Lambda}
\end{align}
At this point, what can we say about our approximately optimal DCS? The following proposition summarizes the result.

\begin{proposition}
Consider the MDP of Figure \ref{fig:mrpgrid} together with the indicated policy. Suppose that $T \ge \frac{\log\left( \frac{1}{(1-\gamma)\gamma}\right)}{\log\left( \frac{1}{\gamma}\right)}$ holds. Consider $\bm{\tilde{m}}^*$ as in Definition \eqref{def:approx-opt-dcs}. 
Then, for sufficiently large values of $\Lambda$, it holds that:
\begin{align}\label{eq:gridworld-our-variance}
\mathop{\Var}_{p_{\bm{\tilde{m}}^*}(\cdot|{\bm{\bar{\theta}}})} \left[ \hat{J}_{\bm{m}}({\bm{\bar{\theta}}}/\bm{\bar{\theta}}) \right] \le \frac{2 \gamma^{\frac{1}{2}(T-1)}T}{\Lambda}.
\end{align}
\end{proposition}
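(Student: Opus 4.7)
The plan is to combine the variance identity of Proposition~\ref{prop:variance-gridworld} with a lower bound on $\tilde{n}^*_{T-1}$ derived from the closed form of the approximately optimal DCS in Definition~\ref{def:approx-opt-dcs}, and then to invoke the hypothesis on $T$ to convert the resulting bound into the desired form. The key observation is that, in this goal-based MDP, the variance depends only on $\tilde{n}^*_{T-1}$, while the approximately optimal DCS truncates aggressively and so allocates only $\sim\sqrt{c_{T-1}}\,\Lambda/S$ samples to the last step, where $S=\sum_{i=0}^{T-1}\sqrt{c_i}$.

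First, I would apply Proposition~\ref{prop:variance-gridworld} to write
\[
\mathop{\Var}_{p_{\bm{\tilde{m}}^*}(\cdot|\bm{\bar{\theta}})}\!\left[\hat{J}_{\bm{\tilde{m}}^*}(\bm{\bar{\theta}}/\bm{\bar{\theta}})\right]=\frac{\gamma^{2(T-1)}}{\tilde{n}^*_{T-1}},
\]
reducing the claim to a lower bound on $\tilde{n}^*_{T-1}$. From Definition~\ref{def:approx-opt-dcs}, $\tilde{n}^*_{T-1}\ge \lfloor g_{T-1}\rfloor\ge g_{T-1}-1$ with $g_{T-1}=\sqrt{c_{T-1}}\,\Lambda/S$. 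For $\Lambda$ sufficiently large, $g_{T-1}-1\ge (1-\epsilon)g_{T-1}$ for any prescribed $\epsilon>0$. Next, direct substitution into $c_t=\gamma^t(\gamma^t+\gamma^{t+1}-2\gamma^T)/(1-\gamma)$ yields $c_{T-1}=\gamma^{2(T-1)}$, hence $\sqrt{c_{T-1}}=\gamma^{T-1}$. To upper bound $S$, I would use the crude estimate $c_i\le 2\gamma^{2i}/(1-\gamma)$ (dropping the negative $-2\gamma^T$ contribution and bounding $1+\gamma\le 2$), so that $\sqrt{c_i}\le\gamma^i\sqrt{2/(1-\gamma)}$ and, summing the geometric series, $S\le\sqrt{2}/(1-\gamma)^{3/2}$.

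Putting the pieces together gives
\[
\mathop{\Var}\le\frac{\gamma^{2(T-1)}\,S}{(1-\epsilon)\gamma^{T-1}\Lambda}\le\frac{\sqrt{2}\,\gamma^{T-1}}{(1-\epsilon)(1-\gamma)^{3/2}\Lambda}.
\]
At this point I would deploy the hypothesis, which rearranges to $\gamma^{T-1}\le 1-\gamma$, equivalently $\gamma^{(T-1)/2}\le\sqrt{1-\gamma}$. Writing $\gamma^{T-1}/(1-\gamma)^{3/2}=\gamma^{(T-1)/2}\cdot\gamma^{(T-1)/2}/(1-\gamma)^{3/2}$ and applying this inequality to the second factor yields $\gamma^{T-1}/(1-\gamma)^{3/2}\le\gamma^{(T-1)/2}/(1-\gamma)$, and thus $\mathop{\Var}\le\sqrt{2}\gamma^{(T-1)/2}/((1-\epsilon)(1-\gamma)\Lambda)$. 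The remaining step is a constant check verifying $\sqrt{2}/((1-\epsilon)(1-\gamma))\le 2T$ under the hypothesis; this holds since $\gamma^{T-1}\le 1-\gamma$ forces $T$ to grow at least as $1/(1-\gamma)$ up to logarithmic factors, and $\epsilon$ can be chosen arbitrarily small when $\Lambda$ is taken large.

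The main obstacle is the last step: threading the constants so that the $(1-\gamma)^{3/2}$ in the denominator gets converted into $(1-\gamma)T$ while also extracting a clean $\gamma^{(T-1)/2}$ factor. The crude upper bound on $S$ must lose just enough slack to match the absolute constant $2$ in the target inequality; this is elementary but case-sensitive across the admissible range of $\gamma$. All slack I would exploit comes from two places: the ``sufficiently large $\Lambda$'' clause, which allows $\tilde{n}^*_{T-1}/g_{T-1}\to 1$, and the assumed lower bound on $T$, which makes the geometric tail $\sum_{i}\gamma^i$ effectively saturate at $1/(1-\gamma)$.
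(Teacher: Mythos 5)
Your proposal follows the same skeleton as the paper's proof: the variance identity of Proposition~\ref{prop:variance-gridworld}, the reduction to a lower bound on $\tilde{n}^*_{T-1}$ via the relaxed solution (your $(1-\epsilon)$-rounding argument plays the role of the paper's factor-$2$ comparison through Lemma~\ref{lemma:diff-bound}), and a final use of the hypothesis in the form $\gamma^{T-1}\le 1-\gamma$. The one genuine divergence is how you control $S=\sum_{i=0}^{T-1}\sqrt{c_i}$. The paper bounds the ratio $\sum_i \sqrt{c_i}\,/\sqrt{c_{T-1}}$ by $T/\sqrt{\gamma^{2(T-1)}(1-\gamma)}$, so the factor $T$ sits in the numerator from the outset and the hypothesis is invoked only once, to absorb $1/\sqrt{1-\gamma}$ into $\gamma^{-(T-1)/2}$. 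You instead saturate the geometric series at $1/(1-\gamma)$, getting $S\le \sqrt{2}/(1-\gamma)^{3/2}$, and must then convert a residual $1/(1-\gamma)$ back into $T$ at the very end. That conversion is the only place your argument is actually incomplete: the justification you offer (``$\gamma^{T-1}\le 1-\gamma$ forces $T$ to grow at least as $1/(1-\gamma)$ up to logarithmic factors'') says nothing useful for $\gamma$ bounded away from $1$, where there are no logarithmic factors and the comparison is tight. The check does go through, but it needs to be carried out: setting $u=1-\gamma$, the hypothesis gives $T\ge 1+\ln(1/u)/\ln(1/(1-u))\ge 1+(1-u)\ln(1/u)/u$ (using $-\ln(1-u)\le u/(1-u)$), hence $T(1-\gamma)\ge u+(1-u)\ln(1/u)$; the right-hand side is minimized near $u\approx 0.64$ with value $\approx 0.80$, which exceeds the $1/(\sqrt{2}(1-\epsilon))\approx 0.707/(1-\epsilon)$ you need, but only by a factor of about $1.13$. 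So your route works for $\epsilon$ small enough, yet any further loss of a modest constant anywhere in the chain would break it; the paper's choice of bounding $S$ by a multiple of $T$ rather than by $1/(1-\gamma)^{3/2}$ is exactly what sidesteps this delicate final comparison.
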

\begin{proof}
Due to Proposition \ref{prop:variance-gridworld}, we are interested in studying:
\begin{align*}
\frac{\gamma^{2(T-1)}}{\tilde{n}^*_{T-1}}
\end{align*}
Let $\bm{\bar{n}}^*$ be the solution of the convex relaxation given in Lemma \ref{lemma:convex-closed-form}. Then,
\begin{align*}
\frac{\gamma^{2(T-1)}}{\bar{n}^*_{T-1}} \ge \frac{\gamma^{2(T-1)}}{\tilde{n}^*_{T-1} + 1} \ge \frac{\gamma^{2(T-1)}}{2\tilde{n}^*_{T-1}}
\end{align*}
Which leads to:
\begin{align}\label{eq:gridworld-our-variance-eq1}
\frac{\gamma^{2(T-1)}}{\tilde{n}^*_{T-1}} \le \frac{2 \gamma^{2(T-1)}}{\bar{n}^*_{T-1}} 
\end{align}
At this point, due to Lemma \ref{lemma:convex-closed-form}, for sufficiently large values of $\Lambda$, we can substitute $\bar{n}_{T-1}^*$ with:
\begin{align}\label{eq:gridworld-our-variance-eq2}
\frac{2 \gamma^{2(T-1)}}{\bar{n}^*_{T-1}} = \frac{2 \gamma^{2(T-1)} \sum_{t=0}^{T-1}\sqrt{c_t}}{\Lambda\sqrt{c_{T-1}}} \le \frac{2 \gamma^{2(T-1)} T}{\Lambda\sqrt{\gamma^{T-1}(\gamma^{T-1}-\gamma^{T})}} = \frac{2 \gamma^{2(T-1)} T}{\Lambda\sqrt{\gamma^{2(T-1)}(1-\gamma)}}
\end{align}
where in the inequality step we have used the definition of $c_t$. Moreover, since $T \ge \frac{\log\left( \frac{1}{(1-\gamma)\gamma}\right)}{\log\left( \frac{1}{\gamma}\right)}$ holds by assumption, we have that $(1-\gamma) \ge \gamma^{T-1}$, therefore, we can further upper-bound Equation \eqref{eq:gridworld-our-variance-eq2} with:
\begin{align*}
\frac{2 \gamma^{2(T-1)} T}{\Lambda\sqrt{\gamma^{3(T-1)}}} = \frac{2 \gamma^{\frac{1}{2}(T-1)} T}{\Lambda}
\end{align*}
which concludes the proof.
\end{proof}

We now make some remarks both on the setting and the comparison between the uniform strategy and our approximately optimal DCS $\bm{\tilde{m}}^*$. First of all, from Equation \eqref{eq:var-mrp-app-gridworld} we notice that the variance tends to $0$ with an exponential rate w.r.t. the horizon $T$, meaning that, when the horizon is sufficiently large, any method will enjoy numerically low variance. Furthermore, Equation \eqref{eq:gridworld-our-variance} shows that, for sufficiently large values of $\Lambda$, the variance of $\hat{J}_{\bm{m}}({\bm{\bar{\theta}}}/\bm{\bar{\theta}})$ displays a very similar behavior w.r.t. the one obtained by the uniform approach; the only difference, indeed, stands in a different power of $\gamma$. Finally, we remark that, in such sparse reward settings, $\gamma$ is usually very close to $1$ to avoid nullifying the rewards that are gathered at the end of the episode. However, whenever this happens, $\bm{\tilde{m}}^*$ will tend to the uniform strategy.


\section{Visualizations}\label{app:vis}
In this Section, we provide some visualizations of the improvement in the confidence intervals in Figure~\ref{fig:bound-imp} when varying $\gamma$ and $T$, and keeping fixed $\Lambda$. The improvement is reported as $\frac{100 f(\bm{\tilde{n}}^*)}{f(\bm{n}_u)}$, where $\bm{n}_u$ is the uniform allocation strategy and $f(\bm{n}) = \sqrt{\frac{1}{2}\log(2/\delta) \sum_{t=0}^{T-1} \frac{c_t}{n_t}}$ . Moreover, Figure~\ref{fig:strat-vis} provides visualizations on $\bm{\tilde{n}}^*$ as a function of $\gamma$ and for different values of $\Lambda$ when $T=10$.

\begin{figure*}[t!]
\centering\includegraphics[width=10cm]{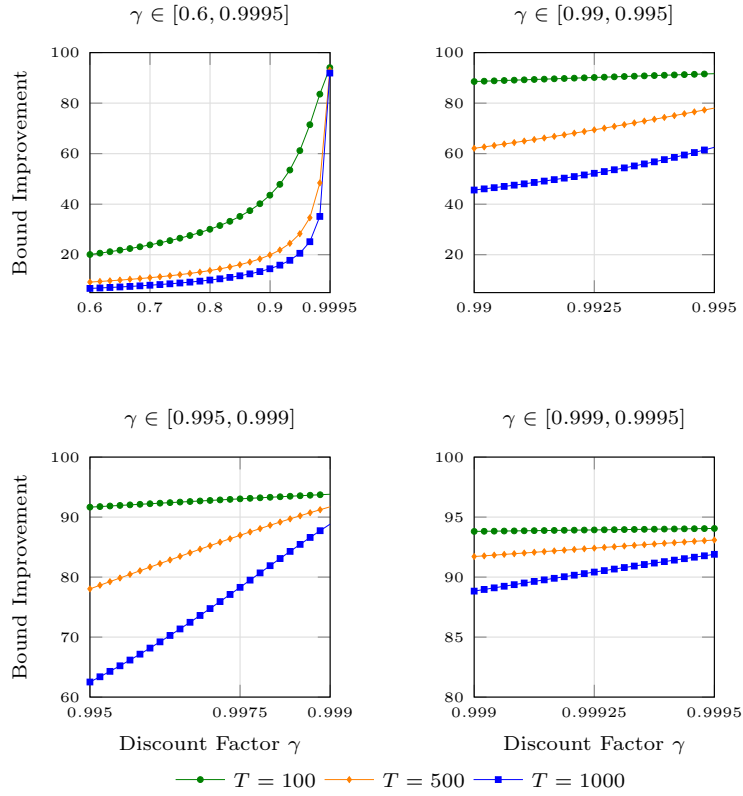} 
\vspace{.3in}
\caption{Visualization of the confidence interval improvements reported as a function of $\gamma$ for different values of $T$, when using $\Lambda=10k$.} 
\label{fig:bound-imp}
\end{figure*}

\begin{figure*}[t!]
\centering\includegraphics[width=18cm]{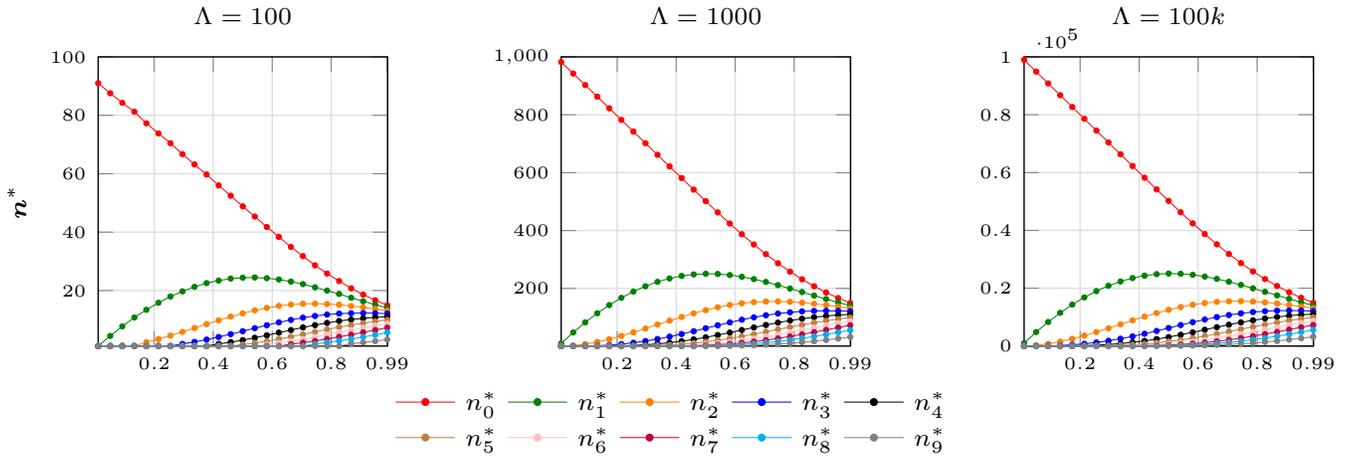} 
\vspace{.3in}
\caption{Visualization of $\bm{n}^*$ for different values of $\gamma$ and $\Lambda$ when using $T=10$.} 
\label{fig:strat-vis}
\end{figure*}


\section{Additional details on POIS and TT-POIS}\label{app:pois-details}

\begin{algorithm}[t]
\caption{Truncating Trajectories in Policy Optimization via Importance Sampling (TT-POIS)} \label{alg:ttpo}
\begin{algorithmic}[1]
\REQUIRE{Optimization budget $\Lambda$, confidence level $\delta$}
\STATE{Initialize $\bm{\theta}_0^0$ arbitrarly}
\STATE{Compute $\bm{\tilde{m}}^*$ as in Definition \ref{def:approx-opt-dcs}}
\FOR{$j=0,1,2,\dots $}
\STATE{Collect dataset $\mathcal{D} \sim p_{\bm{\tilde{m}}^*}(\cdot | \bm{\theta}_0^j)$}
\FOR{$k=0, 1, 2, \dots$}
\STATE{Compute $\nabla \mathcal{L}_{\delta}(\bm{\theta}_k^j / \bm{\theta}_0^j)$ and $\alpha_k$}
\STATE{$\bm{\theta}_{k+1}^j = \bm{\theta}_k^j + \alpha_k \nabla \mathcal{L}(\bm{\theta}_k^j / \bm{\theta}_0^j)$}
\ENDFOR
\ENDFOR
\end{algorithmic}
\end{algorithm}

\subsection{Pseudo-code and other details}

The pseudo-code for our algorithm, \ttpois, can be found in Algorithm \ref{alg:ttpo}. As one can notice, by replacing $\bm{\tilde{m}}^*$ with the uniform-in-the-horizon DCS, we recover the original pseudo-code of \pois \citep{metelli2018policy}. We remark that, as in \pois, $\delta$ is treated as an hyper-parameter, and that in Line $6$, the step size is computed online via line search. 

We now recall the definition of the objective function $\mathcal{L}_{\delta}$ provided in Section \ref{sec:optim}, namely:
\begin{align}\label{eq:obj-fun-app}
\mathcal{L}_{\delta}(\bm{\bar{\theta}}/\bm{\theta})  \coloneqq \hat{J}_{\bm{\tilde{m}}^*}(\bm{\bar{\theta}}/\bm{\theta}) -\sqrt{\beta_{\delta} \sum_{h=1}^T \tilde{m}_h^* (\tilde{\phi}_h^*)^2 \hat{d}_2(p(\cdot | \bm{\bar{\theta}},h ) | p(\cdot| \bm{\theta},h) )},
\end{align}
where $\tilde{\phi}_h^* = \sum_{t=0}^{h-1} \frac{\gamma^t}{\tilde{n}_t^*}$ and $\hat{d}_2(p(\cdot | \bm{\bar{\theta}},h ) | p(\cdot| \bm{\theta},h))$ is a sampled-based approximation for ${d}_2(p(\cdot | \bm{\bar{\theta}},h) | p(\cdot| \bm{\theta},h))$. More specifically,
\begin{align}\label{eq:ttpois-emp-ren}
\hat{d}_2(p(\cdot | \bm{\bar{\theta}},h ) | p(\cdot| \bm{\theta},h)) = \frac{1}{n_{h}} \sum_{i=h}^{T} \sum_{j=1}^{\tilde{m}_i^*} \prod_{t=0}^{h-1} d_2( \pi_{\bm{\bar{\theta}}}(\cdot|s_{\tau_i^{(j)},t})  \|  \pi_{\bm{{\theta}}}(\cdot|s_{\tau_i^{(j)},t})) 
\end{align}
Notice that Equation \eqref{eq:ttpois-emp-ren}, when applied with the uniform DCS, recovers the same approximation used in \citet{metelli2018policy} (see their Equation 41).

As an additional comment, we notice that in Section \ref{sec:eval}, Equation \eqref{eq:obj-fun-app} has been presented for rewards in $[0,1]$. For the more general case in which rewards are defined in $[-R_{\text{MAX}}, R_{\text{MAX}}]$, it is sufficient to replace $\tilde{\phi}_h^* = \sum_{t=0}^{h-1} \frac{\gamma^t}{\tilde{n}_t^*}$ with $\tilde{\phi}_h^* = R_{\text{MAX}} \sum_{t=0}^{h-1} \frac{\gamma^t}{\tilde{n}_t^*}$ (see Theorem \ref{theo:off-policy-ci}).

\subsection{Implementation Details}

Our implementation follows directly from the original one of POIS \citep{metelli2018policy}. More specifically, the line search method adopted for performing the update (Line $6$ in Algorithm \ref{alg:ttpo}) is the same of \citet{metelli2018policy}. In this sense, the reader can refer to Appendix E.1 of \citet{metelli2018policy} for futher details. Compared to \citet{metelli2018policy}, however, we introduce the two following hyper-parameters that have been used, in our experiments, both for \pois and \ttpois. 

\paragraph{Minimum-maximum empirical reward}
In the original version of POIS \citep{metelli2018policy} (and also in our experiments), in $\mathcal{L}(\bm{\bar{\theta}}/\bm{\theta})$, $R_{\textrm{MAX}}$ is replaced with the maximum empirical reward $\hat{R}_{\textrm{MAX}}$ that is collected at the current training iteration. This leads to a further adaptivity of $\mathcal{L}(\bm{\bar{\theta}}/\bm{\theta})$. However, in domain such as the Reacher, where the rewards tend to be close to $0$ when good policies are learnt, using the maximum empirical reward might lead to numerical instabilities. Indeed, in these situations, the adaptive trust region will approach $0$, and both \pois and \ttpois will simply maximimize $\hat{J}(\bm{\bar{\theta}}/\bm{\theta})$, with no control on the variance of the importance weights. For this reason, we define an additional hyper-parameter, $R_{\textrm{MIN-MAX}}$, that defines a minimum threshold for $\hat{R}_{\textrm{MAX}}$. If $\hat{R}_{\textrm{MAX}}$ falls below ${R}_{\textrm{MIN-MAX}}$, then ${R}_{\textrm{MIN-MAX}}$ will be used in $\mathcal{L}(\bm{\bar{\theta}}/\bm{\theta})$.

\paragraph{Importance weights clipping}
When employing \pois and \ttpois in domains with discrete actions (e.g., the supply chain), it might happen that in some states some actions are highly sub-optimal. In this case, even if we are controlling the variance of the importance weights, the objective function might lead to shrink their probability to $0$. In training, this can results in NAN gradients and numerical instabilitities. For this reason, we clip the importance weights in $\hat{J}_{\bm{m}}(\bm{\bar{\theta}}/\bm{\theta})$ with an hyper-parameter $IW_c$.


\section{Experiment Details and Additional Results}\label{app:exp}
In this Section, we provide further details on the experiments and additional results.
More specifically:
\begin{itemize}
\item Section~\ref{app:env-details} provides an in-depth description for each environment that has been considered.
\item Section~\ref{app:exp-policy-eval} provides results that purely focus on the evaluation setting of Section \ref{sec:eval}.
\item Section~\ref{app:exp-opt-ablation} provides ablation experiments on the policy optimization setting.
\item Section~\ref{app:additional-opt-results} provides additional results on the experimental setting of Section \ref{sec:exp}. More specifically, results with additional values of $\Lambda$ and $\gamma$ are presented.
\item Section~\ref{app:undiscounted-res} provides additional results on the experimental setting of Section \ref{sec:exp}. More specifically, the undiscounted return metric is reported.
\item Section~\ref{app:hyper-param} reports hyper-parameters and other practical details.
\end{itemize}

\subsection{Environment Details}\label{app:env-details}

\subsubsection{Evaluation Domain}
We now provide a description of the environment that is used to conduct evaluation experiments, whose results are presented in Section \ref{app:exp-policy-eval}.
More specifically, we designed a domain with the following features:
\begin{itemize}
\item The performance of any policy can be easily computed in closed form.
\item It can easily generalize to any value of $T$ so that we can study the behavior of the algorithm varying $T$.
\end{itemize}
Given these general features, we designed the following environment. The state is described solely by the integer variable $t$, which represents the step in which the action is taken.  The action space is discrete, with $2$ possible actions. Concerning the reward function, since we want it to generalize to any horizon $T$, we made the following design choices. We restricted ourselves to $T \in \{100, 1000, 2000\}$. Then, focus for the sake of exposition on $T=100$. Define $\bm{g}_1 = (1, 4, 3, 1, 1.5, 0.4, 4, 4.1, 3, 2, 4)$ and $\bm{g}_2 = (4, 1, 1, 3, 4, 1.5, 0.1, 5, 1, 1, 4)$. Then, if $t \notin \{0, 10, 20, \dots, 90, 99\}$, $R(a_1, s_t) = 0$ and $R(a_2, s_t) = 0$. If $ t \in  \{0, 10, 20, \dots, 90, 99\}$, denote with $i(t)$ the corresponding index of the element $t$ within the vector $\{0, 10, 20, \dots, 90, 99\}$; then $R(a_1, s_t) = \mathcal{N}\left(g_{1,i(t)}, 0.1\right)$ and $R(a_2, s_t) = \mathcal{N}\left(g_{2,i(t)}, 0.1\right)$.

Similar reasoning extends to the cases in which $T$ is equal to $1000$ and $2000$ by considering the vectors $\{0, 100, 200, 300, 400, 500, 600, 700, 800, 900, 999\}$ and $\{0,  200,  400,  600,  800, 1000, 1200, 1400, 1600, 1800, 1999\}$ respectively. Further details can be found in the code base we provide.



\subsubsection{Corridor Domain}
In order to compare \pois and \ttpois on very similar domains but with different reward functions we design the following experiment. More specifically, the domain represents a corridor: the agent starts in the middle, and needs to reach the right extreme. To this end, it has two possible actions: ``go left" and ``go right", which both succeed with a certain probability. Then, we consider the following reward functions:
\begin{itemize}
\item Sparse reward: the reward is equal to $1$ only if the agent has reached the right extreme of the corridor, and $0$ otherwise. 
\item Dense reward: if the selected action is ``go right", the agent receives with high probability reward $1$, while if the selected action is ``go left", it receives with high probability reward $-1$.
\end{itemize}

More formally, we consider a continuous state space $\mathcal{S} \in [-x_{\text{MAX}}, x_{\text{MAX}}]$ for some $x_{\text{MAX}} > 0$. The initial state is fixed, and equal to $0$, namely $s_0 = 0$. Then, denote with $a_1$ the action ``go right" and with $a_2$ the action ``go left". Then, consider $\bar{p} \in (0.5, 1)$ (i.e., the probability of success of a given action). Let $x_t$ be the current state, then, if $a_t = a_1$, $x_{t+1} = 1+x_t+q$ where $q \sim \mathcal{N}(0, 0.1)$ with probability $\bar{p}$, and $x_{t+1} = -1+x_t+q$ where $q \sim \mathcal{N}(0, 0.1)$ with probability $1-\bar{p}$. Similarly, if $a_t = a_2$, $x_{t+1} = -1+x_t+q$ where $q \sim \mathcal{N}(0, 0.1)$ with probability $\bar{p}$, and $x_{t+1} = 1+x_t+q$ where $q \sim \mathcal{N}(0, 0.1)$ with probability $1-\bar{p}$. Then, we also clip the value of $x_t$ to be in the specified range; namely we clip $x_t$ in $[-x_{\text{MAX}}, x_{\text{MAX}}]$. Then, let the goal state be $x_g = x_{\text{MAX}}$. We say that the goal is reached whenever $|x - x_g| < 0.5$ holds. Furthermore, states for which the goal is reached are modelled as absorbing states.

For what concerns the reward function, instead, let us first consider the sparse reward setting. In this case, let the goal state be $x_g = x_{\text{MAX}}$. Then, $R(x_t, a_t) = 1$ if $|x_t - x_g| < 0.5$, $0$ otherwise. Furthermore, we set $T=100$ and $x_g = 12$.\footnote{As we shall show, at the beginning of the training process, with these parameters, the agent will rarely reach the goal.}

For the dense reward setting, instead, $R(x, \cdot) = 0$ for $x$ such that $|x - x_g| < 0.5$ holds. For $x$ such that $|x - x_g| < 0.5$ does not hold instead, $R(\cdot, a_1)$ is $0.2$ with probability $\bar{p}$ and $-0.2$ with probability $1-\bar{p}$. Moreover, $R(\cdot, a_2)$ is $-0.2$ with probability $\bar{p}$ and $0.2$ with probability $1-\bar{p}$. 
Furthermore, we set $T=1000$ and $x_g = 1000$.

Further details can be found in the code base we provide.

\subsubsection{Dam Control}
We now provide additional details on the dam environment. First, we detail the adopted parameters of the environment, then we describe the state observed by the agents and the action space considered. 

\begin{figure*}[t!]
\centering\includegraphics[width=8cm]{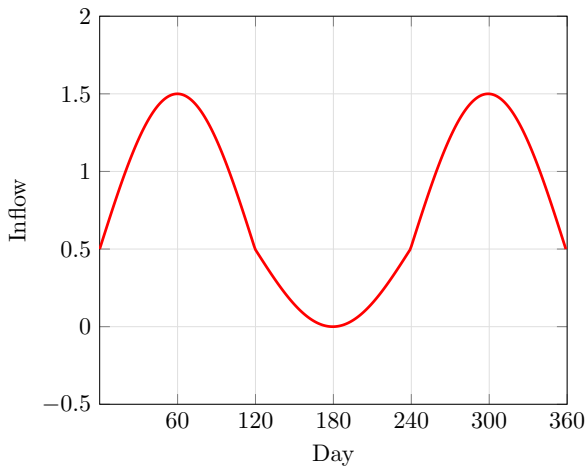} 
\vspace{.3in}
\caption{Mean inflow $i_t$ per day $t$ of the Dam control environment considered in our experiments. The inflow is considered over a period of $1$ year.} 
\label{fig:daminflow}
\end{figure*}

The parameters of the environment are the default ones of \cite{tirinzoni2018importance}:
\begin{itemize}
\item The demand $D$ is fixed and equal to $10$.
\item The inflow profile $i_t$ is a period function (i.e., Figure~\ref{fig:daminflow}) plus Gaussian noise with $\sigma=2$.
\item The initial storage $s_0$ is set to $200$.
\item The flooding threshold $F$ is set to $300$.
\item The reward is computed as $-c_1 \max\{0, s_t - F \} -c_2 \max\{0, D - a_t \}^2$ with $c_1 = c_2 = 0.5$. Moreover, rewards are rescaled with $0.01$ for stability purposes.
\end{itemize}

The state given by the agent is a $7$-dimensional vector given by:
\begin{itemize}
\item The storage at the current day, namely $s_t$. This quantity is normalized using $2 * \frac{s_t - 50}{500-50} - 1$.
\item $6$ basis functions $\phi_i(t)$ (with $i \in \{1, \dots 6\}$) are used to describe the time $t$. More specifically, $\phi_i(t) = |t - c_i|$, where $c_i = \{60, 120, 180, 240, 300, 360 \}$. The agent then observes a normalized version of $\phi_i(t)$, namely $2 * \frac{\phi_i(t)}{360} - 1$.
\end{itemize}
The agent always start at day $t=0$ with a storage $s_0$ of $200$ and the interaction proceeds for $1080$ days (i.e., $3$ years).
The available actions $a_t$ are discrete and $21$. Each action $i$ represents the amount of water that the agent intends to release at day $t$. Differently from \citet{tirinzoni2018importance}, we are considering a case in which there are operational constraints on the amount of water to release (i.e., the agent cannot take all the possible values $a_t \in [0, s_t]$, but it is limited to $\{0, \dots, 20\}$). Moreover, we consider a control frequency of $3$ days: once an action has been chosen at day $t$, it is persisted for $3$ days in a row. This is mainly for performance reasons: all the policy-gradient based method that we tried were failing to learn without this additional trick. 


\subsubsection{Reacher}
The domain is the standard one from the MuJoCo control suite \citep{mujoco}. We set the episode duration to $T=200$ timesteps, with a new goal target popping up if the previous one is reached.

\subsubsection{Multi-Echelon Supply Chain}
As originally done, we consider the optimization problem over a period of $30$ days. All the details concerning this domain (e.g., demands, lead times, inventory costs, initial states, backlog costs, prices are products are sold) can be found in \citet{hubbs2020or}, indeed, we rely on their publicly available repository for our experiments. We report here, however, a couple of modifications that we have taken to improve the performances.  The state of the agent (i.e., a vector $v$ with dimension $33$) has been normalized according to $\frac{v}{20} - 1$. The action space, that was originally a multi-discrete space of dimension $[100, 90, 80]$, has been shrinked to $[25, 25, 25]$ to speed up the learning process. Indeed, larger action values are highly sub-optimal given the demand curve and the inventory costs. 


\subsection{Policy Evaluation Experiments}\label{app:exp-policy-eval}

\begin{figure*}[t!]
\centering\includegraphics[width=\textwidth]{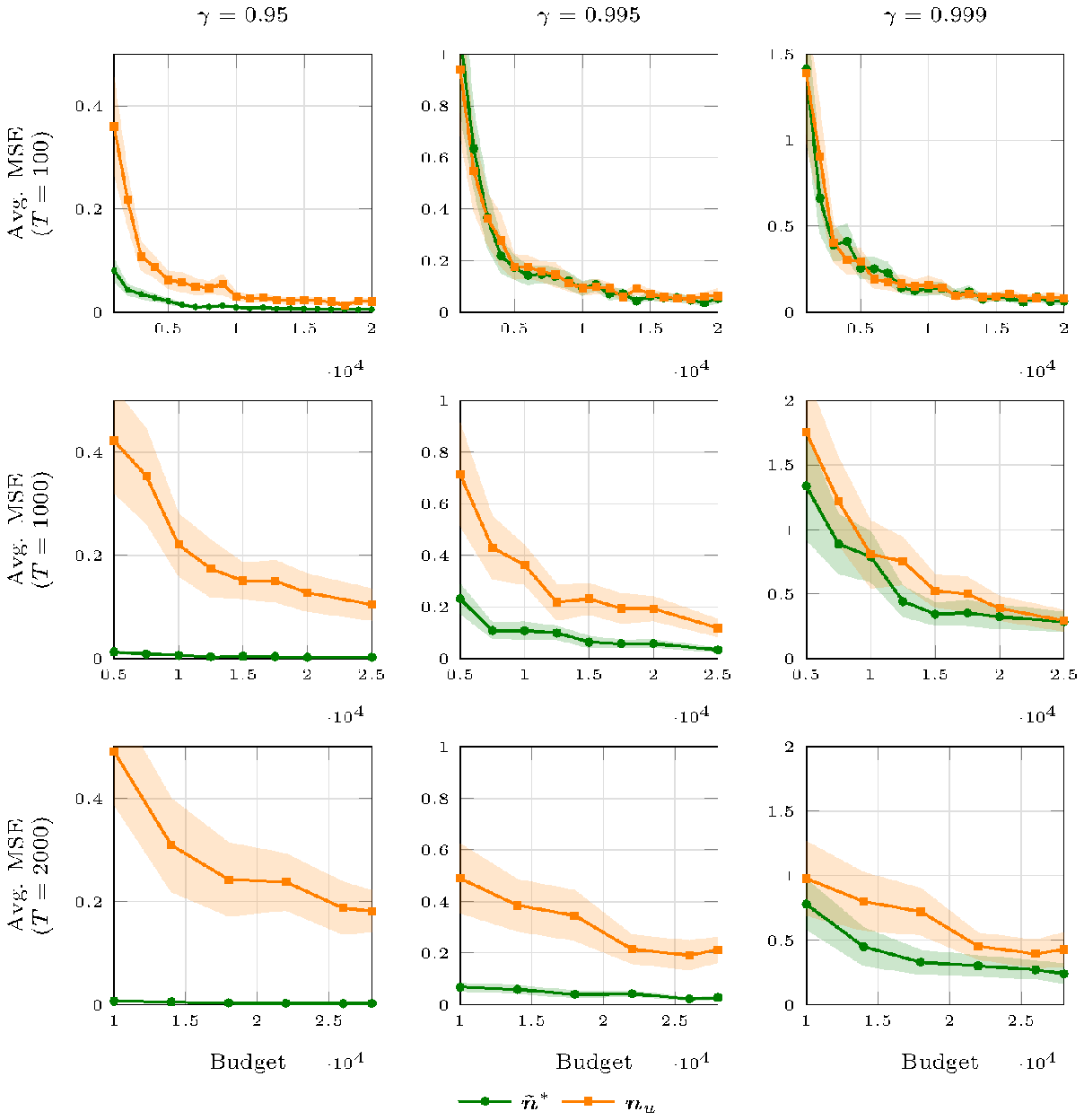} 
\vspace{.3in}
\caption{Experimental results (mean and $95\%$ confidence intervals of $100$ runs) on the Policy Evaluation domain described in Section \ref{app:env-details}. Each picture reports the average MSE (i.e., $y$-axis) against the budget that has been spent to collect trajectories (i.e., $x$-axis). More specifically, we report results when using our approximately optimal DCS $\bm{\tilde{n}}^*$ and the uniform data collection strategy $\bm{n}_u$. The policy that is estimated is the uniform one, and the data have been collected on-policy. The first row of the figure is obtained with $T=100$, the second one with $T=1000$, and the third one with $T=2000$. The third column with $\gamma=0.95$, the second one with $\gamma=0.995$, and the third one with $\gamma=0.999$.} 
\label{fig:onpoleval}
\end{figure*}

\begin{figure*}[t!]
\centering\includegraphics[width=\textwidth]{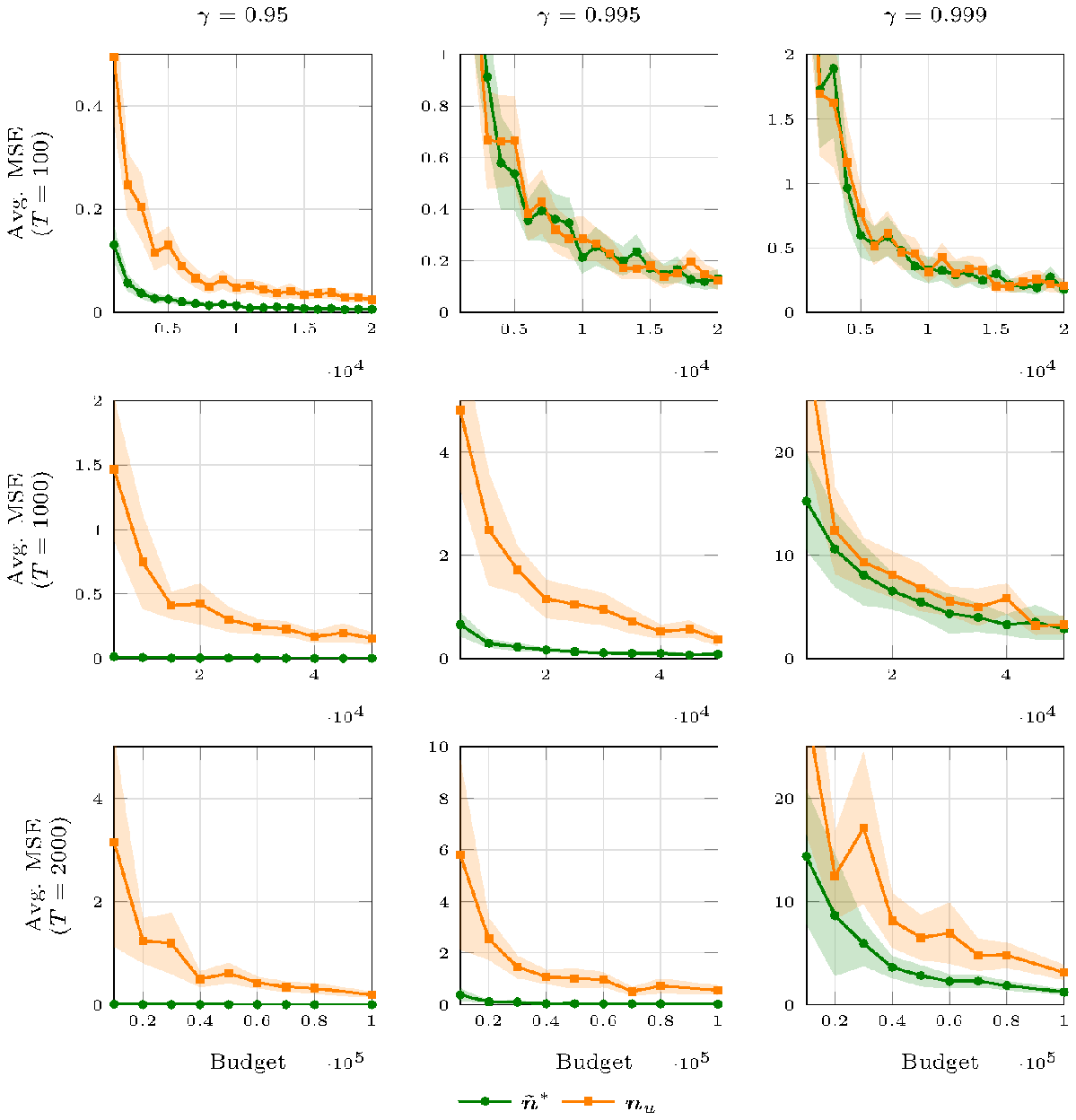} 
\vspace{.3in}
\caption{Experimental results (mean and $95\%$ confidence intervals of $100$ runs) on the Policy Evaluation domain described in Section \ref{app:env-details}. Each picture reports the average MSE (i.e., $y$-axis) against the budget that has been spent to collect trajectories (i.e., $x$-axis). More specifically, we report results when using our approximately optimal DCS $\bm{\tilde{n}}^*$ and the uniform data collection strategy $\bm{n}_u$. The policy that is estimated takes $a_1$ with probability $0.49$ and $a_2$ with probability $0.51$; the data have been collected using the random policy (i.e., off-policy evaluation). The first row of the figure is obtained with $T=100$, the second one with $T=1000$, and the third one with $T=2000$. The third column with $\gamma=0.95$, the second one with $\gamma=0.995$, and the third one with $\gamma=0.999$.} 
\label{fig:offpoleval}
\end{figure*}

Figure~\ref{fig:onpoleval} and \ref{fig:offpoleval} report results for our experimental evaluation setting. Figure~\ref{fig:onpoleval} studies the on-policy evaluation problem, where we want to evaluate the random policy with data collected from the random policy itself. Figure~\ref{fig:offpoleval}, instead, focuses on the off-policy setting: we consider the problem of estimating the policy that takes $a_1$ with probability $0.49$ and $a_2$ with probability $0.51$ with data collected from the random policy. In both Figures, each picture compares the performance, in term of MSE, of $\bm{\tilde{m}}^*$ against the usual uniform-in-the-horizon DCS.\footnote{Notice that in the considered environment, the exact value of any policy can easily be computed in closed form. It follows that the MSE can be computed exactly.} Each experiment shows the mean MSE, together with $95\%$ confidence intervals, over $100$ runs. To conduct exhaustive experimentation, we have  varied both the value of $\Lambda$ abd $\gamma$. The results are consistent with our theory: for small values of $\gamma$ (or, equivalently, for larger values of $T$) the benefits of $\bm{\tilde{m}}^*$ increases.

\subsection{Policy Optimization Ablations}\label{app:exp-opt-ablation}

\begin{figure*}[t!]
\centering\includegraphics[width=\textwidth]{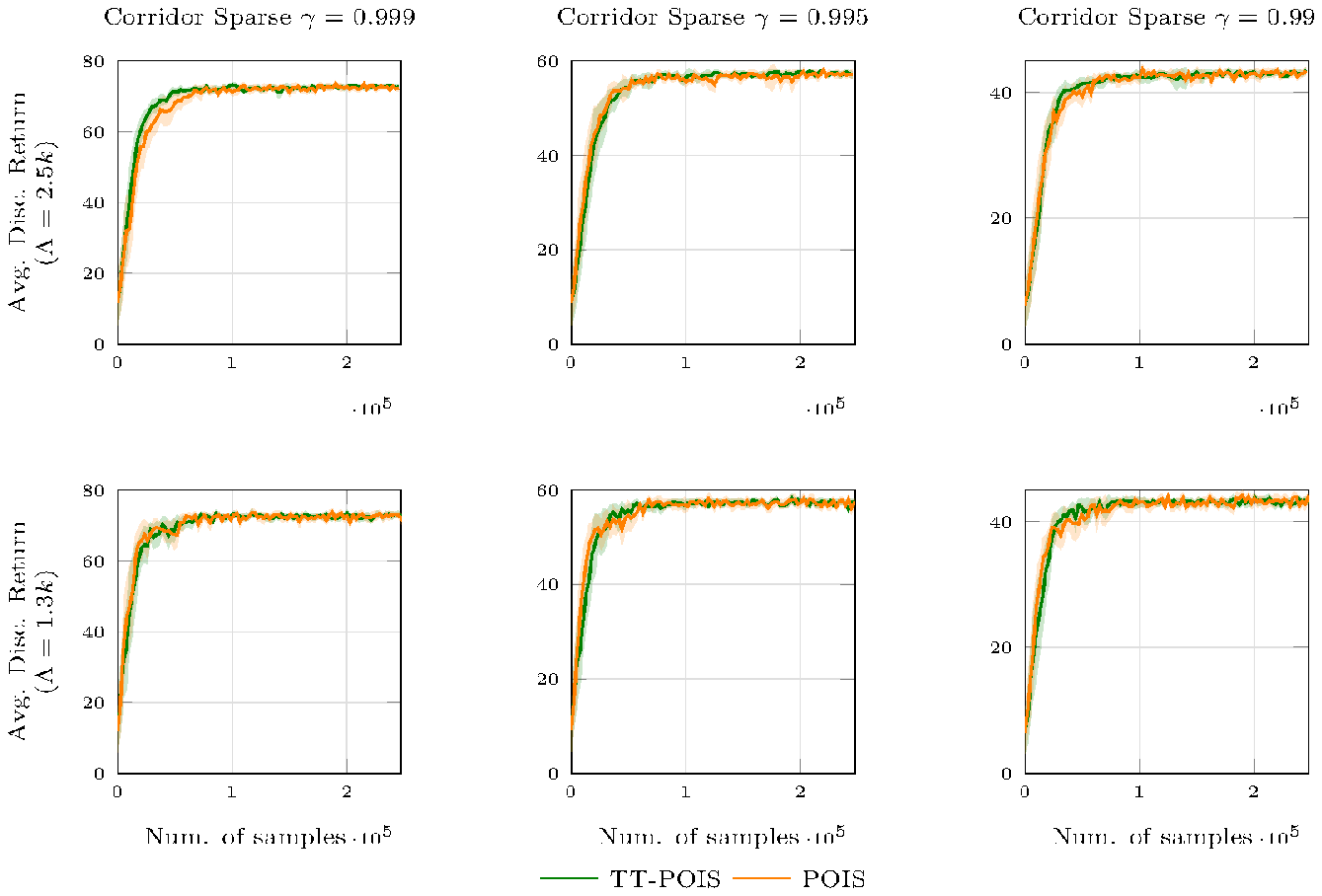} 
\vspace{.3in}
\caption{Experimental results (mean and $95\%$ confidence intervals of $15$ runs) on the Corridor Sparse domain with different values of $\gamma$ and $\Lambda$. More specifically, the first row is obtained with $\Lambda=2500$ and the second one with $\Lambda=1300$. The first column is obtained training the algorithm with $\gamma=0.999$, the second one with $\gamma=0.995$, and the third one with $\gamma=0.99$. The reported metric is the average of the discounted return with the corresponding value of $\gamma$.} 
\label{fig:corrsparsedisc}
\end{figure*}

\begin{figure*}[t!]
\centering\includegraphics[width=\textwidth]{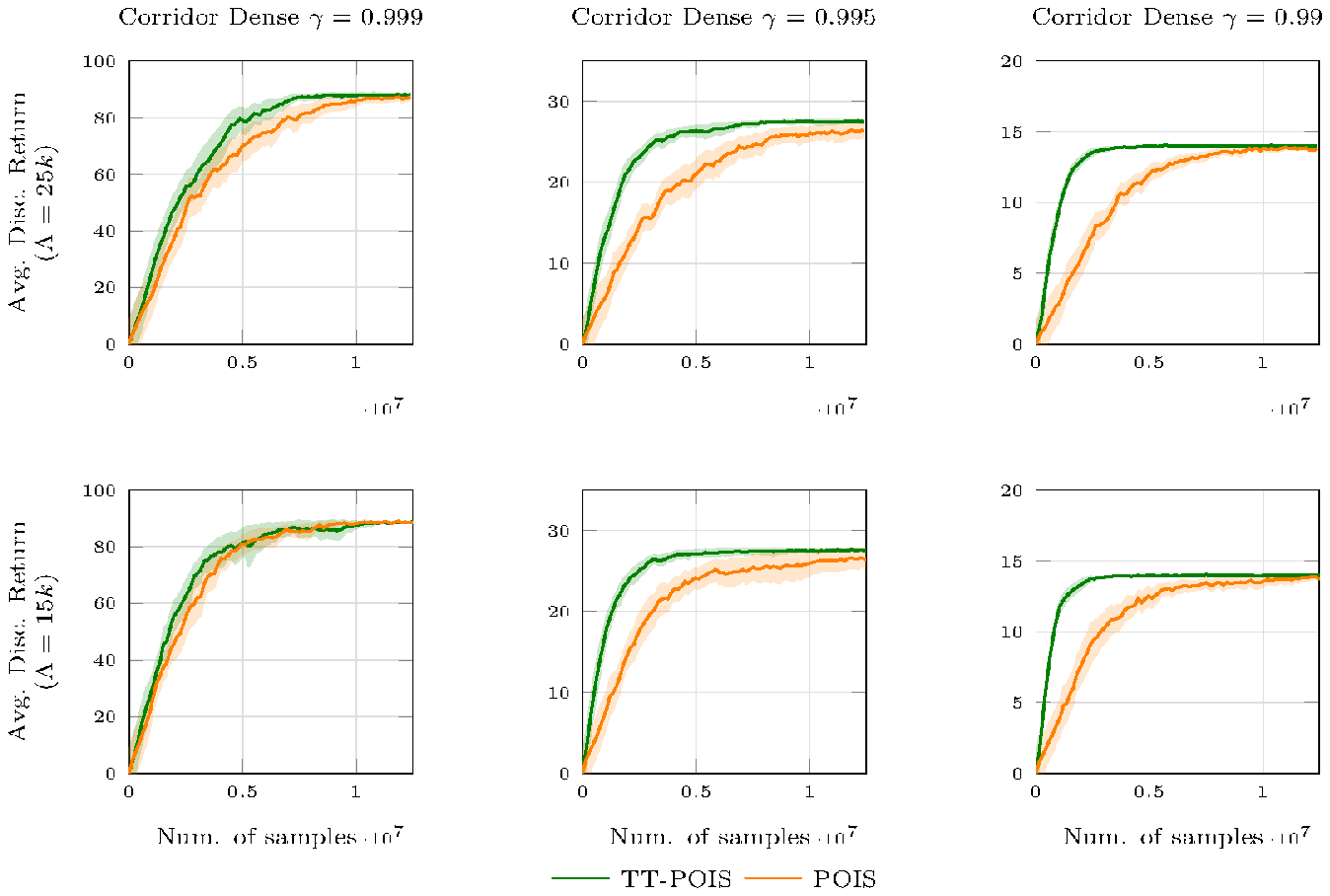} 
\vspace{.3in}
\caption{Experimental results (mean and $95\%$ confidence intervals of $15$ runs) on the Corridor Dense domain with different values of $\gamma$ and $\Lambda$. More specifically, the first row is obtained with $\Lambda=25000$ and the second one with $\Lambda=15000$. The first column is obtained training the algorithm with $\gamma=0.999$, the second one with $\gamma=0.995$, and the third one with $\gamma=0.99$. The reported metric is the average of the discounted return with the corresponding value of $\gamma$.} 
\label{fig:corrdensedisc}
\end{figure*}

\begin{figure*}[t!]
\centering\includegraphics[width=\textwidth]{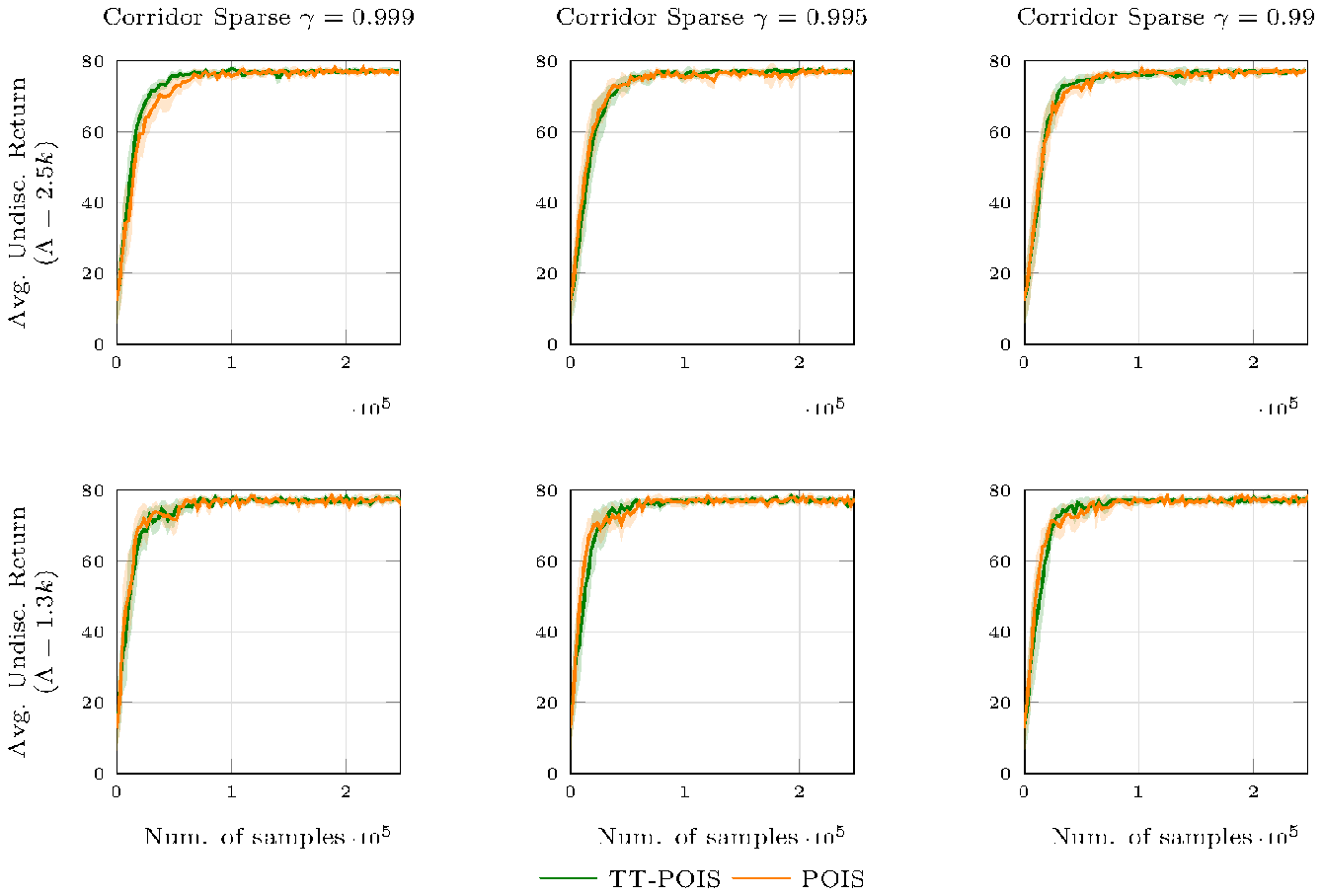} 
\vspace{.3in}
\caption{Experimental results (mean and $95\%$ confidence intervals of $15$ runs) on the Corridor Sparse domain with different values of $\gamma$ and $\Lambda$. More specifically, the first row is obtained with $\Lambda=2500$ and the second one with $\Lambda=1300$. The first column is obtained with $\gamma=0.999$, the second one with $\gamma=0.97$, and the third one with $\gamma=0.99$. The reported metric is the average of undiscounted return (i.e., $\gamma=1$.)} 
\label{fig:corrsparse}
\end{figure*}

\begin{figure*}[t!]
\centering\includegraphics[width=\textwidth]{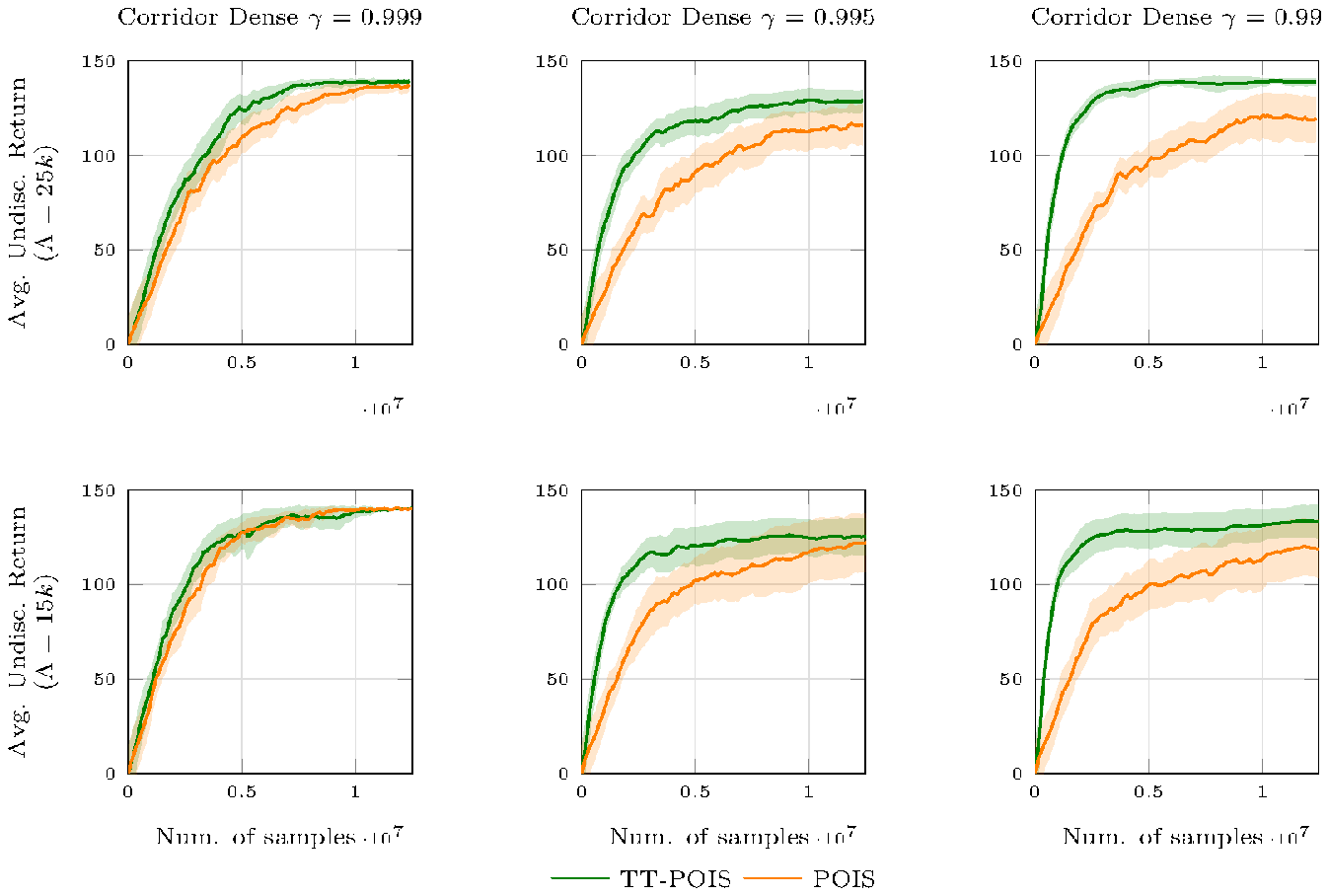} 
\vspace{.3in}
\caption{Experimental results (mean and $95\%$ confidence intervals of $15$ runs) on the Corridor Dense domain with different values of $\gamma$ and $\Lambda$. More specifically, the first row is obtained with $\Lambda=25000$ and the second one with $\Lambda=15000$. The first column is obtained with $\gamma=0.999$, the second one with $\gamma=0.97$, and the third one with $\gamma=0.99$. The reported metric is the average of undiscounted return (i.e., $\gamma=1$.)} 
\label{fig:corrdense}
\end{figure*}

Figure~\ref{fig:corrsparsedisc} and \ref{fig:corrdensedisc} report results on the Corridor domain presented in Section \ref{app:env-details}. More specifically, Figure~\ref{fig:corrsparsedisc} reports the result for the sparse reward setting, while Figure~\ref{fig:corrdensedisc} focuses on the dense reward one. Each plot shows the mean and the $95\%$ confidence intervals over $15$ runs of the average discounted return. To conduct exhaustive experimentation, we varied the values of $\gamma$ and $\Lambda$.

Some observations are in order. First of all, from Figure~\ref{fig:corrsparsedisc}, \ttpois shows a robust behavior even in this sparse reward scenario. Notice that, at the beginning of the learning process, since the performance is close to $0$, the agent rarely reaches the goal (i.e., it sparsely receives positive feedback from the environment). Yet, \ttpois still obtains the same learning curves as \pois. Secondly, from Figure~\ref{fig:corrdensedisc}, we can appreciate a significant benefit of \ttpois over \pois for the dense reward setting, especially with small values of $\gamma$. This is consistent with what have been highlighted in Section~\ref{sec:exp}. 
Finally, Figure~\ref{fig:corrdense} and \ref{fig:corrsparse} report the undiscounted average return as a metric. The previous considerations extend to these results as well.



\subsection{Additional Optimization Results: varying $\Lambda$ and $\gamma$}\label{app:additional-opt-results}

\begin{figure*}[t!]
\centering\includegraphics[width=\textwidth]{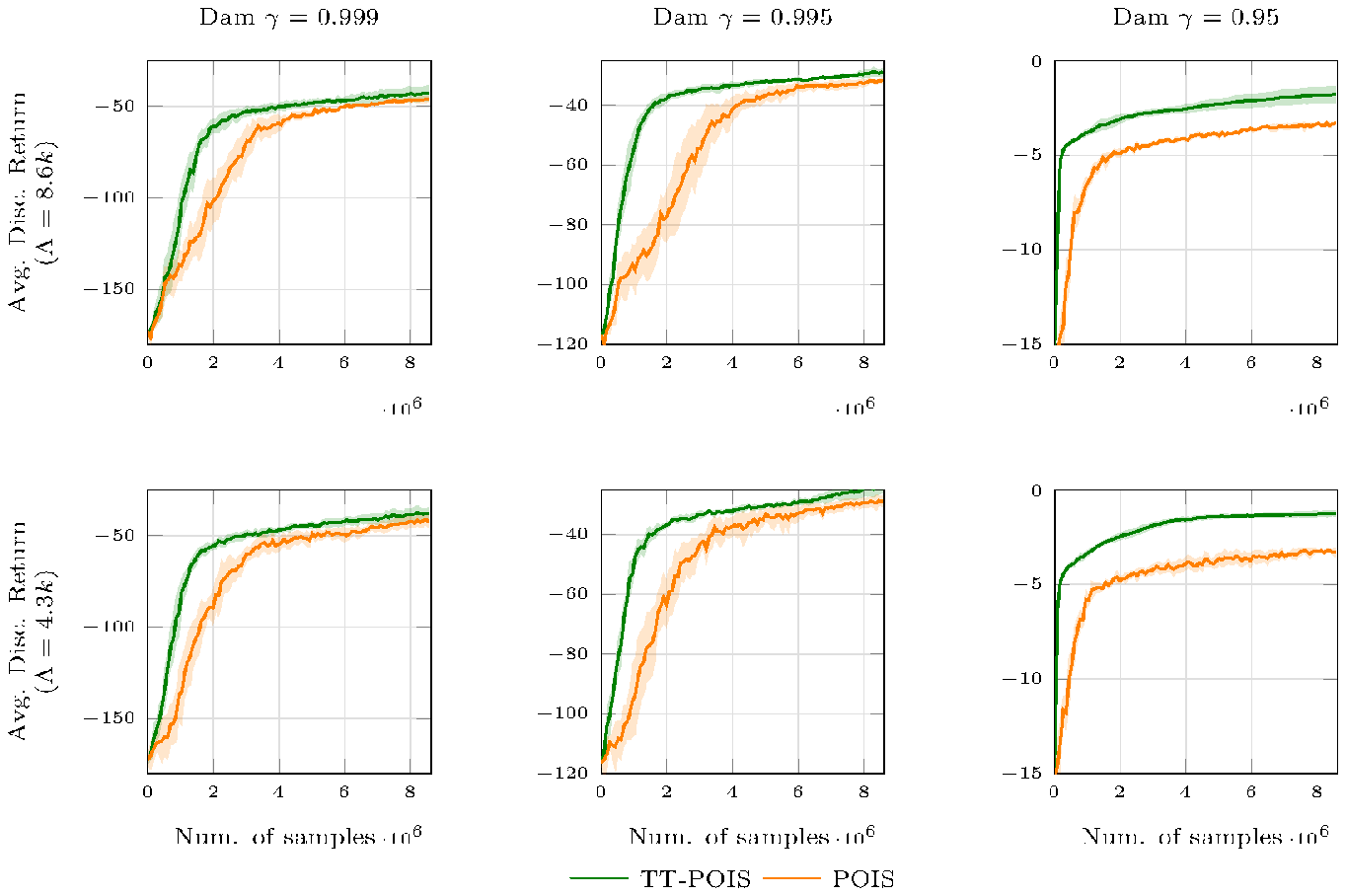} 
\vspace{.3in}
\caption{Experimental results (mean and $95\%$ confidence intervals of $5$ runs) on the Dam domain with different values of $\gamma$ and $\Lambda$. More specifically, the first row is obtained with $\Lambda=8640$ and the second one with $\Lambda=4320$. The first column is obtained training the algorithm with $\gamma=0.999$, the second one with $\gamma=0.995$, and the third one with $\gamma=0.95$. The reported metric is the average of the discounted return with the corresponding value of $\gamma$.} 
\label{fig:damresultsdisc}
\end{figure*}

\begin{figure*}[t!]
\centering\includegraphics[width=\textwidth]{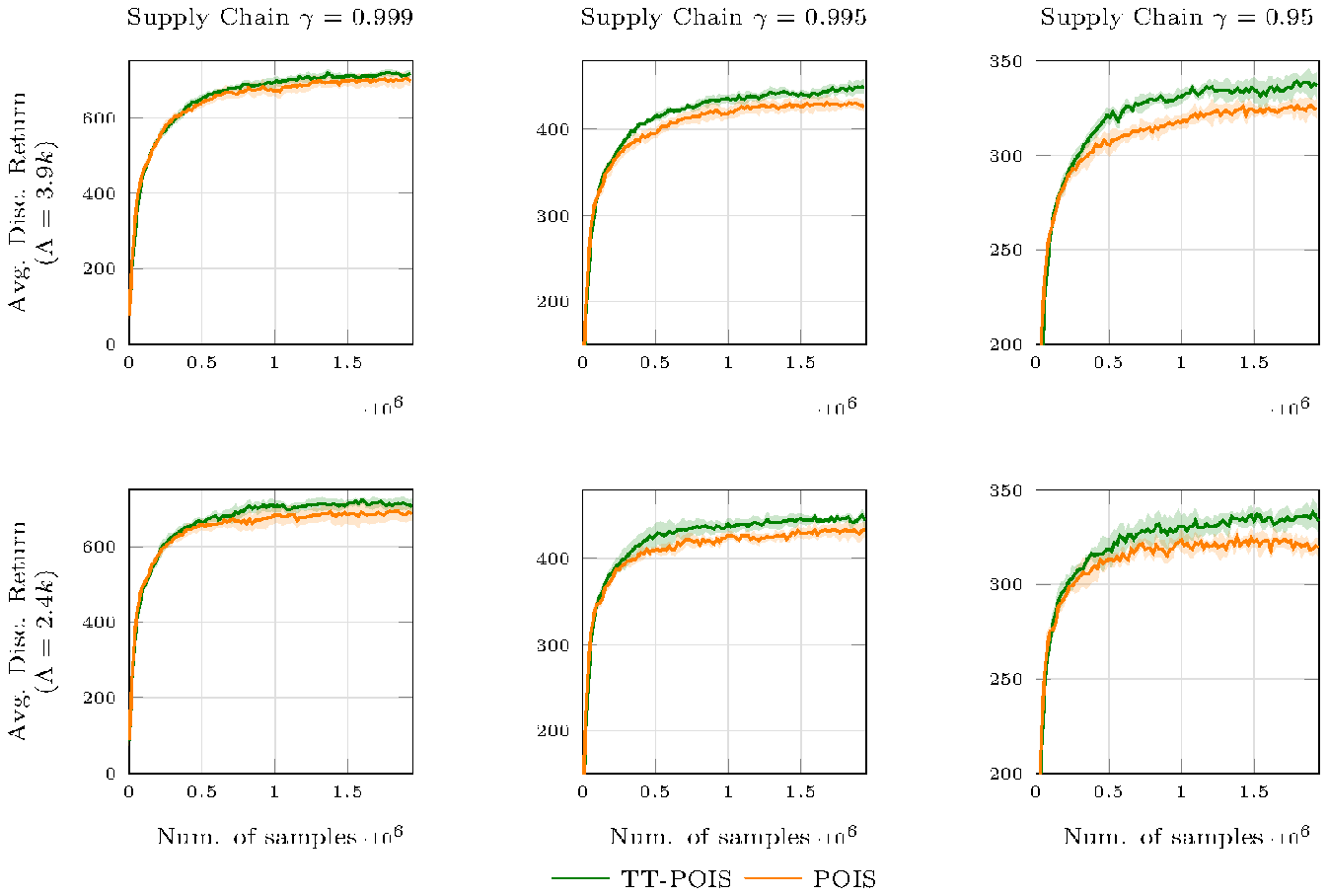} 
\vspace{.3in}
\caption{Experimental results (mean and $95\%$ confidence intervals of $5$ runs) on the Supply Chain domain with different values of $\gamma$ and $\Lambda$. More specifically, the first row is obtained with $\Lambda=3900$ and the second one with $\Lambda=2400$. The first column is obtained training the algorithm with $\gamma=0.999$, the second one with $\gamma=0.97$, and the third one with $\gamma=0.95$. The reported metric is the average of the discounted return with the corresponding value of $\gamma$.} 
\label{fig:imresultsdisc}
\end{figure*}

\begin{figure*}[t!]
\centering\includegraphics[width=\textwidth]{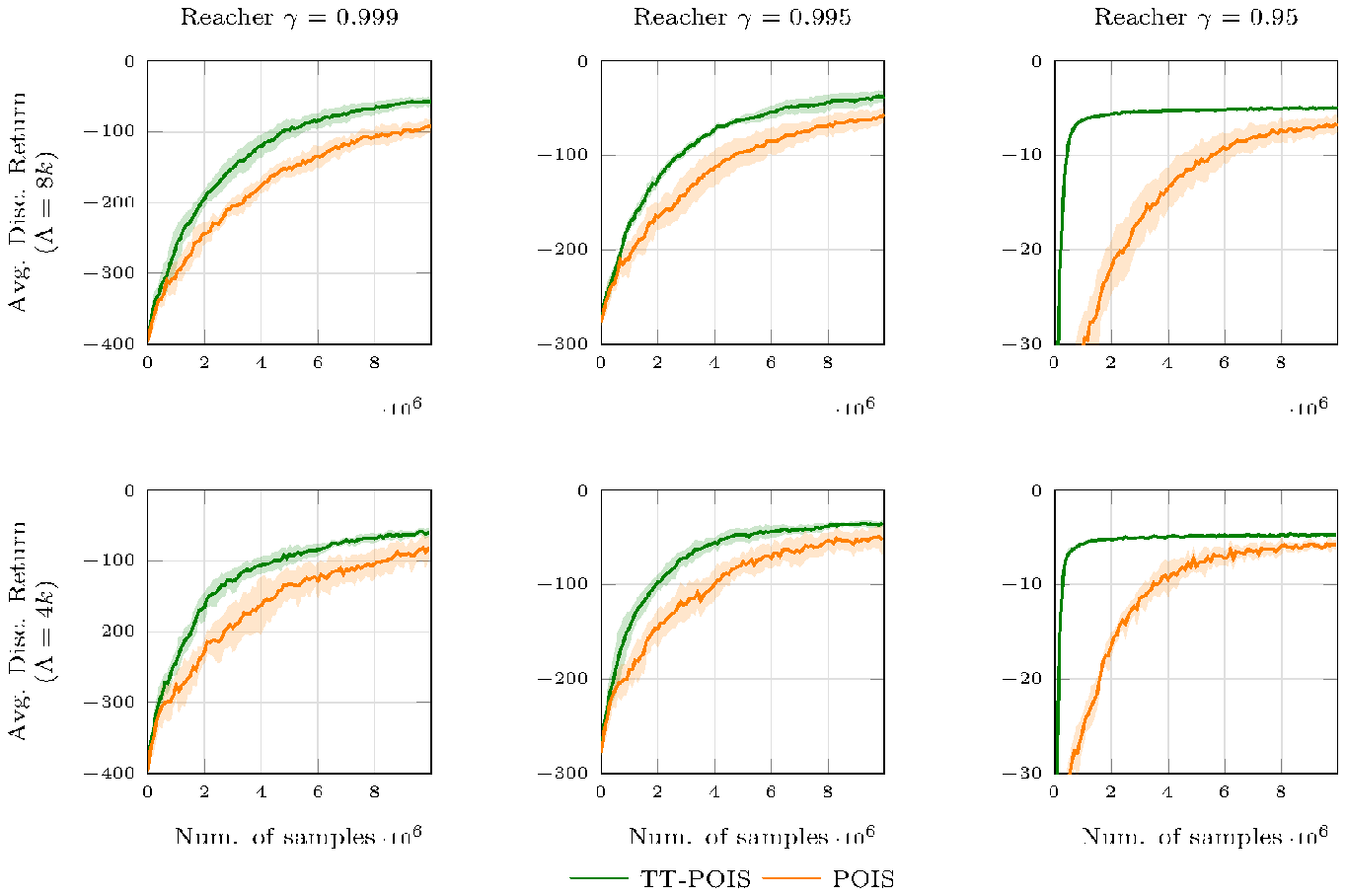} 
\vspace{.3in}
\caption{Experimental results (mean and $95\%$ confidence intervals of $5$ runs) on the Reacher domain with different values of $\gamma$ and $\Lambda$. More specifically, the first row is obtained with $\Lambda=8000$ and the second one with $\Lambda=4000$. The first column is obtained training the algorithm with $\gamma=0.999$, the second one with $\gamma=0.995$, and the third one with $\gamma=0.95$. The reported metric is the average of the discounted return with the corresponding value of $\gamma$.} 
\label{fig:reachresultsdisc}
\end{figure*}

In this Section, we provide additional experimental optimization results. More specifically, Figures \ref{fig:damresultsdisc}, \ref{fig:imresultsdisc} and \ref{fig:reachresultsdisc} reports results for the Dam, Supply Chain and Reacher environments respectively (average return over $5$ run with $95\%$ confidence intervals), while varying the value of $\Lambda$ and $\gamma$. For the Dam domain we test the following combinations of $\Lambda$ and $\gamma$: $\gamma \in \{0.95, 0.995, 0.999 \}$ and $\Lambda \in \{4320, 8640 \}$. For the Reacher domain, instead, we test $\gamma \in \{0.95, 0.995, 0.999 \}$ and $\Lambda \in \{4000, 8000\}$. Finally, for the Supply Chain, we report $\gamma \in \{0.95, 0.97, 0.999 \}$ and $\Lambda \in \{2400, 3900\}$.

As one can see, what has been highlighted in Section~\ref{sec:exp} replicates consistently.

\clearpage

\subsection{Additional Optimization Results: undiscounted performance}\label{app:undiscounted-res}

\begin{figure*}[t!]
\centering\includegraphics[width=\textwidth]{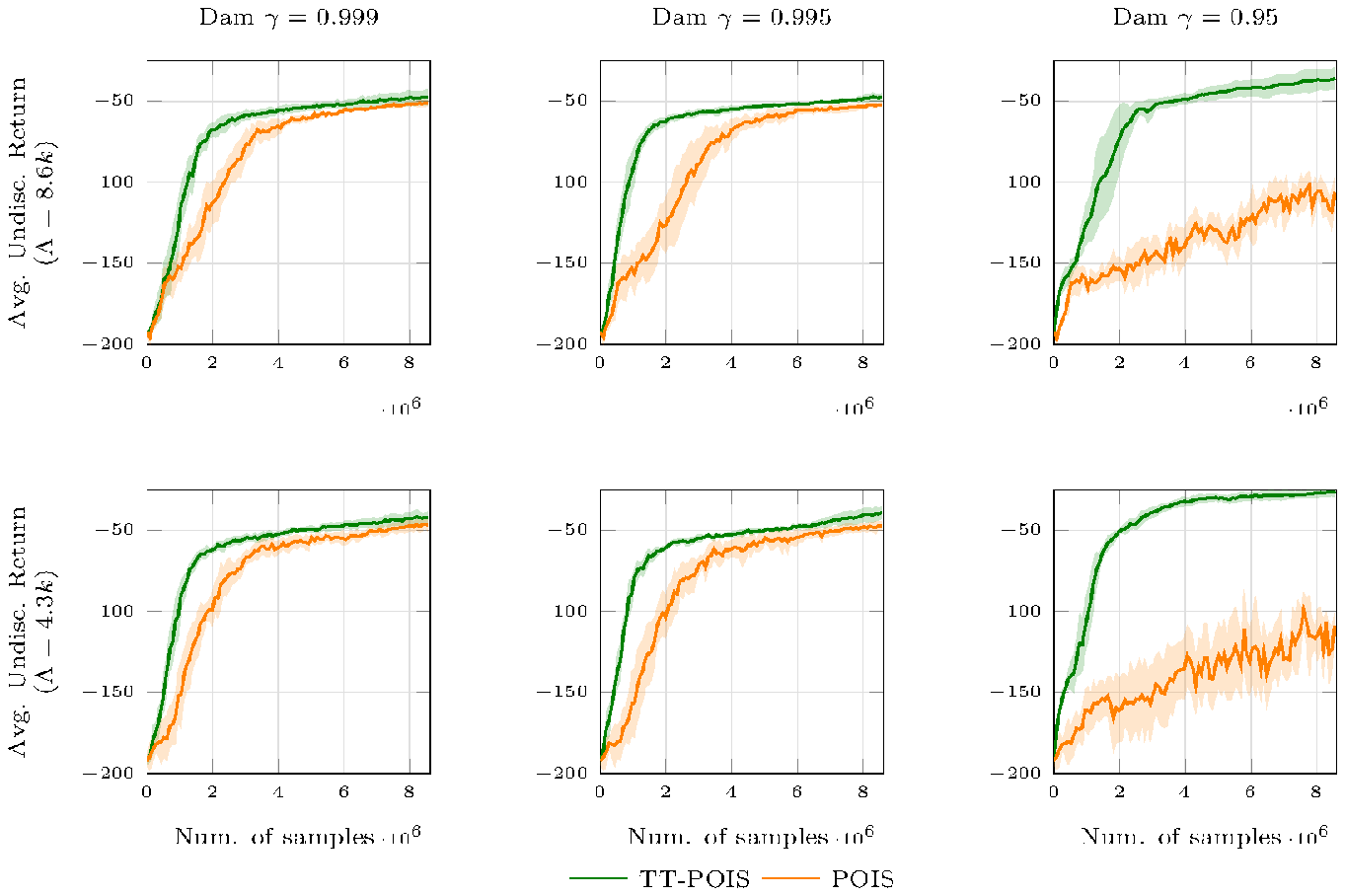} 
\vspace{.3in}
\caption{Experimental results (mean and $95\%$ confidence intervals of $5$ runs) on the Dam domain with different values of $\gamma$ and $\Lambda$. More specifically, the first row is obtained with $\Lambda=8640$ and the second one with $\Lambda=4320$. The first column is obtained with $\gamma=0.999$, the second one with $\gamma=0.995$, and the third one with $\gamma=0.95$. The reported metric is the average of undiscounted return (i.e., $\gamma=1$.)} 
\label{fig:damresults}
\end{figure*}

\begin{figure*}[t!]
\centering\includegraphics[width=\textwidth]{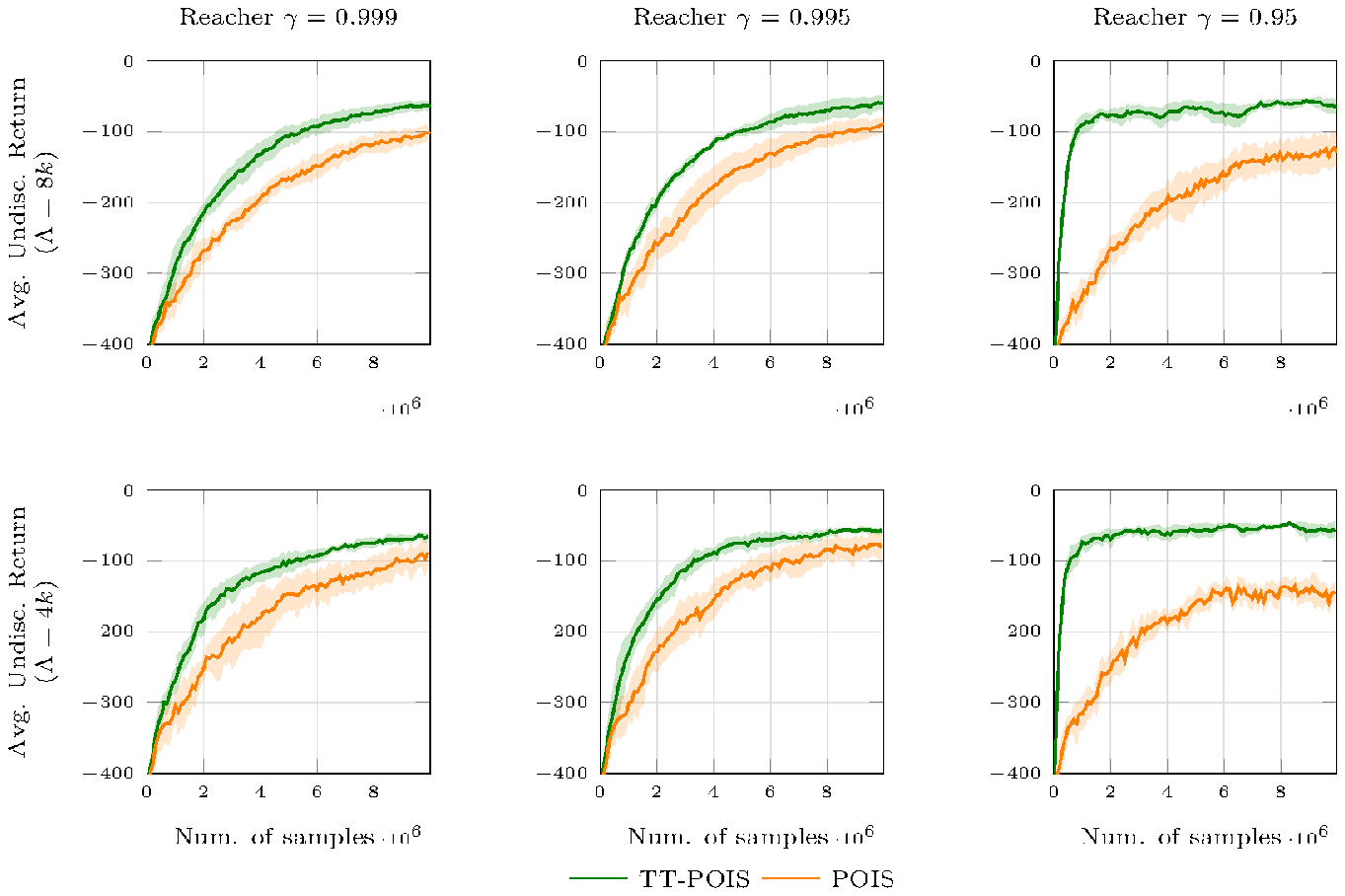} 
\vspace{.3in}
\caption{Experimental results (mean and $95\%$ confidence intervals of $5$ runs) on the Reacher domain with different values of $\gamma$ and $\Lambda$. More specifically, the first row is obtained with $\Lambda=8000$ and the second one with $\Lambda=4000$. The first column is obtained with $\gamma=0.999$, the second one with $\gamma=0.995$, and the third one with $\gamma=0.95$. The reported metric is the average of undiscounted return (i.e., $\gamma=1$.)} 
\label{fig:reachresults}
\end{figure*}

\begin{figure*}[t!]
\centering\includegraphics[width=\textwidth]{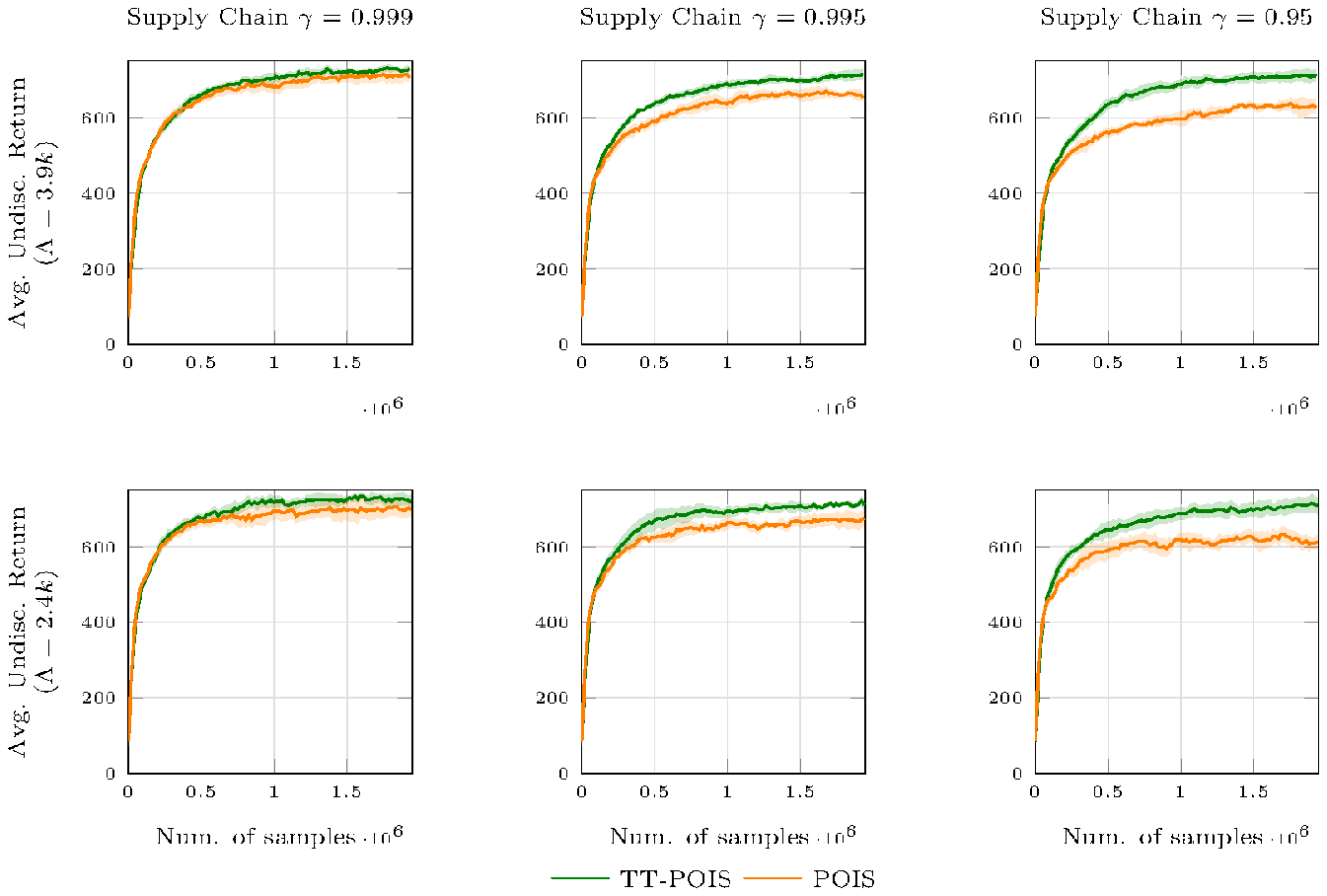} 
\vspace{.3in}
\caption{Experimental results (mean and $95\%$ confidence intervals of $5$ runs) on the Supply Chain domain with different values of $\gamma$ and $\Lambda$. More specifically, the first row is obtained with $\Lambda=3900$ and the second one with $\Lambda=2400$. The first column is obtained with $\gamma=0.999$, the second one with $\gamma=0.97$, and the third one with $\gamma=0.95$. The reported metric is the average of undiscounted return (i.e., $\gamma=1$.)} 
\label{fig:imresults}
\end{figure*}

In this Section, we report the undiscounted average return for the experiments of Section \ref{app:additional-opt-results}. More specifically, Figures~\ref{fig:damresults}, \ref{fig:reachresults} and \ref{fig:imresults} reports results for the Dam, Reacher and Supply Chain environments respectively (average undiscounted return over $5$ run with $95\%$ confidence intervals). 

As we can appreciate, in these scenarios, the advantages of \ttpois over \pois replicates even for the undiscounted return metric. 

\clearpage

\subsection{Hyper-parameters and other details}\label{app:hyper-param}
We have run the experiments using $88$ Intel(R) Xeon(R) CPU E7-8880 v4 @ 2.20GHz cpus and $94$ GB of RAM. 

We now provide details on the hyper-parameters that were used to generate the results. Table \ref{tab:corr-sparse}, \ref{tab:corr-dense}, \ref{tab:dam-pois}, \ref{tab:im-pois} and \ref{tab:reacher-pois} provides hyper-parameters for \pois and \ttpois on the different domains. For all values of $\gamma$, we used the same hyper-parameters. The hyper-parameters on the number of offline iterations refers to Line $4$ of Algorithm \ref{alg:ttpo}.

\begin{table}[h]
\caption{Corridor Sparse Rewards Hyper-parameters for \pois and \ttpois} \label{tab:corr-sparse}
\begin{center}
\begin{tabular}{lll}
\textbf{Hyper-parameter}  & $\bm{\Lambda=2500}$ & $\bm{\Lambda=1300}$ \\
\hline \\
Neural Network Size         					& $[64,32]$ & $[64,32]$ \\
Weight Initialization             			& Normc & Normc \\
Activation Function 							& Xavier & Xavier \\
Confidence $\delta$ 							& $0.9$ & $0.9$ \\
Number of offline iterations 				& $10$ & $10$ \\
Importance Weight Clipping 					& $100$ & $100$ \\
$R_{\textrm{MIN-MAX}}$ 						& Not applied & Not applied\\
\end{tabular}
\end{center}
\end{table}

\begin{table}[h]
\caption{Corridor Dense Rewards Hyper-parameters for \pois and \ttpois} \label{tab:corr-dense}
\begin{center}
\begin{tabular}{lll}
\textbf{Hyper-parameter}  & $\bm{\Lambda=25000}$ & $\bm{\Lambda=15000}$ \\
\hline \\
Neural Network Size         					& $[64,32]$ & $[64,32]$ \\
Weight Initialization             			& Normc & Normc \\
Activation Function 							& Xaiver & Xavier \\
Confidence $\delta$ 							& $0.7$ & $0.7$ \\
Number of offline iterations 				& $10$ & $10$ \\
Importance Weight Clipping 					& $100$ & $100$ \\
$R_{\textrm{MIN-MAX}}$ 						& Not applied & Not applied\\
\end{tabular}
\end{center}
\end{table}

\begin{table}[h]
\caption{Dam Hyper-parameters for \pois and \ttpois} \label{tab:dam-pois}
\begin{center}
\begin{tabular}{lll}
\textbf{Hyper-parameter}  & $\bm{\Lambda=8600}$ & $\bm{\Lambda=4320}$ \\
\hline \\
Neural Network Size         					& $[64,32]$ & $[64,32]$ \\
Weight Initialization             			& Normc & Normc \\
Activation Function 							& Tanh & Tanh \\
Confidence $\delta$ 							& $0.7$ & $0.6$ \\
Number of offline iterations 				& $10$ & $10$ \\
Importance Weight Clipping 					& Not applied & Not applied \\
$R_{\textrm{MIN-MAX}}$ 						& Not applied & Not applied\\
\end{tabular}
\end{center}
\end{table}

\begin{table}[h]
\caption{Supply Chain Hyper-parameters for \pois and \ttpois} \label{tab:im-pois}
\begin{center}
\begin{tabular}{lll}
\textbf{Hyper-parameter}  & $\bm{\Lambda=3900}$ & $\bm{\Lambda=2400}$ \\
\hline \\
Neural Network Size         					& $[100,50,25]$ & $[100,50,25]$ \\
Weight Initialization             			& Normc & Normc \\
Activation Function 							& Tanh & Tanh \\
Confidence $\delta$ 							& $0.005$ & $0.005$ \\
Number of offline iterations 				& $20$ & $20$ \\
Importance Weight Clipping 					& $100$ & $100$ \\
$R_{\textrm{MIN-MAX}}$ 						& Not applied & Not applied\\
\end{tabular}
\end{center}
\end{table}

\begin{table}[h]
\caption{Reacher Hyper-parameters for \pois and \ttpois} \label{tab:reacher-pois}
\begin{center}
\begin{tabular}{lll}
\textbf{Hyper-parameter} 					 & $\bm{\Lambda=8000}$ & $\bm{\Lambda=4000}$ \\
\hline \\
Neural Network Size         					& $[100,50,25]$ & $[100,50,25]$ \\
Weight Initialization             			& Normc & Normc \\
Activation Function 							& Tanh & Tanh \\
Confidence $\delta$ 							& $0.8$ & $0.8$ \\
Number of offline iterations 				& $20$ & $20$ \\
Importance Weight Clipping 					& Not applied & Not applied \\
$R_{\textrm{MIN-MAX}}$ 						& $5$ & $5$ \\
\end{tabular}
\end{center}
\end{table}

\end{document}